\newcommand{\ie}{\emph{i.e.},~}
\newcommand{\eg}{\emph{e.g.},~}
\newcommand{\nn}{\nonumber}
\newcommand{\be}{\begin{equation}}
\newcommand{\ee}{\end{equation}}
\newcommand{\bea}{\begin{eqnarray}}
\newcommand{\eea}{\end{eqnarray}}
\newcommand{\given}{\,|\,}
\newcommand{\x}{\mathbf{x}}
\newcommand{\z}{\mathbf{z}}
\def\X{{\mathcal X}}
\newcommand{\E}{\ensuremath{\mathbb{E}}}
\def\Unif{\mbox{Unif}}
\def\reals{\mathbb{R}}
\def\naturals{\mathbb{N}}
\def\N{{\mathcal N}}
\def\B{{\mathcal B}}
\def\P{{\mathbb P}}
\def\calE{{\mathcal E}}
\def\Permute{\textsc{Permute}}
\newcommand{\tv}[2]{\|{#1} - {#2}\|_{\text{TV}}}
\def\Deltamax{\cal{E}_{\text{max}}}
\newcommand{\iid}[1]{\stackrel{\text{iid}}{#1}}
\newcommand{\T}{\mathsf{T}}
\DeclareMathOperator{\MGF}{M}
\newcommand\indep{\protect\mathpalette{\protect\independenT}{\perp}}
\def\independenT#1#2{\mathrel{\rlap{$#1#2$}\mkern4mu{#1#2}}}
\DeclareMathOperator{\MB}{MB}
\DeclareMathOperator{\KL}{KL}
\DeclareMathOperator*{\argmin}{arg\,min}
\DeclareMathOperator*{\argmax}{arg\,max}
\DeclarePairedDelimiter{\ceil}{\lceil}{\rceil}
\DeclarePairedDelimiter{\floor}{\lfloor}{\rfloor}
\title{Patterns of Scalable Bayesian Inference}
\author{
Elaine Angelino\thanks{Authors contributed equally} \\
UC Berkeley \\
elaine@eecs.berkeley.edu
\and
Matthew James Johnson\footnotemark[1] \\
Harvard University \\
mattjj@csail.mit.edu
\and
Ryan P. Adams \\
Harvard University and Twitter\\
rpa@seas.harvard.edu
}
\begin{document}

\frontmatter  %

\renewcommand*{\thefootnote}{\fnsymbol{footnote}}
\maketitle
\renewcommand*{\thefootnote}{\arabic{footnote}}

\setcounter{tocdepth}{3}
\tableofcontents

\mainmatter

\begin{abstract}
Datasets are growing not just in size but in complexity, creating a demand for
rich models and quantification of uncertainty. Bayesian methods are an
excellent fit for this demand, but scaling Bayesian inference is a challenge.
In response to this challenge, there has been considerable recent work based on
varying assumptions about model structure, underlying computational resources,
and the importance of asymptotic correctness. As a result, there is a zoo of
ideas with few clear overarching principles.

In this paper, we seek to identify unifying principles, patterns, and
intuitions for scaling Bayesian inference. We review existing work on utilizing
modern computing resources with both MCMC and variational
approximation techniques. From this taxonomy of ideas, we characterize the
general principles that have proven successful for designing scalable inference
procedures and comment on the path forward.
\end{abstract}

\chapter{Introduction}

We have entered a new era of scientific discovery, in which computational
insights are being integrated with large-scale statistical data analysis to
enable researchers to ask both grander and more subtle questions about our
natural world. This viewpoint asserts that we need not be
limited to the narrow hypotheses that can be framed by traditional small-scale
analysis techniques. Supporting new kinds of data-driven queries, however, requires that new
methods be developed for statistical inference that can \emph{scale up} along
multiple axes --- more samples, more dimensions, and greater model complexity
--- as well as \emph{scale out} by taking advantage of modern parallel
compute environments.

There are a variety of methodological frameworks for performing statistical
inference, e.g., performing estimation and evaluating hypotheses; here we are
concerned with the Bayesian formalism.  In the Bayesian setting, queries about
structure in data are framed as interrogations of the posterior distribution
over parameters, missing data, and other unknowns; these unobserved quantities
are treated as random variables.  By conditioning on the data, the Bayesian
hopes to not only perform point estimation, but also to understand the
uncertainties associated with those estimates.

Accounting for uncertainty is central to Bayesian analysis, and so the
computations associated with most common tasks -- e.g., estimation, prediction,
evaluation of hypotheses -- are typically integrations.  In some situations, it
is possible to perform such integrations exactly, either by taking advantage of
conjugate structure in the prior-likelihood pair, or by using dynamic
programming when the dependencies between random variables are appropriately simple.  Unfortunately, most real-world analysis problems are not
amenable to these exact inference procedures and so most of the interest in
Bayesian computation focuses on better methods of approximate inference.

There are two dominant paradigms for approximate inference in Bayesian
models: Monte Carlo sampling methods and variational approximations.  The Monte
Carlo approach observes that integrations performed to query posterior
distributions can be framed as expectations, and thus estimated with samples;
such samples are most often generated via simulation from carefully designed
Markov chains.  Variational inference seeks to compute these integrals by
approximating the posterior distribution with a more tractable alternative,
where identification of the best approximation can then be performed using powerful
optimization techniques.

In this paper, we examine how these techniques can be scaled up to larger
problems and scaled out across parallel computational resources.  This is
not intended to be an exhaustive survey of a rapidly-evolving area of research;
rather, we seek to identify the main ideas and themes that are emerging in this
area, and articulate what we believe are some of the significant open questions
and challenges.

\section{Why be Bayesian with big data?}

The Bayesian paradigm is fundamentally about integration: integration computes
posterior estimates and measures of uncertainty, eliminates nuisance variables
or missing data, and averages models to compute predictions or perform model
comparison.
While some statistical methods, such as MAP estimation, can be described from a
Bayesian perspective, in which case the prior might serves as a regularizer in
an optimization problem, such methods are not inherently or exclusively
Bayesian.
Posterior integration is the distinguishing characteristic of Bayesian
statistics, and so a defense of Bayesian ideas in the big data regime
rests on the utility of integration.

But from a classical perspective, the big data setting might seem to be
precisely where integration isn't important: as the dataset grows, shouldn't
the posterior distribution concentrate towards a point mass?
If big data means we end up making predictions with such concentrated posteriors,
why not focus on point estimation and avoid the specification of priors and the
burden of approximate integration?

These objections certainly apply to settings where the number of parameters is
small and fixed (``tall data'').
However, many models of interest have many parameters (``wide data''), or
indeed have a number of parameters that grows along with the amount of data.

For example, an Internet company making inferences about its users' viewing and
buying habits may have terabytes of data in total but only a few observations
for its newest customers, the ones most important to impress with personalized
recommendations.  Moreover, it may wish to adapt its model in an online way as
data arrive, a task that benefits from calibrated posterior uncertainties
\citep{stern2009matchbox}.
As another example, consider a healthcare company.  As its dataset grows,
it might hope to make more detailed and complex inferences about populations while
also making careful predictions with calibrated uncertainty for each patient,
even in the presence of massive missing data \citep{lawrence2015missing}.
These scaling issues also arise in astronomy, where hundreds of billions of
light sources, such as stars, galaxies, and quasars, each have latent variables
that must be estimated from very weak observations, and are coupled in a large
hierarchical model \citep{regier2015celeste}.
In Microsoft Bing's sponsored search advertising, predictive probabilities
inform the pricing in the keyword auction mechanism.  This problem nevertheless
must be solved at scale, with tens of millions of impressions per hour
\citep{graepel2010web}.

These are the regimes where big data can be small \citep{lawrence2015missing}
and the number and complexity of statistical hypotheses grows with the data.
The Bayesian methods we survey in this paper may provide solutions to these
challenges.

\section{The fidelity of approximate integration}

Bayesian inference may be important in some modern big data regimes, but exact
integration in general is computationally out of reach.
While decades of research in Bayesian inference in both statistics and machine
learning have produced many powerful approximate inference algorithms, the big
data setting poses some new challenges.
Iterative algorithms that read the entire dataset before making each update
become prohibitively expensive.
Sequential computation that cannot leverage parallel and distributed computing
resources is at a significant and growing disadvantage.
Insisting on zero asymptotic bias from Monte Carlo estimates of expectations
may leave us swamped in errors from high
variance~\citep{korattikara-2014-austerity} or transient bias.

These challenges, and the tradeoffs that may be necessary to address them, can
be viewed in terms of how accurate the integration in our approximate inference
algorithms must be.
Markov chain Monte Carlo (MCMC) algorithms that admit the exact posterior as
a stationary distribution may be the gold standard for generically estimating
posterior expectations, but if standard MCMC algorithms become intractable in
the big data regime we must find alternatives and understand their tradeoffs.
Indeed, someone using Bayesian methods for machine learning may be less
constrained than a classical Bayesian statistician: if the ultimate goal is to form
predictions that perform well according to a specific loss function,
computational gains at the expense of the internal posterior representation may
be worthwhile.
The methods studied here cover a range of such approximate integration
tradeoffs.

\section{Outline}

The remainder of this review is organized as five chapters.
In Chapter~\ref{ch:background}, we provide relevant background material on
exponential families, MCMC inference, mean field variational inference,
and stochastic gradient optimization.
The next three chapters survey recent algorithmic ideas for scaling Bayesian inference,
highlighting theoretical results where possible.
Each of these central technical chapters ends with a summary and discussion,
identifying emergent themes and patterns as well as open questions.
Chapters~\ref{sec:mcmc-subsets} and~\ref{sec:mcmc-parallel} focus on
MCMC algorithms, which are inherently serial and often slow to converge;
the algorithms in the first of these use various forms of data subsampling
to scale up serial MCMC and in the second use a diverse array of strategies
to scale out on parallel resources.
In Chapter~\ref{ch:mean-field} we discuss two recent techniques for scaling
variational mean field algorithms.  Both process data in minibatches:
the first applies stochastic gradient optimization methods and
the second is based on incremental posterior updating.
Finally, in Chapter~\ref{ch:discussion} we provide an overarching discussion of
the ideas we survey, focusing on challenges and open questions in
large-scale Bayesian inference.

\chapter{Background}
\label{ch:background}

In this chapter we summarize background material on which the ideas in
subsequent chapters are based.
This chapter also serves to fix some common notation.
Throughout the chapter, we avoid measure-theoretic definitions and instead
assume that any density exists with respect either to Lebesgue measure or
counting measure, depending on its context.

First, we cover some relevant aspects of exponential families.
Second, we cover the foundations of Markov chain Monte Carlo (MCMC) algorithms,
which are the workhorses of Bayesian statistics and are common in Bayesian
machine learning.
Indeed, the algorithms discussed in Chapters 3 and 4 are either MCMC algorithms
or aim to approximate MCMC algorithms.
Next, we describe the basics of mean field variational inference and stochastic
gradient optimziation, both of which are used extensively in Chapter 5.
Finally, we close the chapter with notes on computational architectures and
useful notions for measuring performance.

\section{Exponential families}
Exponential families of densities play a key role in Bayesian analysis and many
practical Bayesian methods.
In particular, likelihoods that are exponential families yield natural conjugate
prior families,  which provide analytical and computational advantages in both
MCMC and variational inference algorithms.
Exponential families are also particularly relevant in the context of large
datasets: in a precise sense, they are the only families of densities which
admit a finite-dimensional sufficient statistic.
Thus only exponential families allow arbitrarily large amounts of data to be
summarized with a fixed-size description.

In this section we give basic definitions, notation, and results concerning
exponential families.
For perspectives from convex analysis see~\citet{wainwright2008graphical}, and
for perspectives from differential geometry see~\citet{amari2007methods}.

Exponential families are defined in terms of densities with respect to some
underlying $\sigma$-finite measure, which we denote~$\nu$.

\begin{definition}[Exponential family]
    We say a parameterized family of densities $\{ p( \, \cdot \, | \theta) :
    \theta \in \Theta \}$ is an
    \emph{exponential family} if each density can be written as
    \begin{equation}
        p(x | \theta) = h(x) \exp \{ \langle \eta(\theta), t(x) \rangle - \log Z(\eta(\theta)) \}
        \label{background:eq:exp_fam}
    \end{equation}
    where $\langle \cdot, \cdot \rangle$ is an inner product on a
    finite-dimensional real vector space.
    We call $\eta(\theta)$ the \emph{natural parameter} vector, $t(x)$ the
    \emph{statistic} vector, $h(\cdot)$ the \emph{base density}, and
    \begin{equation}
        \log Z(\eta) \triangleq \log \int e^{\langle \eta, t(x) \rangle} h(x) \nu(dx)
    \end{equation}
    the \emph{log partition function}.
\end{definition}

We restrict our attention to families for which the support of the density does
not depend on $\theta$.
When $\eta(\theta) = \theta$ we say the family is written in \emph{natural
    parameters} or \emph{natural coordinates}, which we denote by writing $p(x | \eta)$.
We say a family is \emph{regular} if $\Theta$ is open, and \emph{minimal} if
there is no nonzero $a$ such that $\langle a, t(x) \rangle$ is equal to a
constant ($\nu$-a.e.).

The statistic $t$ is \emph{sufficient} in the sense of the Fisher-Neyman
Factorization Theorem \citep[Theorem 3.6]{keener:2010} by construction
\begin{align*}
p(x|\theta) &\propto h(x) \exp \{\langle \eta(\theta), t(x) \},
\end{align*}
and hence $t(x)$ contains all the information about $x$ that is relevant for
the parameter $\theta$.
In the context of Bayesian analysis, in which $\theta$ is a random variable,
this definition of sufficiency is equivalent to the conditional independence
statement ${\theta \indep X \; | \; t(X)}$.
The Koopman-Pitman-Darmois Theorem shows that exponential families are the only
families which provide this powerful summarization property, under some mild
regularity conditions \citep{hipp:1974-sufficient}.

Exponential families have many convenient analytical and computational properties.
In particular, differentiating the log partition function ${\log Z}$ generates cumulants:

\begin{proposition}[Mean mapping and cumulants]
\label{background:prop:gradient_logpartition}
    For a regular exponential family of densities of the form~\eqref{background:eq:exp_fam} with~${ X \sim p(\; \cdot \; | \eta) }$, we have~${ \nabla \log Z: \Theta \to \mathcal{M} }$ and
    \begin{equation}
        \nabla \log Z(\eta) = \E[t(X)]
        \label{background:eq:gradient_expectation}
    \end{equation}
    and writing $\mu \triangleq \E[t(X)]$ we have
    \begin{equation}
        \nabla^2 \log Z(\eta) = \E[t(X) t{(X)}^\T] - \mu \mu^\T.
    \end{equation}
    More generally, the moment generating function of $t(X)$ can be written
    \begin{equation}
        \MGF_{t(X)}(s) \triangleq \E[e^{\langle s , t(X) \rangle}] = e^{\log Z(\eta + s) - \log Z(\eta)}.
    \end{equation}
    and so derivatives of $\log Z$ give \emph{cumulants}
    of $t(X)$, where the first cumulant is the mean and the second and third
    cumulants are the second and third central moments, respectively.
\end{proposition}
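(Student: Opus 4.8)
The plan is to compute derivatives of $\log Z$ directly by differentiating under the integral sign, justifying the interchange of differentiation and integration by the regularity assumption (that $\Theta$ is open, so that the integral $Z(\eta) = \int e^{\langle \eta, t(x) \rangle} h(x)\,\nu(dx)$ is finite and analytic on a neighborhood of each $\eta \in \Theta$; this is the standard fact that the partition function of an exponential family is smooth, indeed real-analytic, on the interior of its natural parameter space). First I would establish the moment generating function identity, since everything else follows from it. Writing $\MGF_{t(X)}(s) = \E[e^{\langle s, t(X)\rangle}]$ and expanding the expectation as an integral against $p(x\given\eta)$, the base density $h(x)$ and the factor $e^{-\log Z(\eta)}$ come along, and the remaining integrand is $e^{\langle \eta + s,\, t(x)\rangle} h(x)$, whose integral is exactly $Z(\eta + s)$ by definition. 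Dividing by $Z(\eta)$ gives $\MGF_{t(X)}(s) = e^{\log Z(\eta + s) - \log Z(\eta)}$, valid for $s$ in a neighborhood of the origin.

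Next I would derive \eqref{background:eq:gradient_expectation} and the Hessian formula either by differentiating this MGF identity at $s = 0$, or equivalently — and perhaps more transparently — by differentiating $\log Z$ directly. For the gradient: $\nabla \log Z(\eta) = \nabla Z(\eta) / Z(\eta)$, and $\nabla Z(\eta) = \int t(x)\, e^{\langle \eta, t(x)\rangle} h(x)\,\nu(dx)$ after passing the gradient inside; dividing by $Z(\eta)$ reconstitutes the density $p(x\given\eta)$, giving $\E[t(X)] = \mu$. That this mean lies in $\mathcal{M}$ is immediate from the definition of $\mathcal{M}$ as the set of realizable mean parameters. For the Hessian, I would differentiate $\nabla \log Z(\eta) = \E_\eta[t(X)]$ once more: the derivative of $\int t(x)\, e^{\langle \eta, t(x)\rangle - \log Z(\eta)} h(x)\,\nu(dx)$ with respect to $\eta$ produces two terms by the product rule — one from differentiating $e^{\langle \eta, t(x)\rangle}$, yielding $\E[t(X)t(X)^\T]$, and one from differentiating $e^{-\log Z(\eta)}$, yielding $-\mu\mu^\T$ — which is exactly the stated covariance expression. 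Finally, the cumulant statement is just the observation that $\log \MGF_{t(X)}(s) = \log Z(\eta + s) - \log Z(\eta)$ is the cumulant generating function of $t(X)$, so its Taylor coefficients at $s = 0$ are the cumulants; matching low-order terms identifies the first cumulant with the mean $\mu$ and the second and third cumulants with the corresponding central moments, recovering the Hessian formula as the second-cumulant case.

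The only real obstacle is the justification of differentiating under the integral sign, i.e., the dominated-convergence argument needed to move $\nabla$ (and $\nabla^2$) inside the integrals defining $Z$ and $\nabla Z$. Since $\Theta$ is open, for each $\eta \in \Theta$ one can choose a small closed ball around $\eta$ contained in $\Theta$, and on that ball the integrands $\lVert t(x)\rVert^k e^{\langle \eta', t(x)\rangle} h(x)$ are dominated uniformly (using convexity of $\langle \cdot, t(x)\rangle$ to bound $e^{\langle \eta', t(x)\rangle}$ by a fixed combination of $e^{\langle \eta_j, t(x)\rangle}$ at the ball's vertices, each of which is $\nu$-integrable since those points lie in $\Theta$); this legitimizes all the interchanges at once. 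I would state this domination lemma, sketch the convexity bound, and then treat the actual derivative computations as the routine algebra they are.
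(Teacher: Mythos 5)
Your proposal is correct and follows essentially the same route as the paper's proof: differentiate under the integral sign for the gradient, and combine exponents to identify $Z(\eta+s)/Z(\eta)$ for the moment generating function. You go somewhat further than the paper by explicitly carrying out the Hessian computation (which the paper states but does not derive in its proof) and by sketching the dominated-convergence justification for interchanging $\nabla$ with the integral, which the paper omits; both additions are sound and welcome but do not change the underlying argument.
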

\begin{proof}
    To show~\ref{background:eq:gradient_expectation}, we write
    \begin{align}
        \nabla_\eta \log Z(\eta)
        &= \nabla_\eta \log \int e^{\langle \eta , t(x) \rangle} h(x) \nu(dx)
        \\
        &= \frac{1}{\int e^{\langle \eta , t(x) \rangle} h(x) \nu(dx)}
        \int t(x) e^{\langle \eta, t(x) \rangle} h(x) \nu(dx)
        \\
        &= \int t(x) p(x|\eta) \nu(dx)
        \\
        &= \E[t(X)].
    \end{align}
    To derive the form of the moment generating function, we write
    \begin{align}
        \E[e^{\langle s , t(X) \rangle}]
        &= \int e^{\langle s, t(x) \rangle} p(x) \nu(dx)
        \\
        &= \int e^{\langle s, t(x) \rangle} e^{\langle \eta, t(x) \rangle - \log Z(\eta)} h(x) \nu(dx)
        \\
        &= e^{\log Z(\eta + s) - \log Z(\eta)}.
    \end{align}
\end{proof}

For members of an exponential family, many quantities can be expressed
generically in terms of the natural parameter, expected statistics under that
parameter, and the log partition function.

\begin{proposition}[Score and Fisher information]
\label{background:prop:expfam_props}
    For a regular exponential family in natural coordinates, with $X \sim
    p(\; \cdot \; | \eta)$ and $\mu(\eta) \triangleq \E[t(X)]$ we have
    \begin{enumerate}[rightmargin=1cm]
        \item When the family is regular, the \emph{score} with respect to the natural parameter is
            \begin{equation}
                \label{background:eq:expfam_gradient}
                v(x,\eta) \triangleq \nabla_\eta \log p(x | \eta) = t(x) - \mu(\eta)
            \end{equation}
        \item When the family is regular, the \emph{Fisher information} with
            respect to the natural parameter is
            \begin{equation}
                \mathcal{I}(\eta) \triangleq \E[v(X,\eta) v{(X,\eta)}^\T] = \nabla^2 \log Z(\eta).
                \label{background:eq:expfam_fisher}
            \end{equation}
    \end{enumerate}
\end{proposition}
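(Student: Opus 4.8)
The plan is to get both parts directly from the definition of the exponential-family density in~\eqref{background:eq:exp_fam} together with Proposition~\ref{background:prop:gradient_logpartition}, which already records $\nabla \log Z(\eta) = \mu(\eta)$ and $\nabla^2 \log Z(\eta) = \E[t(X)t(X)^\T] - \mu\mu^\T$.

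For the first claim, I would start by taking logarithms in~\eqref{background:eq:exp_fam} in natural coordinates, obtaining $\log p(x\,|\,\eta) = \log h(x) + \langle \eta, t(x)\rangle - \log Z(\eta)$. Differentiating with respect to $\eta$, the base-density term drops out since it does not depend on $\eta$, the inner-product term contributes $t(x)$, and the log-partition term contributes $-\nabla_\eta \log Z(\eta)$. Invoking~\eqref{background:eq:gradient_expectation} to identify $\nabla_\eta \log Z(\eta) = \mu(\eta)$ then yields $v(x,\eta) = t(x) - \mu(\eta)$. Regularity is what guarantees $\Theta$ is open, so this gradient is well defined on the interior.

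For the second claim, I would substitute the just-derived expression for the score into the definition $\mathcal{I}(\eta) \triangleq \E[v(X,\eta)v(X,\eta)^\T]$, giving $\E[(t(X) - \mu(\eta))(t(X) - \mu(\eta))^\T]$. Expanding the outer product and using linearity of expectation together with $\E[t(X)] = \mu(\eta)$ collapses the cross terms, leaving $\E[t(X)t(X)^\T] - \mu(\eta)\mu(\eta)^\T$, which is exactly $\nabla^2 \log Z(\eta)$ by the Hessian formula in Proposition~\ref{background:prop:gradient_logpartition}. As an optional consistency remark, I would note that differentiating the score once more gives $\nabla^2_\eta \log p(x\,|\,\eta) = -\nabla^2 \log Z(\eta)$, a deterministic quantity, so the identity $\mathcal{I}(\eta) = -\E[\nabla^2_\eta \log p(X\,|\,\eta)]$ holds as well, reconciling the two standard definitions of Fisher information.

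There is essentially no serious obstacle here; the proof is a short computation. The only point requiring a word of care is the interchange of differentiation and integration used when appealing to Proposition~\ref{background:prop:gradient_logpartition} (and implicitly when differentiating $\log Z$ a second time), which is justified by the regularity of the family and the smoothness of $\log Z$ on the interior of $\Theta$; since that interchange was already invoked in the proof of the preceding proposition, I would simply cite it rather than re-derive the dominated-convergence argument.
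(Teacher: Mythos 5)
Your proof is correct and follows exactly the route the paper intends: the paper's own proof is just the one-line remark that both parts follow from the density form~\eqref{background:eq:exp_fam} and Proposition~\ref{background:prop:gradient_logpartition}, and your computation simply fills in those details. Nothing is missing.
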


\begin{proof}
    Each follows from~\eqref{background:eq:exp_fam} and Proposition~\ref{background:prop:gradient_logpartition}.
\end{proof}

Below, we define a notion of conjugacy for pairs of families of distributions.
Conjugate families are especially useful for Bayesian analysis and algorithms.

\begin{definition}
A parameterized (not necessarily exponential) family of
densities~${\mathcal{F} = \{ p(\cdot | \alpha) : \alpha \in \mathcal{A} \}}$ is
\emph{conjugate} for a likelihood function~${p(x | \cdot)}$ if
for every density~$p(\cdot | \alpha)$ in~$\mathcal{F}$ the posterior distribution
\begin{equation}
p(\theta | \alpha') \propto p(\theta | \alpha) p(x | \theta)
\end{equation}
also belongs to~$\cal{F}$, for some ${\alpha' = \alpha'(x, \alpha)}$ that may
depend on~$x$ and~$\alpha$.
\end{definition}

Conjugate pairs are particularly useful in Bayesian analysis because
if we have a prior family~${p(\theta | \alpha)}$ and we observe data generated
according to a likelihood~${p(x | \theta)}$ then the
posterior~${p(\theta | x, \alpha)}$ is in the same family as the prior.
In the context of Bayesian updating, we call~$\alpha$ the hyperparameter
and~$\alpha'$ the posterior hyperparameter.

Given a regular exponential family likelihood, we can always define a
conjugate prior, as shown in the next proposition.

\begin{proposition}
\label{background:prop:expfam_conj}
    Given a regular exponential family
    \begin{align}
        p_{X | \theta}(x | \theta)
        &= h_X(x) \exp \{ \langle \eta_X(\theta) , t_X(x) \rangle - \log Z_X(\eta_X(\theta))\}
        \\
        &= h_X(x) \exp \{ \langle (\eta_X(\theta), -\log Z_X(\eta(\theta))), \; (t_X(x),1) \rangle \}
    \end{align}
    then if we define the statistic $t_\theta(\theta) \triangleq
    (\eta_X(\theta), -\log Z_X(\eta(\theta)))$ and an exponential family of
    densities with respect to that statistic as
    \begin{align}
        p_{\theta | \alpha}(\theta | \alpha) = h_\theta(\theta) \exp \{ \langle \eta_\theta(\alpha), t_\theta(\theta) \rangle - \log Z_\theta(\eta_\theta(\alpha)) \}
    \end{align}
    then the pair $(p_{\theta | \alpha}, p_{X | \theta})$ is a conjugate
    pair of families with
    \begin{align}
        p(\theta | \alpha) p(x | \theta) \propto h_\theta(\theta) \exp \{
        \langle \eta_\theta(\alpha) + (t_X(x), 1), t_\theta(\theta) \rangle \}
    \end{align}
    and hence we can write the posterior hyperparameter as
    \begin{align}
    \alpha' &= \eta_\theta^{-1}(\eta_\theta(\alpha) + (t_X(x), 1))\,.
    \end{align}
    When the prior family is parameterized with its
    natural parameter, we have $\eta' = \eta + (t_X(x),1)$.
\end{proposition}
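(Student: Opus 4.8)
The statement is essentially a bookkeeping exercise in the algebra of exponential families, so the plan is to verify the claimed proportionality by direct substitution and then read off the posterior hyperparameter. Let me sketch the steps.

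First, I would write out the product of prior and likelihood densities explicitly, using the two given representations. The likelihood, in the augmented form, is
\[
p_{X|\theta}(x|\theta) = h_X(x) \exp\{ \langle (t_X(x),1),\, t_\theta(\theta) \rangle \},
\]
since $t_\theta(\theta) = (\eta_X(\theta), -\log Z_X(\eta_X(\theta)))$ by definition, and the prior is
\[
p_{\theta|\alpha}(\theta|\alpha) = h_\theta(\theta)\exp\{ \langle \eta_\theta(\alpha), t_\theta(\theta)\rangle - \log Z_\theta(\eta_\theta(\alpha))\}.
\]
Multiplying these and absorbing all factors that do not depend on $\theta$ (namely $h_X(x)$ and $e^{-\log Z_\theta(\eta_\theta(\alpha))}$) into the proportionality constant, I get
\[
p(\theta|\alpha)\,p(x|\theta) \propto h_\theta(\theta)\exp\{ \langle \eta_\theta(\alpha) + (t_X(x),1),\, t_\theta(\theta)\rangle\},
\]
which is exactly the claimed expression.

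Second, I would observe that this right-hand side has precisely the functional form of a member of the prior family: it is $h_\theta(\theta)$ times an exponential of an inner product of some natural parameter with $t_\theta(\theta)$, with the natural parameter now equal to $\eta_\theta(\alpha) + (t_X(x),1)$. Normalizing — that is, dividing by $Z_\theta(\eta_\theta(\alpha)+(t_X(x),1))$, which is finite as long as this point lies in the natural parameter space — produces $p_{\theta|\alpha'}(\theta|\alpha')$ for $\alpha'$ defined by $\eta_\theta(\alpha') = \eta_\theta(\alpha) + (t_X(x),1)$, i.e. $\alpha' = \eta_\theta^{-1}(\eta_\theta(\alpha) + (t_X(x),1))$. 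The final sentence about natural parameterization is then immediate: taking $\eta_\theta$ to be the identity gives $\eta' = \eta + (t_X(x),1)$.

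The only genuine subtlety — and the closest thing to an obstacle — is the implicit claim that $\eta_\theta^{-1}$ makes sense and that the updated natural parameter stays in the family's natural parameter space; strictly one should note that this requires $\eta_\theta(\alpha) + (t_X(x),1)$ to lie in the (closure of the) natural parameter domain and, for $\alpha'$ to be well-defined as a hyperparameter, that $\eta_\theta$ be injective on $\mathcal{A}$, which holds when the prior family is minimal. I would mention this as a regularity caveat rather than belabor it, since the proposition is stated at the level of formal manipulation. Everything else is routine algebra.
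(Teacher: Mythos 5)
Your proof is correct and is exactly the routine verification the paper intends; in fact the paper states this proposition without an explicit proof, the key algebraic identity (the augmented inner-product form of the likelihood and the resulting product) being already embedded in the statement itself. Your closing caveat --- that $\eta_\theta$ must be injective for $\alpha'$ to be well defined and that $\eta_\theta(\alpha) + (t_X(x),1)$ must remain in the natural parameter domain so that $Z_\theta$ is finite --- is a worthwhile point that the paper leaves implicit.
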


As a consequence of Proposition~\ref{background:prop:expfam_conj}, if the prior family is written
with natural parameters and we generate data ${\{x_i\}}_{i=1}^n$ according to the model
\begin{align}
    \theta &\sim p_{\theta | \eta}(\; \cdot \; | \eta)
    \\
    x_i | \theta &\iid \sim p_{X | \theta}( \; \cdot \; | \theta)\quad i=1,2,\ldots,n,
\end{align}
where the notation $x_i \iid\sim p(\; \cdot \;)$ denotes that the random
variables $x_i$ are independently and identically distributed,
then $p(\theta | {\{x_i\}}_{i=1}^n, \eta)$ has posterior hyperparameter $\eta' = \eta
+ (\sum_{i=1}^n t(x_i), n)$.
Therefore any tractable computations in the prior, such as simulation or
computing expectations, are shared by the posterior.
Furthermore, for inferences about $\theta$, the entire dataset can be
summarized by the statistic $(\sum_{i=1}^n t(x_i), n)$.

\section{Markov Chain Monte Carlo inference}
\label{sec:mcmc_background}

Markov chain Monte Carlo (MCMC) is a class of algorithms for estimating
expectations with respect to distributions.
These distributions may be intractable, such as most posterior distributions
arising in Bayesian inference.
Given a target distribution, a standard MCMC algorithm proceeds by simulating
an ergodic random walk that admits the target distribution as its stationary
distribution.
As we develop in the following subsections, by collecting samples from the
simulated trajectory and forming Monte Carlo estimates, expectations of many
functions can be approximated to arbitrary accuracy.
Thus MCMC is employed when samples or expectations from a distribution cannot
be obtained directly, as is often the case with complex, high-dimensional
systems arising across disciplines, such as estimating bulk material properties
from molecular dynamics physics simulations or performing inference in
Bayesian probabilistic models.

In this section, we first review the two underlying ideas behind MCMC
algorithms: Monte Carlo methods and Markov chains.
First we define the bias and variance of estimators.
Next, we introduce Monte Carlo estimators based on independent and identically
distributed samples.
We then describe how Monte Carlo estimates can be formed using mutually
dependent samples generated by a Markov chain simulation.
Finally, we introduce two general MCMC algorithms commonly applied to Bayesian
posterior inference, the Metropolis-Hastings and Gibbs sampling algorithms.
Our exposition here mostly follows the standard treatment, such as
in~\citet[Chapter 1]{brooks2011handbook}, \citet{geyer1992practical}, and
\citet{robert2004monte}.

\subsection{Bias and variance of estimators}
Notions of bias and variance are fundamental to understanding and comparing
estimator performance, and much of our discussion of MCMC methods is framed in
these terms.

Consider using a scalar-valued random variable $\hat \theta$ to estimate a
fixed scalar quantity of interest $\theta$.
The bias and variance of the estimator $\hat \theta$ are defined as
\begin{align}
    \text{Bias}[\hat \theta] &= \E[\hat \theta - \theta]
    \\
    \text{Var}[\hat \theta] &= \E[ {(\hat \theta - \E[\hat \theta] )}^2 ].
\end{align}
The mean squared error $\E[{(\hat \theta - \theta)}^2]$ can be decomposed in
terms of the variance and the square of the bias:
\begin{align}
    \E[{(\hat \theta - \theta)}^2] &= \E[{(\hat \theta - \E[\hat \theta] + \E[\hat \theta] - \theta)}^2]
    \\
    &= \E[{(\hat \theta - \E[\hat \theta])}^2] + {(\E[\hat \theta] - \theta)}^2
    \\
    &= \text{Var}[\hat \theta] + \text{Bias}^2[\hat \theta]
\end{align}
This decomposition provides a basic language for evaluating estimators and
thinking about tradeoffs.
Among unbiased estimators, those with lower variance are generally preferrable.
However, when an unbiased estimator has high variance, a biased estimator that
achieves low variance can have a lower overall mean squared error.

As we describe in the following sections, a substantial amount of the study of
Bayesian statistical computation has focused on algorithms that produce
asymptotically unbiased estimates of posterior expectations, in which the
bias due to initialization is transient and is washed out relatively
quickly.
In this setting, the error is typically considered to be dominated by the
variance term, which can be made as small as desired by increasing computation
time without bound.
When computation becomes expensive as in the big data setting, errors under
a realistic computational budget may in fact be dominated by variance, as
observed by~\citet{korattikara-2014-austerity}, or, as we argue in Chapter 6,
transient bias.
Several of the new algorithms we examine in Chapters~\ref{sec:mcmc-subsets}
and~\ref{sec:mcmc-parallel} aim to adjust this tradeoff by
allowing some asymptotic bias while effectively reducing the variance and
transient bias contributions through more efficient computation.

\subsection{Monte Carlo estimates from independent samples}
\label{sec:monte-carlo}

Let $X$ be a random variable with $\E[X] = \mu < \infty$, and let $(X_i : i \in
\naturals)$ be a sequence of i.i.d.\ random variables each with the same distribution
as $X$.
The Strong Law of Large Numbers (LLN) states that the sample average
converges almost surely to the expectation $\mu$ as $n \to \infty$:
\begin{equation}
    \P \left( \lim_{n \to \infty} \frac{1}{n} \sum_{i=1}^n X_i = \mu \right) = 1.
\end{equation}
This convergence immediately suggests the Monte Carlo method: to approximate the
expectation of $X$, which to compute exactly may involve an intractable
integral, one can use i.i.d.\ samples and compute a sample average.
In addition, because for any measurable function $f$ the sequence $(f(X_i) : i
\in \naturals)$ is also a sequence of i.i.d.\ random variables, we can form the
Monte Carlo estimate
\begin{equation}
    \label{background:eq:monte_carlo}
    \E[f(X)] \approx \frac{1}{n} \sum_{i=1}^n f(X_i).
\end{equation}

Monte Carlo estimates of this form are unbiased by construction, and so the
quality of a Monte Carlo estimate can be evaluated in terms of its variance as
a function of the number of samples $n$, which in turn can be understood with
the Central Limit Theorem (CLT), at least in the asymptotic regime.
If $X$ is real-valued and has finite variance $\E[{(X - \mu)}^2] = \sigma^2 <
\infty$, then the CLT states that the deviation~${\frac{1}{n}\sum_{i=1}^n X_i -
\mu}$, rescaled appropriately, converges in distribution and is asymptotically
normal:
\begin{equation}
  \lim_{n \to \infty} \P\left(\frac{1}{\sqrt{n}} \sum_{i=1}^n (X_i - \mu) < \alpha \right) = \P(Z < \alpha)
\end{equation}
where $Z \sim \N(0, \sigma^2)$.
In particular, as $n$ grows, the standard deviation of the sample average
$\frac{1}{n} \sum_{i=1}^n X_i - \mu$ converges to zero at an asymptotic rate proportional to~$\frac{1}{\sqrt{n}}$.
More generally, for any real-valued measurable function $f$, the Monte Carlo
standard error (MCSE) in the estimate~\eqref{background:eq:monte_carlo}
asymptotically scales as $\frac{1}{\sqrt{n}}$ regardless of the dimension of~$X$.

Monte Carlo estimators effectively reduce the problem of computing expectations
to the problem of generating samples.
However, the preceding statements require the samples used in the Monte
Carlo estimate to be independent, and independent samples can be
computationally difficult to generate.
Instead of relying on independent samples, Markov chain Monte Carlo algorithms
compute estimates using mutually dependent samples generated by simulating a
Markov chain.

\subsection{Markov chains}
\label{sec:markov_chains}

Let $\X$ be a discrete or continuous state space and let~$x, x^\prime \in \X$
denote states.
A time-homogeneous Markov chain is a discrete-time stochastic process $(X_t : t
\in \naturals)$ governed by a transition operator~${T(x \rightarrow x')}$ that
specifies the probability density of transitioning to a state $x^\prime$ from a
given state~$x$:
\begin{equation}
    \P(X_{t+1} \in A \given X_t = x) = \int_{A} T(x \rightarrow x^\prime) \, dx^\prime \quad \forall t \in \naturals
\end{equation}
for all measurable sets $A$.
A Markov chain is memoryless in the sense that its future behavior depends only
on the current state and is independent of its past history.

Given an initial density~$\pi_0(x)$ for $X_0$, a Markov chain evolves this
density from one time point to the next through iterative application of
the transition operator.
We write the application of the transition operator to a density $\pi_0$ to yield a new density $\pi_1$ as
\begin{equation}
    \pi_1(x^\prime) = (\pi_0 T)(x^\prime) = \int_{\X} T(x \to x^\prime) \pi_0(x) \; dx.
\end{equation}
Writing $T^t$ to denote $t$ repeated applications of the transition operator~$T$, the density of~$X_t$ induced by~$\pi_0$ and~$T$ is then given by~${\pi_t =
\pi_0 T^t}$.

Markov chain simulation follows this iterative definition by iteratively sampling
the next state using the current state and the transition operator.
That is, after first sampling $X_0$ from $\pi_0(\, \cdot \,)$,
Markov chain simulation proceeds at time step $t$ by sampling $X_{t+1}$
according to the density $T(x_t \rightarrow \, \cdot \,)$ induced by the
fixed sample~$x_t$.

We are interested in Markov chains that converge in total variation to a unique
stationary density $\pi(x)$ in the sense that
\begin{equation}
    \lim_{t \to \infty}  \|   \pi_t - \pi  \| _{\text{TV}} = 0
\end{equation}
for any initial distribution $\pi_0$, where $ \|  \, \cdot \,  \| _{\text{TV}}$
denotes the total variation norm on densities:
\begin{equation}
     \|  p - q  \| _{\text{TV}} = \frac{1}{2} \int_{\X} |p(x) - q(x)| \, dx.
\end{equation}
For a transition operator~$T(x \rightarrow x')$ to admit $\pi(x)$ as a
stationary density, its application must leave~$\pi(x)$ invariant:
\begin{equation}
    \pi = \pi T. %
    \label{eq:stationary}
\end{equation}
For a discussion of general conditions that guarantee a Markov chain converges
to a unique stationary distribution, i.e., that the chain is ergodic,
see~\citet{meyntweedie2009}.

In some cases it is easy to show that a transition operator has a particular
unique stationary distribution.
In particular, it is clear that $\pi$ is the unique stationary distribution when a
transition operator~${T(x \rightarrow x')}$ is \emph{reversible} with respect to~$\pi$, i.e., it satisfies the detailed balance condition with respect to a
density~$\pi(x)$,
\begin{equation}
    T(x \rightarrow x') \pi(x) = T(x' \rightarrow x) \pi(x') \quad \forall x, x' \in \mathcal{X},
    \label{eq:detailed-balance}
\end{equation}
which is a pointwise condition over $\X \times \X$.
Integrating over $x$ on both sides gives:
\begin{align}
\int_\X T(x \rightarrow x') \pi(x) \, dx
&= \int_\X T(x' \rightarrow x) \pi(x') \, dx \nn \\
&= \pi(x') \int_\X T(x' \rightarrow x) \, dx \nn \\
&= \pi(x'), \nn
\end{align}
which is precisely the required condition from~\eqref{eq:stationary}.
We can interpret~\eqref{eq:detailed-balance} as stating that,
for a reversible Markov chain starting from its stationary distribution,
any transition $x \rightarrow x'$ is equilibrated by
the corresponding reverse transition $x' \rightarrow x$.
Many MCMC methods are based on deriving reversible transition operators.

For a thorough introduction to Markov chains, see \citet[Chapter
6]{robert2004monte} and \citet{meyntweedie2009}.

\subsection{Markov chain Monte Carlo (MCMC)}
\label{sec:mcmc}

Markov chain Monte Carlo (MCMC) methods simulate a Markov chain for which the
stationary distribution is equal to a target distribution of interest, and use
the simulated samples to form Monte Carlo estimates of expectations.
That is, consider simulating a Markov chain with unique stationary density
$\pi(x)$, as in Section~\ref{sec:markov_chains}, and collecting its trajectory
into a set of samples ${\{X_i\}}_{i=1}^n$.
These collected samples can be used to form a Monte Carlo estimate for a
function $f$ of a random variable~$X$ with density~$\pi(x)$ via
\begin{equation}
    \label{eq:mcmc_estimate}
    \E[f(X)] = \int_{\mathcal{X}} f(x) \pi(x) \; dx \approx \frac{1}{n} \sum_{i=1}^n f(X_i).
\end{equation}
Even though this Markov chain Monte Carlo estimate is not constructed from
independent samples, it can asymptotically satisfy analogs of the Law of Large
Numbers (LLN) and Central Limit Theorem (CLT) that were used to justify
ordinary Monte Carlo methods in Section~\ref{sec:monte-carlo}.
We sketch these important results here.

The MCMC analog of the LLN states that
\begin{equation}
    \lim_{n \to \infty} \frac{1}{n} \sum_{i=1}^n f(X_i) = \int_{\mathcal{X}} f(x) \, \pi(x) \; dx \quad \text{(a.s.)}
\end{equation}
for all functions $f$ that are absolutely integrable with respect to $\pi$,
i.e.\ all $f: \mathcal{X} \to \reals$ that satisfy $ \int_\mathcal{X} \, |f(x)|
\, \pi(x) \; dx < \infty$.
To quantify the asymptotic variance of MCMC estimates, the analog of the CLT
must take into account both the Markov dependency structure among the samples
used in the Monte Carlo estimate and also the initial state in which the chain
was started.
However, under mild conditions on both the Markov chain and the function $f$,
the sample average for any initial distribution $\pi_0$ is asymptotically
normal in distribution (with appropriate scaling):
\begin{gather}
    \lim_{n \to \infty}
    \P \left( \frac{1}{\sqrt{n}} \sum_{n=1}^n ({f}(X_i) - \mu) < \alpha \right)
    = \P ( Z < \alpha ),
    \\
    Z \sim \N \left(0, \sigma^2 \right),
    \\
    \sigma^2 = \text{Var}_\pi[{f}(X_0)] + 2 \sum_{t=1}^\infty \text{Cov}_\pi[ {f}(X_0), {f}(X_t) ]
\end{gather}
where $\mu = \int_\X f(x) \, \pi(x) \; dx$ and  where $\text{Var}_\pi$ and
$\text{Cov}_\pi$ denote the variance and covariance operators with the chain
$(X_i)$ initialized in stationarity with ${\pi_0 = \pi}$.
Thus standard error in the MCMC estimate also scales asymptotically as
$\frac{1}{\sqrt{n}}$, with a constant that depends on the autocovariance
function of the stationary version of the chain.
See \citet[chapter 17]{meyntweedie2009} and \citet[section 6.7]{robert2004monte}
for precise statements of both the LLN and CLT for Markov chain Monte Carlo
estimates and for conditions on the Markov chain which guarantee that these
theorems hold.

These results show that the asymptotic behavior of MCMC estimates of the
form~\eqref{eq:mcmc_estimate} is generally comparable to that of ordinary Monte
Carlo estimates as discussed in Section~\ref{sec:monte-carlo}.
However, in the non-asymptotic regime MCMC estimates differ from ordinary Monte
Carlo estimates in an important respect: there is a transient bias due to
initializing the Markov chain out of stationarity.
That is, the initial distribution $\pi_0$ from which the first iterate is
sampled is generally not the chain's stationary distribution $\pi$, since if
it were then ordinary Monte Carlo could be performed directly.
While the marginal distribution of each Markov chain iterate converges to the
stationary distribution, the effects of initialization on the initial iterates
of the chain contribute an error term to Eq.~\eqref{eq:mcmc_estimate} in the
form of a transient bias.

This transient bias does not factor into the asymptotic behavior described by
the MCMC analogs of the LLN and the CLT; asymptotically, it decreases at a rate
of at least $\mathcal{O}(\frac{1}{n})$ and is hence dominated by the Monte Carlo
standard error which decreases only at rate $\mathcal{O}(\frac{1}{\sqrt{n}})$.
However, its effects can be significant in practice, especially in
machine learning.
Whenever a sampled chain seems ``unmixed'' because its iterates
are too dependent on the initialization, errors in MCMC estimates are dominated
by this transient bias.

\begin{figure}[tp]
  \centering
  \includegraphics[width=\linewidth]{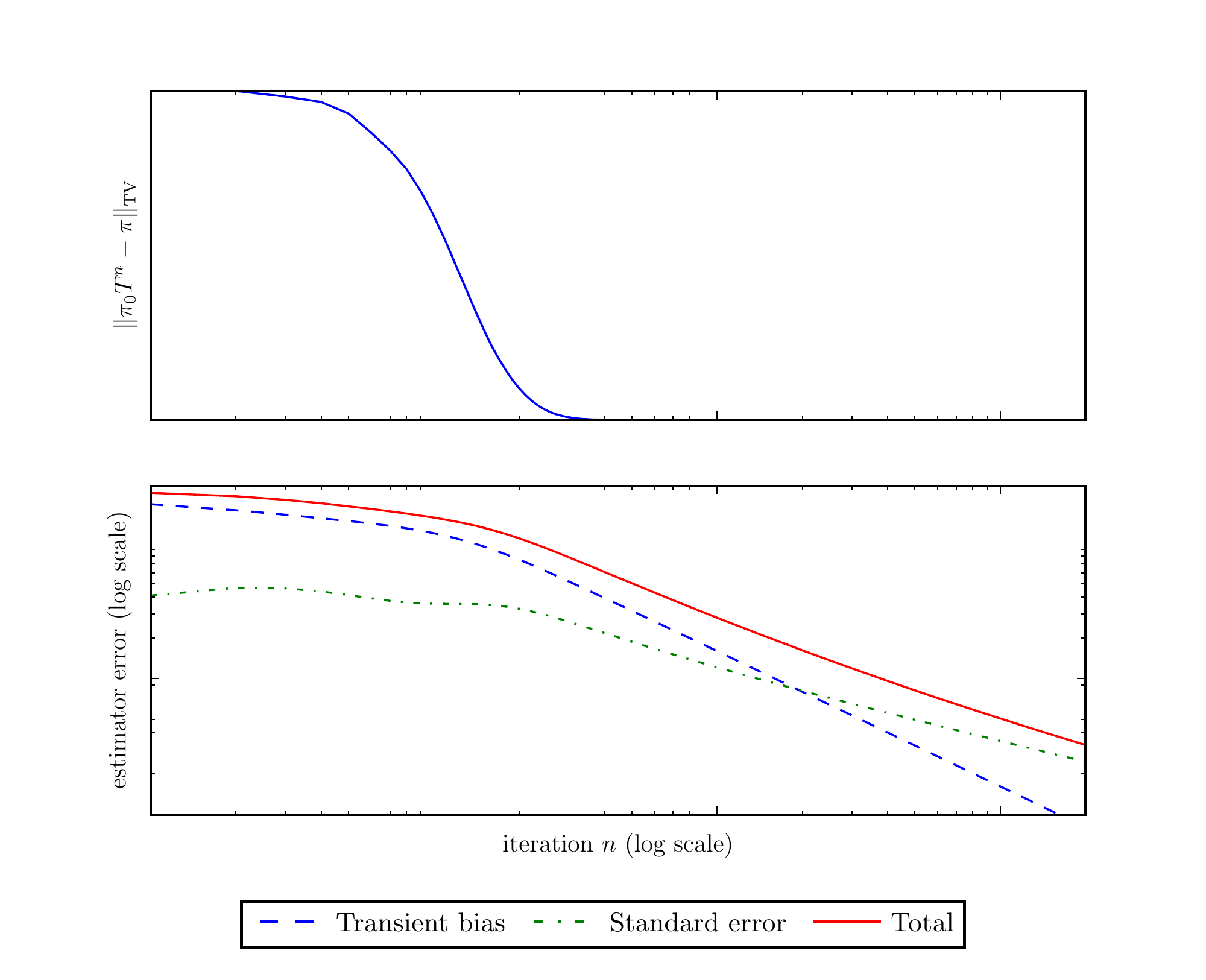}
  \caption{A simulation illustrating error terms in MCMC estimator~\eqref{eq:mcmc_estimate} as a
  function of the number of Markov chain iterations (log scale). The marginal distributions
  of the Markov chain iterates converge to the target distribution (top panel), while the
  errors in MCMC estimates due to transient bias and Monte Carlo standard error
  are eventually driven arbitrarily small at rates of $\mathcal{O}(\frac{1}{n})$
  and $\mathcal{O}(\frac{1}{\sqrt{n}})$, respectively (bottom panel).}
  \label{fig:background:traditional_mcmc}
\end{figure}

The simulation in Figure~\ref{fig:background:traditional_mcmc} illustrates these
error terms in MCMC estimates and how they can behave as more Markov chain
samples are collected.
The LLN and CLT for MCMC describe the regime on the far right of the plot:
the total error can be driven arbitrarily small because the MCMC estimates are
asymptotically unbiased, and the total error is asymptotically dominated by the
Monte Carlo standard error.
However, before reaching the asymptotic regime, the error is often dominated by
the transient initialization bias.
Several of the new methods we survey can be understood as attempts to alter the
traditional MCMC tradeoffs, as we discuss further in Chapter~\ref{ch:discussion}.

Transient bias can be traded off against Monte Carlo standard error by
choosing different subsets of Markov chain samples in the MCMC estimator.
As an extreme choice, instead of using the MCMC
estimator~\eqref{eq:mcmc_estimate} with the full set of Markov chain samples
$\{X_i\}_{i=1}^n$, transient bias can be minimized by forming estimates using
only the last Markov chain sample:
\begin{equation}
    \E[f(X)] \approx f(X_n).
\end{equation}
However, this choice of MCMC estimator maximizes the Monte Carlo standard
error, which asymptotically cannot be decreased below the posterior variance of
the estimand.
A practical choice is to form MCMC estimates using the last $\ceil{n/2}$ Monte
Carlo samples, resulting in an estimator
\begin{equation}
  \E[f(X)] \approx \frac{1}{\ceil{n/2}} \sum_{i=\floor{n/2}}^n f(X_i).
  \label{eq:mcmc_estimate_subsample}
\end{equation}
With this choice, once the marginal distribution of the
Markov chain iterates approaches the stationary distribution the error due to transient bias
is reduced at up to exponential rates.
See Figure~\ref{fig:background:traditional_mcmc_subsample} for an illustration.
With any choice of MCMC estimator, transient bias can be asymptotically decreased at least as
fast as $\mathcal{O}(\frac{1}{n})$, and potentially much faster, while MCSE can
decrease only as fast as $\mathcal{O}(\frac{1}{\sqrt{n}})$.

\begin{figure}[tp]
  \centering
  \includegraphics[width=\linewidth]{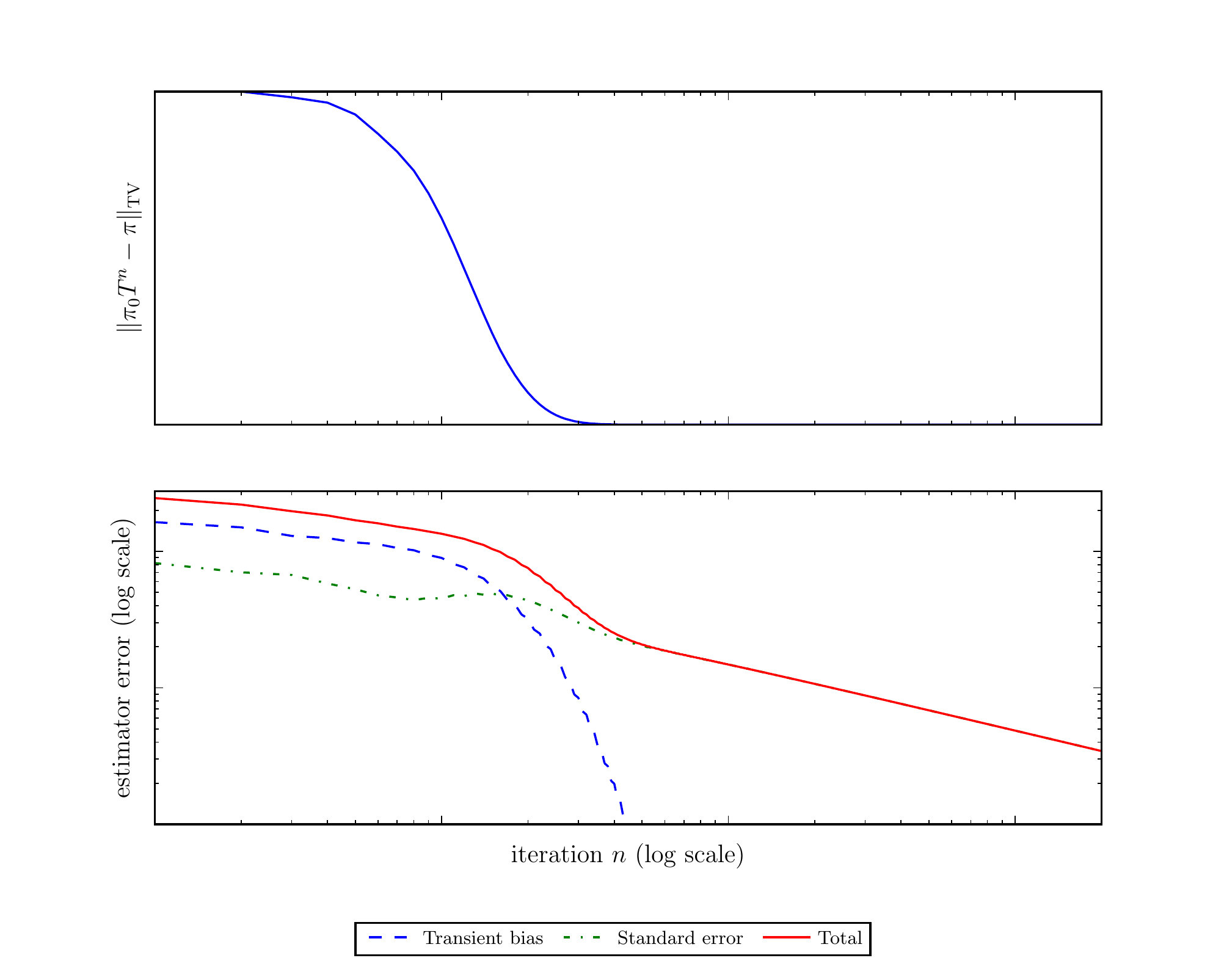}
  \caption{A simulation illustrating error terms in MCMC
  estimator~\eqref{eq:mcmc_estimate_subsample} as a function of the number of
  Markov chain iterations (log scale). Because the first half of the Markov chain
  samples are not used in the estimate, the error due to transient bias is
  reduced much more quickly than in
  Figure~\ref{fig:background:traditional_mcmc} at the cost of shifting up the
  standard error curve.}
  \label{fig:background:traditional_mcmc_subsample}
\end{figure}

Using these ideas, MCMC algorithms provide a general means for estimating
posterior expectations of interest: first construct an algorithm to simulate an
ergodic Markov chain that admits the intended posterior density as its
stationary distribution, and then simply run the simulation, collect samples,
and form Monte Carlo estimates from the samples.
The task then is to design an algorithm to simulate from such a Markov chain
with the intended stationary distribution.
In the following sections, we briefly review two canonical procedures for
constructing such algorithms: Metropolis-Hastings and Gibbs sampling.
For a thorough treatment, see~\citet{robert2004monte}
and~\citet[Chapter 1]{brooks2011handbook}.

\subsection{Metropolis-Hastings (MH) sampling}
\label{sec:mh}

\begin{algorithm}[t]
\caption{Metropolis-Hastings for posterior sampling}
\label{mh}
\begin{algorithmic}
\Require{Initial state $\theta_0$, number of iterations $T$},
joint density $p(\theta, \x)$, proposal density $q(\theta' \given \theta)$
\Ensure{Samples $\theta_1, \dots, \theta_T$}
\For {$t$ in $0, \dots, T-1$}
\State $\theta' \sim q(\theta' \given \theta_t)$ \Comment{Generate proposal}
\State $\alpha(\theta, \theta') \gets \min\left(1, \dfrac{p(\theta', \x) q(\theta_t \given \theta')}{p(\theta_t, \x) q(\theta' \given \theta_t)}\right)$
       \Comment{Acceptance probability}
\State $u \sim \Unif(0, 1)$ \Comment{Set stochastic threshold}
\If {$\alpha(\theta, \theta') > u$}
    \State $\theta_{t+1} \gets \theta'$ \Comment{Accept proposal}
\Else
    \State $\theta_{t+1} \gets \theta_t$ \Comment{Reject proposal}
\EndIf
\EndFor
\end{algorithmic}
\end{algorithm}

In the context of Bayesian posterior inference, the Metropolis-Hastings (MH)
algorithm simulates a reversible Markov chain over a
state space~$\Theta$ that admits the posterior density $p(\theta \given x)$ as its
stationary distribution.
The algorithm depends on a user-specified proposal density, $q(\theta' |
\theta)$, which can be evaluated numerically and sampled from efficiently, and
also requires that the joint density $p(\theta, x)$ can be evaluated (up to
proportionality).
The MH algorithm then generates a sequence of states~$\theta_1, \dots, \theta_T
\in \Theta$ according to Algorithm~\ref{mh}.

In each iteration, a proposal for the next state~$\theta'$ is drawn from the
proposal distribution, conditioned on the current state~$\theta$.
The proposal is stochastically accepted with probability given by the
\emph{acceptance~probability},
\be
\alpha(\theta, \theta') = \min\left(1, \frac{p(\theta', x) q(\theta \given \theta')}{p(\theta, x) q(\theta' \given \theta)}\right),
\ee
via comparison to a random variate~$u$ drawn uniformly from the interval~$[0, 1]$.
If~${u < \alpha(\theta, \theta')}$, then the next state is set to the proposal, otherwise,
the proposal is rejected and the next state is set to the current state.
MH is a generalization of the \emph{Metropolis algorithm}~\citep{metropolis-1953}, 
which requires the proposal distribution to be symmetric,
\ie~${q(\theta' \given \theta) = q(\theta \given \theta')}$, in which case
the acceptance probability is simply
\be
    \min\left(1, \frac{p(\theta', x)}{p(\theta, x)}\right).
\ee
\citet{hastings-1970} later relaxed this by showing that the proposal
distribution could be arbitrary.

One can show that the stationary distribution is indeed~$p(\theta \given x)$ by showing
that the MH transition operator satisfies detailed balance~\eqref{eq:detailed-balance}.
The MH transition operator density is a two-component mixture corresponding to
the `accept' event and the `reject' event:
\begin{align}
T(\theta \rightarrow \theta') &= \alpha(\theta, \theta') q(\theta' \given x) + (1-\beta(\theta)) \delta_{\theta}(\theta') \\
    \beta(\theta) &= \int_\Theta \alpha(\theta, \theta') q(\theta' \given \theta) \; d\theta'.
\end{align}
To show detailed balance, it suffices to show the two balance conditions
\begin{align}
    \label{eq:mh_db1}
    \alpha(\theta, \theta') q(\theta' \given \theta) p(\theta \given x) &= \alpha(\theta', \theta) q(\theta \given \theta') p(\theta' \given x)
    \\
    \label{eq:mh_db2}
    (1-\beta(\theta)) \delta_{\theta}(\theta') p(\theta \given x) &= (1-\beta(\theta')) \delta_{\theta'}(\theta) p(\theta' \given x).
\end{align}
To show~\eqref{eq:mh_db1} we write
\begin{align}
    \alpha(\theta, \theta') q(\theta' \given \theta) p(\theta \given x)
    &= \min\left( 1, \frac{p(\theta', x) q(\theta \given \theta')}{p(\theta, x) q(\theta' \given \theta)} \right)
    q(\theta' \given \theta) p(\theta \given x)
    \notag \\
    &= \min \left( q(\theta' \given \theta) p(\theta \given x), p(\theta', x) q(\theta \given \theta') \frac{p(\theta \given x)}{p(\theta, x)} \right)
    \notag \\
    &= \min \left( q(\theta' \given \theta) p(\theta \given x), p(\theta' \given x) q(\theta \given \theta') \right)
    \notag \\
    &= \min \left( q(\theta' \given \theta) p(\theta, x) \frac{p(\theta' \given x)}{p(\theta', x)}, p(\theta' \given x) q(\theta \given \theta') \right)
    \notag \\
    &= \min \left( 1, \frac{p(\theta, x) q(\theta' \given \theta)}{p(\theta', x) q(\theta \given \theta')} \right) q(\theta \given \theta') p(\theta' \given x).
    \label{eq:mh_db1_proof}
\end{align}
To show~\eqref{eq:mh_db2}, we need to verify that
\begin{align}
	(1-\beta(\theta)) p(\theta
\given x) = (1-\beta(\theta'))p(\theta' \given x)\,,
\end{align}
and we can use the same
manipulation as in~\eqref{eq:mh_db1_proof} under the integral sign:
\begin{align}
    (1-\beta(\theta)) p(\theta \given x) &= \left(1 - \int \alpha(\theta, \theta') q(\theta' \given \theta) \; d\theta' \right) p(\theta \given x)
    \notag \\
    &= \left(1 - \int \alpha(\theta', \theta) q(\theta \given \theta') \; d\theta \right) p(\theta' \given x)
    \notag \\
    &= (1-\beta(\theta')) p(\theta' \given x).
\end{align}

See~\citet[Section 7.3]{robert2004monte} for a more detailed treatment of
the Metropolis-Hastings algorithm.

\subsection{Gibbs sampling}
\label{sec:gibbs}
Given a collection of $n$ random variables $X = \{ X_i : i \in [n]\}$, the Gibbs
sampling algorithm iteratively samples each variable conditioned on the sampled
values of the others.
When the random variables are Markov on a graph $G=(V,E)$, the conditioning can
be reduced to each variable's respective Markov blanket, as in
Algorithm~\ref{background:alg:gibbs}.
In the context of Bayesian inference, the posterior of interest may correspond
to conditioning on some subset of the random variables, fixing them to observed
values.

\begin{algorithm}[t]
    \caption{Gibbs sampling}
    \label{background:alg:gibbs}
    \begin{algorithmic}
        \Require $X$ Markov on graph $G$ with nodes $\{1,2,\ldots,N\}$,
        Markov blankets $\text{MB}_G(i)$ and subroutines to sample $X_i
        \given X_{\MB_G(i)}$ for each $i \in V$
        \Ensure Samples $\{ \hat{x}^{(t)} \}$
        \State Initialize $x=(x_1,x_2,\ldots,x_N)$
        \For{$t=1,2,\ldots$}
        \For{$i=1,2,\ldots,N$}
        \State $x_i \gets \text{ sample } X_i \given X_{\MB_G(i)} =  x_{\MB_G(i)}$
        \EndFor
        \State $\hat{x}^{(t)} \gets (x_1,x_2,\ldots,x_N)$
        \EndFor
    \end{algorithmic}
\end{algorithm}

A variant of the \emph{systematic scan} of Algorithm~\ref{background:alg:gibbs}, in which
nodes are traversed in a fixed order for each outer iteration, is the
\emph{random scan}, in which nodes are traversed according to a random
permutation sampled for each outer iteration. An advantage of the random scan
(and other variants) is that the chain becomes reversible and therefore simpler
to analyze \citep[Section 10.1.2]{robert2004monte}.  With the conditional
independencies implied by a graph, some sampling steps may be performed in
parallel.

The Gibbs sampling algorithm can be analyzed as a special case of the
Metropolis-Hastings algorithm, where the proposal distribution is based on the
conditional distributions and the acceptance probability is always one.
If the Markov chain produced by a Gibbs sampling algorithm is ergodic, then the
stationary distribution is the target distribution of~$X$ \citep[Theorem
10.6]{robert2004monte}.
The Markov chain for a Gibbs sampler can fail to be ergodic if, for example,
the support of the target distribution is disconnected \citep[Example
10.7]{robert2004monte}.
A sufficient condition for Gibbs sampling to be ergodic is that all conditional
densities exist and are positive  everywhere \citep[Theorem
10.8]{robert2004monte}.

For a more detailed treatment of Gibbs sampling theory, see \citet[Chapters 6 and
10]{robert2004monte}.

\begin{notes}
\subsection{Assessing chain convergence and quality}
\label{sec:convergence}

\textbf{The language in this section should be consistent with and build on
the earlier section, ``Tradeoffs in Bayesian inference'' in Chapter 1.}

Several quantities measure the efficiency of a MCMC transition operator
in the asymptotic limit.
\emph{Asymptotic variance} measures the variance of an estimator
using samples from a chain that has reached stationarity.
The \emph{speed of convergence} or \emph{mixing time} of a Markov chain measures
how quickly~$\pi_t(x)$ approaches~$\pi(x)$; it is typically defined with respect to
a distance measure between probability distributions and a threshold.

However, in practice, we work with finite chains and in general do not know the
true stationary distribution.
Thus, we cannot empirically measure asymptotic variance or speed of convergence,
and it is difficult to know whether a chain is converging.
There are several strategies for assessing chain convergence and quality in practice.

\begin{itemize}
\item Discuss methods that rely on knowing the answer.
\item Then there are heuristic tools: Geweke, Gelman-Rubin, ESS.
\end{itemize}

The Gelman-Rubin statistic known as~$\hat{R}$ is a heuristic
for assessing convergence~\citep{gelman:1992-inference};
the description here follows that in the textbook by~\citet{gelman:2013-bda}.
Suppose we run~$S$ separate chains such that each produces~$T$ samples.
Let~${\theta_{ts} \in \reals^d}$ refer to sample~$t$ in chain~$s$.
Let~${\psi_{ts} = f(\theta_{ts})}$ where~${f: \reals^d \rightarrow \reals}$ 
is some scalar function of~$\theta_{ts}$, e.g.\ $f$ could be the log posterior,
or alternatively, the first coordinate of~$\theta_{ts}$.
First, we compute the between-chain variance
\begin{align}
& B = \frac{T}{S-1} \sum_{s=1}^S (\bar{\psi}_{\cdot s} - \bar{\psi}_{\cdot \cdot})^2, \\
\text{where} \quad & \bar{\psi}_{\cdot s} = \frac{1}{T} \sum_{t=1}^T \psi_{ts} 
\quad \mbox{and} \quad  \bar{\psi}_{\cdot \cdot} = \frac{1}{S} \sum_{s=1}^S \bar{\psi}_{\cdot s}, \nn
\end{align}
and the within-chain variance
\be
W = \frac{1}{S} \sum_{s=1}^S \delta_s^2, \quad \mbox{where} \quad
\delta_s^2 = \frac{1}{T-1} \sum_{t=1}^T (\psi_{ts} - \bar{\psi}_{\cdot s})^2.
\ee
Now we can estimate the marginal posterior variance of~$\psi$ as
\be
\nu = \frac{T-1}{T} W + \frac{1}{T} B.
\ee
The estimate of the scale of the distribution of~$\psi$ is then~$\sqrt{\nu}$.
Notice that~${\lim_{T \rightarrow \infty} \nu = W}$.
This makes sense because each chain asymptotically samples from the correct
distribution.
Furthermore,~${\nu > W}$ whenever~${B > W}$, which tends to be true before the
chains have converged, \ie differences between samples from different chains are
greater than differences within chains.
The quantity
\be
\hat{R} = \sqrt{\nu/W}
\ee
estimates the amount by which this scale
would decrease if the simulations were continued to the limit~$T \rightarrow \infty$.
Notice that in this limit,~$\hat{R}$ converges to~$1$.
Furthermore,~$\hat{R}$ tends to decrease toward~$1$, following the above
reasoning about~$\nu$.
A common heuristic is to consider values of~$\hat{R} < 1.1$ as acceptable;
lower cut-off values are considered better.

We can also assess the quality of the samples obtained.
The effective number of samples is defined by~\citet{gelman:1993-bda} as
\be
n_\text{eff} = ST \nu / B.
\ee
Note that~$n_\text{eff}$ does not monotonically decrease as we consider
these shorter subsequences.
\end{notes}

\section{Mean field variational inference}
\label{background:sec:meanfield}

\todo{this section uses notation from the excised graphical models section}

In mean field, and variational inference more generally, the task is to
approximate an intractable distribution, such as a complex posterior, with a
distribution from a tractable family so that the posterior can be efficiently interrogated for estimations of interest.
In this section we define the mean field optimization problem and derive the
standard coordinate optimization algorithm. We also give some basic results on
the relationship between mean field and both graphical model and exponential
family structure. For concreteness and simpler notation, we work mostly with
undirected graphical models; the results extend immediately to directed models.
\todo{In this section, compare to Monte Carlo expectation estimators.}

Mean field inference makes use of several densities and distributions, and so
we use a subscript notation for expectations to clarify the measure used in the
integration when it cannot easily be inferred from context. Given a function $f$ and a
random variable $X$ with range $\mathcal{X}$ and density $p$ with respect to a
base measure $\nu$, we write the expectation of $f$ as
\begin{equation}
    \E_{p(X)} \left[ f(X) \right] = \int_\mathcal{X} f(x) p(x) \nu(dx).
\end{equation}

\begin{proposition}[Mean field variational inequality]
\label{background:prop:var_inequality}
    For a probability density $p$ with respect to a base measure $\nu$ of the form
    \begin{equation}
        p(x) = \frac{1}{Z} \bar{p}(x) \quad \text{with} \quad Z \triangleq \int
        \bar{p}(x) \nu(dx),
        \label{background:eq:vlb_energy}
    \end{equation}
    where~$\bar{p}$ is the unnormalized density, for all densities~$q$ with respect to~$\nu$ we have
    \begin{equation}
        \label{eq:kl_div_def}
        \log Z = \mathcal{L}[q] + \KL(q  \|  p) \geq \mathcal{L}[q]
    \end{equation}
    where
    \begin{align}
        \mathcal{L}[q] &\triangleq \E_{q(X)} \left[ \log \frac{\bar{p}(X)}{q(X)} \right] = \E_{q(X)} \left[ \log \bar{p}(X) \right] + \mathbb{H}[q] \label{background:eq:avg_engy_plus_entropy}\\
        \KL(q \| p) &\triangleq \E_{q(X)} \left[ \log \frac{q(X)}{p(X)} \right].
    \end{align}
    Here,~$\mathbb{H}[q]$ is the differential entropy of~$q$.
\end{proposition}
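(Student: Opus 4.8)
The plan is to prove the decomposition identity first and then extract the inequality from it, since the inequality is nothing more than the nonnegativity of the KL divergence (Gibbs' inequality).

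\textbf{Step 1: the decomposition identity.} I would start from the definition of $\KL(q\|p)$ and substitute the normalized form $p(x) = \bar p(x)/Z$. Because $\log Z$ does not depend on $x$, it comes out of the expectation: for any density $q$ with respect to $\nu$,
\begin{align*}
\KL(q\|p) &= \E_{q(X)}\!\left[\log\frac{q(X)}{p(X)}\right]
= \E_{q(X)}\!\left[\log\frac{q(X)}{\bar p(X)}\right] + \log Z
= \log Z - \mathcal{L}[q],
\end{align*}
which rearranges to $\log Z = \mathcal{L}[q] + \KL(q\|p)$. The two expressions for $\mathcal{L}[q]$ in~\eqref{background:eq:avg_engy_plus_entropy} agree by linearity of the expectation: split $\log(\bar p/q) = \log\bar p - \log q$ and recognize $-\E_{q(X)}[\log q(X)] = \mathbb{H}[q]$.

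\textbf{Step 2: nonnegativity of KL.} It remains to show $\KL(q\|p)\ge 0$. Applying Jensen's inequality to the convex function $-\log$ gives
\begin{align*}
\KL(q\|p) = \E_{q(X)}\!\left[-\log\frac{p(X)}{q(X)}\right]
\;\ge\; -\log \E_{q(X)}\!\left[\frac{p(X)}{q(X)}\right]
= -\log\!\int_{\{q>0\}} p(x)\,\nu(dx)\;\ge\; -\log 1 = 0 .
\end{align*}
Combining with Step 1 yields $\log Z = \mathcal{L}[q] + \KL(q\|p) \ge \mathcal{L}[q]$, as claimed.

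\textbf{Main obstacle.} There is no deep difficulty; the only thing requiring care is the measure-theoretic bookkeeping. One should allow $\KL$ to take the value $+\infty$ (in which case the inequality is trivial), note that the ratio $p/q$ need only be evaluated on $\{q>0\}$, and observe that if $q$ places mass where $\bar p = 0$ then $\KL(q\|p) = +\infty$. It is also worth recording the equality case: Jensen is tight iff $p(X)/q(X)$ is $q$-almost surely constant, i.e.\ $q = p$ $\nu$-almost everywhere, which explains why the bound $\mathcal{L}[q]\le\log Z$ is attained exactly when $q$ is the true normalized distribution $p$. If one prefers to avoid invoking Jensen, the same bound follows from the elementary inequality $\log t \le t - 1$ applied pointwise to $t = p(x)/q(x)$ and integrated against $q$.
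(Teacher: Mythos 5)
Your proof is correct and follows essentially the same route as the paper's: establish the identity $\log Z = \mathcal{L}[q] + \KL(q\|p)$ by direct manipulation of the expectations, then conclude via Gibbs's inequality, itself obtained from Jensen applied to the logarithm. The extra remarks on the $\{q>0\}$ restriction, the $+\infty$ case, and the equality condition are welcome refinements but do not change the argument.
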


\begin{proof}
    To show the equality, with $X \sim q$ we write
    \begin{align}
        \mathcal{L}[q] + \KL(q \| p) &= \E_{q(X)} \left[ \frac{\bar{p}(X)}{q(X)} \right] + \E_{q(X)}
        \left[ \log \frac{q(X)}{p(X)} \right] \\
        &= \E_{q(X)} \left[ \log \frac{\bar{p}(X)}{p(X)} \right]\\
        &= \log Z.
    \end{align}
    The inequality follows from the property $\KL(q \| p) \geq 0$, known as Gibbs's
    inequality, which follows from Jensen's inequality and the fact that the
    logarithm is concave:
    \begin{equation}
        -\KL(q \| p) = \E_{q(X)} \left[ \log \frac{q(X)}{p(X)} \right] \leq \log
        \int q(x) \frac{p(x)}{q(x)} \nu(dx) = 0
    \end{equation}
    with equality if and only if $q = p$ ($\nu$-a.e.).
\end{proof}

We call the negative log of $\bar{p}$ in \eqref{background:eq:vlb_energy} the \emph{energy} and $\mathcal{L}[q]$ the
\emph{variational lower bound}, and say $\mathcal{L}[q]$ decomposes into the
entropy minus the average energy as in \eqref{background:eq:avg_engy_plus_entropy}. For
two densities $q$ and $p$ with respect to the same base measure,
$\KL(q \| p)$ is the Kullback-Leibler divergence from $q$ to $p$, used
as a measure of dissimilarity between pairs of densities~\citep{amari2007methods}.

The variational inequality given in Proposition~\ref{background:prop:var_inequality} is
useful in inference because if we wish to approximate an intractable~$p$ with a
tractable~$q$ by minimizing~$\KL(q \| p)$, we can equivalently choose~$q$ to
maximize~$\mathcal{L}[q]$, which is possible to evaluate since it does not include the
partition function $Z$.

In the context of Bayesian inference, $p$ is usually an intractable posterior
distribution of the form $p(\theta | x, \alpha)$, $\bar{p}$ is the unnormalized
joint distribution ${ \bar{p}(\theta) = p(\theta | \alpha) p(x |
\theta) }$, and $Z$ is the marginal likelihood~${ p(x | \alpha) = \int p(x | \theta)
p(\theta | \alpha) \nu(d \theta) }$, which plays a central role in Bayesian model
selection and the minimum description length (MDL) criterion \citep[Chapter
28]{mackay2003information} \citep[Chapter 7]{hastie2009elements}.

Given that graphical model structure can affect the complexity of probabilistic
inference \citep{koller2009probabilistic} it is natural to consider families
$q$ that factor according to tractable graphs.

\begin{definition}[Mean field variational inference]
\label{background:def:meanfield_problem}
    Let $p$ be the density with respect to $\nu$ for a
    collection of random variables $X=(X_i : i \in V)$, and
    let
    \begin{equation}
        \mathcal{Q} \triangleq \{ q : q(x) \propto \prod_{C \in \mathcal{C}}
    q_C(x_C) \}
\end{equation}
    be a family of densities with respect to $\nu$ that factorize according to
    a graph $G=(V,E)$ with $\mathcal{C}$ being the set of maximal cliques of $G$.
    Then the \emph{mean field optimization problem} is
    \begin{equation}
        q^* = \argmax_{q \in \mathcal{Q}} \mathcal{L}[q]%
        \label{background:eq:meanfield_problem}
    \end{equation}
    where $\mathcal{L}[q]$ is defined as in~\eqref{background:eq:avg_engy_plus_entropy}.
\end{definition}

Note that this optimization problem is not in general convex.%
\footnote{In the sense of maximizing a concave objective over a convex set.}
However, when the model distribution is an exponential family the objective is
concave in each $q_C$ individually, and hence an optimization procedure that
updates each factor in turn, while holding the rest fixed, will converge to a local
optimum \citep{wainwright2008graphical} \citep[Section 10.1.1]{bishop} \citep[Section 22.3]{murphy2012machine}.
We call such a coordinate ascent procedure
on~\eqref{background:eq:meanfield_problem} a mean field algorithm.

For approximating families in a factored form, we can derive a generic update
to be used in a mean field algorithm.

\begin{proposition}[Mean field update]
    Given a mean field objective as in Definition~\ref{background:def:meanfield_problem},
    the optimal update to a factor $q_A$ fixing the other factors defined by
    $q_A^* = \argmax_{q_A} \mathcal{L}[q]$ is
    \begin{equation}
        q_A^*(x_A) \propto \exp \{ \E [ \log \bar{p}(x_A, X_{A^c}) ] \}
        \label{background:eq:meanfield_update}
    \end{equation}
    where the expectation is over~${ X_{A^c} \sim q_{A^c} }$ with
    \begin{align}
	q_{A^c}(x_{A^c}) &\propto \prod_{C \in \mathcal{C} \setminus A} q_{C}(x_{C})\,.
	\end{align}
\end{proposition}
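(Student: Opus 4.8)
The plan is to exploit the decomposition $\mathcal{L}[q] = \E_{q(X)}[\log\bar{p}(X)] + \mathbb{H}[q]$ from~\eqref{background:eq:avg_engy_plus_entropy} together with the factorization $q = q_A\, q_{A^c}$, and then to recognize the part of $\mathcal{L}$ that depends on $q_A$ as a negative Kullback--Leibler divergence plus a $q_A$-independent constant, so that the maximizer is pinned down by Gibbs's inequality ($\KL \geq 0$, with equality iff the arguments agree $\nu$-a.e.), exactly as established inside Proposition~\ref{background:prop:var_inequality}.

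First I would fix all factors other than $q_A$ and write $q(x) = q_A(x_A)\, q_{A^c}(x_{A^c})$, where $q_{A^c}(x_{A^c}) \propto \prod_{C \in \mathcal{C} \setminus A} q_C(x_C)$ collects the remaining factors over the complementary block of variables. Since $q$ is then a product of densities over the two disjoint variable blocks $x_A$ and $x_{A^c}$, the differential entropy splits, $\mathbb{H}[q] = \mathbb{H}[q_A] + \mathbb{H}[q_{A^c}]$, and by Fubini $\E_{q(X)}[\log\bar{p}(X)] = \E_{q_A}\big[\, g(X_A) \,\big]$ with $g(x_A) \triangleq \E_{q_{A^c}}[\log\bar{p}(x_A, X_{A^c})]$. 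Collecting terms,
\begin{equation}
    \mathcal{L}[q] = \E_{q_A}\!\big[\, g(X_A) \,\big] + \mathbb{H}[q_A] + \mathbb{H}[q_{A^c}],
\end{equation}
in which only the first two summands involve $q_A$.

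Next I would introduce the candidate $q_A^*(x_A) \triangleq \tilde{Z}^{-1} \exp\{ g(x_A) \}$ with $\tilde{Z} \triangleq \int \exp\{ g(x_A) \}\, \nu(dx_A)$, which is precisely the claimed update~\eqref{background:eq:meanfield_update}. A one-line computation gives $\E_{q_A}[g(X_A)] + \mathbb{H}[q_A] = \E_{q_A}\!\big[ \log \tfrac{\exp\{g(X_A)\}}{q_A(X_A)} \big] = \log\tilde{Z} - \KL(q_A \| q_A^*)$, hence $\mathcal{L}[q] = \log\tilde{Z} - \KL(q_A \| q_A^*) + (\text{terms independent of } q_A)$. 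By Gibbs's inequality $\KL(q_A \| q_A^*) \geq 0$ with equality iff $q_A = q_A^*$ ($\nu$-a.e.), so $q_A^* = \argmax_{q_A} \mathcal{L}[q]$, as claimed.

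The steps are largely routine; the two places that deserve a word of care are (i) that the maximization is over \emph{normalized} densities $q_A$, without which the ``$-\KL$'' rewriting is invalid, and (ii) that $g$ is $\nu$-integrable and $\tilde{Z}$ is finite, so that $q_A^*$ is a legitimate density --- these are the tacit regularity assumptions of the section rather than a genuine obstacle. I would also note, as a remark, that any model factor whose argument is disjoint from the variables in $x_A$ contributes only an additive constant to $g$ and therefore drops out of $q_A^*$ after normalization; this is the observation that makes the mean field update local and cheap to compute.
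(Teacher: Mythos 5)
Your proposal is correct and follows essentially the same route as the paper: both reduce the $q_A$-dependent part of the objective to $-\KL(q_A \,\|\, \widetilde{p}_A)$ plus constants, where $\widetilde{p}_A(x_A) \propto \exp\{\E_{q_{A^c}}[\log\bar{p}(x_A, X_{A^c})]\}$, and then invoke Gibbs's inequality to identify the unique ($\nu$-a.e.) optimizer. The only cosmetic difference is that you maximize $\mathcal{L}[q]$ directly while the paper minimizes $\KL(q\|p)$, which are equivalent by the variational identity; your added remarks on normalization and integrability are sensible but not a departure from the paper's argument.
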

\begin{proof}
    Dropping terms constant with respect to $q_A$, we write
    \begin{align}
        q_A^* %
              & = \argmin_{q_A} \KL(q  \|  p) \\
              &= \argmin_{q_A} \E_{q_A} \left[ \log q_A(X_A) \right] + \E_{q_A} \left[ \E_{q_{A^c}} \left[ \log \bar{p}(X) \right] \right] \\
              &= \argmin_{q_A} \KL(q_A  \|  \widetilde{p}_A)
    \end{align}
    where $\widetilde{p}_A(x_A) \propto \exp \{ \E_{q_{A^c}} [ \log
    \bar{p}(x_A,X_{A^c}) ] \}$. Therefore, we achieve the unique
    ($\nu$-a.e.) minimum by setting $q_A = \widetilde{p}_A$.
\end{proof}

Finally, we note the simple form of updates %
for exponential family conjugate pairs.

\begin{proposition}[Mean field and conjugacy]
    If $x_i$ appears in $\bar{p}$ only in an exponential family conjugate pair
    $(p_1,p_2)$ where
    \begin{align}
        p_1(x_i | x_{\pi_G(i)}) &\propto \exp \{ \langle \eta(x_{\pi_G(i)}), t(x_i) \rangle \}
        \label{background:eq:foo1}
        \\
        p_2(x_{c_G(i)} | x_i) &= \exp \{ \langle t(x_i), (t(x_{c_G(i)}), 1) \rangle \}
        \label{background:eq:foo2}
    \end{align}
    then the optimal factor $q_i(x_i)$ is in the prior family with natural parameter
    \begin{equation}
        \widetilde{\eta} \triangleq \E_q[\eta(X_{\pi_G(i)})] + \E_q[(t(X_{c_G(i)}),1)].
    \end{equation}
\end{proposition}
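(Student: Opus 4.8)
The plan is to specialize the generic mean field update to the factor $q_i$ and then read off its exponential family form using conjugacy. Recall that the mean field update~\eqref{background:eq:meanfield_update}, applied to the single factor $q_i$, gives
\begin{equation}
  q_i^*(x_i) \propto \exp\{ \E_{q_{i^c}}[ \log \bar p(x_i, X_{i^c}) ] \},
\end{equation}
where $X_{i^c}$ denotes all variables other than $X_i$ and the expectation is taken under $q_{i^c}$. By hypothesis $x_i$ enters $\bar p$ only through the conjugate pair $(p_1,p_2)$ of~\eqref{background:eq:foo1}--\eqref{background:eq:foo2}, so every other factor of $\bar p$ contributes a constant in $x_i$ to $\log\bar p$ and may be dropped; likewise the parent-dependent normalizer of $p_1$ is constant in $x_i$. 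What remains, up to an additive constant independent of $x_i$, is
\begin{equation}
  \log p_1(x_i \given x_{\pi_G(i)}) + \log p_2(x_{c_G(i)} \given x_i)
  = \langle \eta(x_{\pi_G(i)}),\, t(x_i) \rangle + \langle t(x_i),\, (t(x_{c_G(i)}),1) \rangle .
\end{equation}

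Next I would push the expectation through, using linearity of expectation together with bilinearity of the inner product, and note that $X_{\pi_G(i)}$ and $X_{c_G(i)}$ are among the variables $X_{i^c}$, so that (since $q = q_i\, q_{i^c}$) their expectations under $q_{i^c}$ agree with those under $q$. This yields
\begin{equation}
  \E_{q_{i^c}}[\log\bar p(x_i, X_{i^c})]
  = \big\langle\, \E_q[\eta(X_{\pi_G(i)})] + \E_q[(t(X_{c_G(i)}),1)],\ t(x_i) \,\big\rangle + (\text{const}),
\end{equation}
whence $q_i^*(x_i) \propto \exp\{\langle \widetilde\eta, t(x_i)\rangle\}$ with $\widetilde\eta$ exactly the stated vector. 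This is a density of the same exponential family form as the prior $p_1$, now in natural parameter $\widetilde\eta$ (and any base density factor $h(x_i)$ of the prior, being independent of $X_{i^c}$, passes through untouched), which is the claim.

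The computation is essentially bookkeeping, but two points carry its content. The first is conjugacy itself: it is precisely because $t(x_i)$ plays the dual role of prior statistic in $p_1$ and natural parameter in $p_2$ --- the structure arranged in Proposition~\ref{background:prop:expfam_conj} --- that the two inner products collapse into a single linear functional of $t(x_i)$; without this alignment the update would not remain in the prior family. The second is a mild regularity assumption that should accompany the conclusion: one needs $\widetilde\eta$ to lie in the natural parameter domain of the prior family, so that $\exp\{\langle\widetilde\eta,t(x_i)\rangle\}$ is normalizable. This holds automatically when that domain is convex and closed under the relevant sums (for example when it is an entire real vector space, as for Gaussian or Dirichlet priors), and is the only place where more than formal manipulation is required.
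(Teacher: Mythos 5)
Your proof is correct and follows exactly the route the paper intends: substituting the conjugate pair forms~\eqref{background:eq:foo1} and~\eqref{background:eq:foo2} into the generic mean field update~\eqref{background:eq:meanfield_update} and reading off the natural parameter. The paper's own proof is just this substitution stated in one line; your added remark about $\widetilde{\eta}$ needing to lie in the natural parameter domain is a reasonable extra caveat but not part of the paper's argument.
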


\begin{proof}
    The result follows from substituting~\eqref{background:eq:foo1} and~\eqref{background:eq:foo2}
    into~\eqref{background:eq:meanfield_update}.
\end{proof}

See \citet[Chapter 5]{wainwright2008graphical} for a convex analysis
perspective on mean field algorithms in graphical models composed of
exponential families.

\section{Stochastic gradient optimization}
\label{sec:background:sgd}
In this section we briefly review some basic ideas in stochastic gradient
optimization.
In particular, the basic algorithm we use in this paper is given in
Algorithm~\ref{alg:sgd} and sufficient conditions for its convergence to a
local extreme point are given in Theorem~\ref{thm:sgd}.

Given a dataset $\bar{y}=\{\bar{y}^{(k)}\}_{k=1}^K$, where each $\bar{y}^{(k)}$
is a data \emph{minibatch}, consider the optimization
problem
\begin{equation}
    \phi^* = \argmax_\phi f(\phi)
\end{equation}
where the objective function $f$ decomposes according to
\begin{equation}
    f(\phi) = \sum_{k=1}^K g(\phi,\bar{y}^{(k)}). \label{eq:f-decomposes}
\end{equation}
In the context of variational Bayesian inference, the objective $f$ may be a
variational lower bound on the log of the model evidence and $\phi$ may be the parameters
of the variational family.
In MAP inference, $f$ may be the log joint density and $\phi$ may be its
parameters.

Using the decomposition of $f$, we can compute unbiased Monte Carlo estimates
of its gradient.
In particular, if the random index $\hat{k}$ is sampled from
$\{1,2,\ldots,K\}$, denoting the probability of sampling index~$k$ as~${p_k > 0}$, we
have
\begin{equation}
    \nabla_\phi f(\phi)
    = \sum_{k=1}^K p_k \frac{1}{p_k} \nabla_\phi g(\phi,\bar{y}^{(k)})
    =  \E_{\hat{k}} \left[ \frac{1}{p_{\hat{k}}} \nabla_\phi g(\phi,\bar{y}^{(\hat{k})}) \right].
    \label{eq:gradient-decomposes}
\end{equation}
Thus by considering a Monte Carlo approximation to the expectation over $\hat{k}$,
we can generate stochastic approximate gradients of the objective $f$ using
only a single $\bar{y}^{(k)}$ at a time.

A stochastic gradient ascent algorithm uses these approximate gradients to
perform updates and find a stationary point of the objective.
At each iteration, such an algorithm samples a data minibatch, computes a
gradient with respect to that minibatch, and takes a step in that direction.
In particular, for a sequence of stepsizes $\rho^{(t)}$ and a sequence of
positive definite matrices $G^{(t)}$, a typical stochastic gradient ascent
algorithm is given in Algorithm~\ref{alg:sgd}.

\begin{algorithm}[t]
    \caption{Stochastic gradient ascent}
    \begin{algorithmic}
        \Require $f : \mathbb{R}^n \to \mathbb{R}$ of the form
        \eqref{eq:gradient-decomposes}, sequences $\rho^{(t)}$ and $G^{(t)}$
        \State Initialize $\phi^{(0)} \in \mathbb{R}^n$
        \For{$t=0,1,2,\ldots$}
        \State $\hat{k}^{(t)} \gets$ sample index $k$ with probability $p_k$, for $k=1,2,\ldots,K$
        \State $\phi^{(t+1)} \gets \phi^{(t)} + \rho^{(t)} \frac{1}{p_{\hat{k}}} G^{(t)} \nabla_\phi g(\phi^{(t)},\bar{y}^{(\hat{k}^{(t)})}) $
        \EndFor
    \end{algorithmic}
    \label{alg:sgd}
\end{algorithm}

Stochastic gradient algorithms have very general convergence guarantees,
requiring only weak conditions on the step size sequence and even the accuracy
of the gradients themselves.
We summarize a common set of sufficient conditions in Theorem~\ref{thm:sgd}.
Proofs of this result, along with more general versions, can be found in
\citet{bertsekas1989parallel} and \citet{bottou1998online}.
Note also that while the construction here has assumed that the stochasticity
in the gradients arises only from randomly subsampling a finite sum, more
general versions allow for other sources of stochasticity, typically requiring
only bounded variance and allowing some degree of bias \citep[Section
7.8]{bertsekas1989parallel}.

\begin{theorem}
    Given a function $f : \mathbb{R}^n \to \mathbb{R}$ of the form \eqref{eq:f-decomposes}, if
    \begin{enumerate}
        \item  there exists a constant $C_0$ such that $f(\phi) \leq C_0$ for all $\phi \in \mathbb{R}^n$,
        \item  there exists a constant $C_1$ such that
            \begin{equation}
                 \|  \nabla f(\phi) - \nabla f(\phi')  \| _2 \leq C_1  \|  \phi - \phi'  \| _2 \quad \forall \phi,\phi' \in \mathbb{R}^n, \notag
            \end{equation}
        \item  there are positive constants $C_2$ and $C_3$ such that
            \begin{equation}
                \forall t \; C_2 I \prec G^{(t)} \prec C_3 I, \notag
            \end{equation}
        \item  and the stepsize sequence $\rho^{(t)}$ satisfies
            \begin{equation}
                \sum_{t=0}^\infty \rho^{(t)} = \infty \quad \text{and} \quad \sum_{t=0}^\infty (\rho^{(t)})^2 < \infty, \notag
            \end{equation}
    \end{enumerate}
    then Algorithm~\ref{alg:sgd} converges to a stationary point in the
    sense that
    \begin{equation}
        \liminf_{t \to \infty} \| \nabla f(\phi^{(t)}) \| = 0
    \end{equation}
    with probability $1$.
    \label{thm:sgd}
\end{theorem}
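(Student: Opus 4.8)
The plan is to run the textbook stochastic-approximation argument: combine the descent lemma implied by assumption~(2) with a supermartingale convergence theorem (Robbins--Siegmund), applied to the nonnegative process $Y_t \triangleq C_0 - f(\phi^{(t)}) \ge 0$, where nonnegativity comes from assumption~(1). Let $\mathcal{F}_t$ be the $\sigma$-algebra generated by $\phi^{(0)}, \hat{k}^{(0)}, \dots, \hat{k}^{(t-1)}$, so that $\phi^{(t)}$ is $\mathcal{F}_t$-measurable, and set $h^{(t)} \triangleq \frac{1}{p_{\hat{k}^{(t)}}} \nabla_\phi g(\phi^{(t)}, \bar{y}^{(\hat{k}^{(t)})})$, so the update reads $\phi^{(t+1)} = \phi^{(t)} + \rho^{(t)} G^{(t)} h^{(t)}$. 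By~\eqref{eq:gradient-decomposes} the stochastic gradient is unbiased, $\E[h^{(t)} \mid \mathcal{F}_t] = \nabla f(\phi^{(t)})$; and since $\hat{k}^{(t)}$ is drawn from the finite set $\{1,\dots,K\}$, $\E[\|h^{(t)}\|^2 \mid \mathcal{F}_t] = \sum_{k=1}^K p_k^{-1}\|\nabla_\phi g(\phi^{(t)},\bar{y}^{(k)})\|^2$, which I will control by a bound of the form $\sigma^2 + M\|\nabla f(\phi^{(t)})\|^2$.

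First I would apply the descent lemma: assumption~(2) gives $f(\phi') \ge f(\phi) + \langle \nabla f(\phi), \phi'-\phi\rangle - \tfrac{C_1}{2}\|\phi'-\phi\|^2$ for all $\phi,\phi'$. Substituting $\phi'-\phi = \rho^{(t)} G^{(t)} h^{(t)}$, using assumption~(3) ($C_2 I \prec G^{(t)} \prec C_3 I$) to get $\langle \nabla f(\phi^{(t)}), G^{(t)}\nabla f(\phi^{(t)})\rangle \ge C_2\|\nabla f(\phi^{(t)})\|^2$ and $\|G^{(t)} h^{(t)}\|^2 \le C_3^2\|h^{(t)}\|^2$, then taking $\E[\,\cdot \mid \mathcal{F}_t]$, yields
\begin{align*}
\E[f(\phi^{(t+1)}) \mid \mathcal{F}_t] &\ge f(\phi^{(t)}) + \Big(C_2 \rho^{(t)} - \tfrac{C_1 C_3^2 M}{2}(\rho^{(t)})^2\Big)\|\nabla f(\phi^{(t)})\|^2 \\
&\qquad - \tfrac{C_1 C_3^2 \sigma^2}{2}(\rho^{(t)})^2 .
\end{align*}
Assumption~(4) forces $\rho^{(t)} \to 0$, so there is a $T$ beyond which the coefficient of $\|\nabla f(\phi^{(t)})\|^2$ exceeds $\tfrac{1}{2}C_2\rho^{(t)} > 0$; rewriting in terms of $Y_t = C_0 - f(\phi^{(t)})$ gives, for $t \ge T$, $\E[Y_{t+1} \mid \mathcal{F}_t] \le Y_t - \tfrac{C_2}{2}\rho^{(t)}\|\nabla f(\phi^{(t)})\|^2 + \tfrac{C_1 C_3^2 \sigma^2}{2}(\rho^{(t)})^2$.

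Next I would invoke the Robbins--Siegmund almost-supermartingale convergence theorem. Since $\sum_t (\rho^{(t)})^2 < \infty$ by assumption~(4), the additive term is summable, so the theorem gives that $Y_t$ converges almost surely to a finite limit (hence $f(\phi^{(t)})$ converges a.s.) and, crucially, that $\sum_t \rho^{(t)}\|\nabla f(\phi^{(t)})\|^2 < \infty$ almost surely. Finally I would combine this with $\sum_t \rho^{(t)} = \infty$: on any sample path where $\liminf_t\|\nabla f(\phi^{(t)})\|^2$ equalled some $L > 0$, we would eventually have $\|\nabla f(\phi^{(t)})\|^2 \ge L/2$, forcing $\sum_t \rho^{(t)}\|\nabla f(\phi^{(t)})\|^2 \ge \tfrac{L}{2}\sum_{t \ge T'}\rho^{(t)} = \infty$ --- a contradiction. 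Hence $\liminf_t\|\nabla f(\phi^{(t)})\| = 0$ with probability one.

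I expect the main obstacle to be the second-moment control of the stochastic gradient: assumption~(2) bounds only the Lipschitz constant of the \emph{full} gradient $\nabla f$, so to push the descent-lemma bookkeeping through one genuinely needs an extra hypothesis --- bounded variance, or a growth bound $\E[\|h^{(t)}\|^2 \mid \mathcal{F}_t] \le \sigma^2 + M\|\nabla f(\phi^{(t)})\|^2$ --- which is precisely the point raised in the remark preceding the theorem and which is spelled out in the general versions of \citet{bertsekas1989parallel} and \citet{bottou1998online}. A lesser, purely technical matter is citing the Robbins--Siegmund lemma in the exact form that simultaneously delivers the a.s.\ convergence of $f(\phi^{(t)})$ and the summability of $\sum_t \rho^{(t)}\|\nabla f(\phi^{(t)})\|^2$.
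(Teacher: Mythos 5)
The paper does not actually prove Theorem~\ref{thm:sgd}; it defers entirely to \citet{bertsekas1989parallel} and \citet{bottou1998online}, so there is no in-text argument to compare yours against. Your sketch is the standard route those references take: the ascent form of the descent lemma from the Lipschitz-gradient hypothesis, conditional unbiasedness of the minibatch gradient via \eqref{eq:gradient-decomposes}, the Robbins--Siegmund almost-supermartingale lemma applied to $C_0 - f(\phi^{(t)})$ to get both a.s.\ convergence of $f(\phi^{(t)})$ and summability of $\sum_t \rho^{(t)}\|\nabla f(\phi^{(t)})\|^2$, and then the contradiction with $\sum_t \rho^{(t)} = \infty$. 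Each of these steps is carried out correctly, including the use of $C_2 I \prec G^{(t)} \prec C_3 I$ to control both the inner product $\langle \nabla f, G^{(t)} \nabla f\rangle$ and the norm $\|G^{(t)} h^{(t)}\|$.

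You are also right to flag the second-moment issue, and it deserves emphasis: it is a gap in the theorem \emph{as stated}, not in your argument. Conditions (1)--(4) bound neither $\E[\|h^{(t)}\|^2 \mid \mathcal{F}_t]$ nor its growth in $\phi$; since $f$ in \eqref{eq:f-decomposes} is a finite sum, the conditional second moment equals $\sum_k p_k^{-1}\|\nabla_\phi g(\phi^{(t)},\bar y^{(k)})\|^2$, and nothing in the hypotheses prevents an individual $\|\nabla_\phi g(\cdot,\bar y^{(k)})\|$ from growing while $\|\nabla f\|$ stays bounded (the summands can cancel), so a bound of the form $\sigma^2 + M\|\nabla f(\phi^{(t)})\|^2$ genuinely requires an additional hypothesis, e.g.\ per-term Lipschitz gradients together with boundedness of the iterates, or the bounded-variance condition that \citet{bertsekas1989parallel} and \citet{bottou1998online} impose explicitly. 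The paper's remark preceding the theorem alludes to this (``requiring only bounded variance and allowing some degree of bias'') without folding it into the formal statement. With that extra condition granted, your proof is complete and is the intended one.
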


While stochastic optimization theory provides convergence guarantees, there is
no general theory to analyze rates of convergence for nonconvex problems
such as those that commonly arise in posterior inference.
Indeed, the empirical rate of convergence often depends strongly on the
variance of the stochastic gradient updates and on the choice of step size
sequence.
There are automatic methods to tune or adapt the sequence of stepsizes
\citep{snoek2012practical,ranganath2013adaptive}, though we do not discuss them
here.
To make a single-pass algorithm, the minibatches can be sampled without
replacement.

\begin{notes}
\section{Computational architectures and paradigms}

Modern computational resources span a large range of parallel architectures.
Mainstream laptops commonly have at least one processor, each with at least two cores,
plus a graphical processing unit (GPU) containing hundreds or thousands of specialized cores.
Scientific computing clusters and cloud resources, such as Amazon Elastic Compute Cloud (EC2),
tend to be composed of multiple machines, each with tens or hundreds of cores, connected by a network.
Clusters of GPUs are also now common.
Cores on the same machine or GPU can communicate via shared memory,
whereas cores on different machines can only communicate over the network.
Communication via shared memory is minimal compared to communication over a network,
but shared memory architectures tend to have fewer cores
-- and thus capacity for parallel scaling -- than larger multicore compute clusters.

We can think of a process as the basic unit of program execution.
A serial process is executed as a single thread, most simply on a single core.
Multiple concurrent processes are executed as parallel threads on one or more cores.
The computational performance of a parallel execution is most straightforward to
understand when each parallel process or thread is associated with a single physical core.
Note that many modern processors support hyper-threading, where the operating system
associates multiple (two) virtual or logical cores with each physical core.
Instruction pipelining by the operating system can yield performance improvements compared
to traditional execution without hyper-threading, but can be difficult to interpret.
Especially when using cloud resources, it is important to understand the number of
physical versus virtual cores when evaluating a parallel implementation.

Parallel architectures with good computational performance tend to avoid communication
and system overheads, e.g.\ for scheduling worker cores and managing their computations.

\section{Measuring computational performance}

We provide a few notes on terminology for characterizing the effectiveness of parallel systems and algorithms.
The \emph{speedup} of a parallel procedure is a function of the amount of parallel resources~($K$),
e.g.\ the number of parallel cores.
It is measured relative to a serial procedure executed on a single core,
and defined as the (wall-clock) time of the serial procedure divided by
the (wall-clock) time of the parallel procedure.
Typically, speedup is an absolute measure of computational performance relative to a
standard reference implementation of a serial algorithm,
e.g.\ a stand-alone implementation outside of the parallel system being evaluated.
Ideally, speedup is linear in~$K$ and with a constant of proportionality equal to one.
A related measure, \emph{scaling}, is similar to speedup but tends to measure a system's
performance relative to itself, \ie the same system running on a single core, or more
generally, the minimum number of cores required by the system.\footnote{For example, a
parallel system architecture with a master-worker pattern requires at least one master and
one worker, which may correspond to two cores.}
An algorithm may have excellent speedup and scaling properties for an ideal system with
zero communication cost, but implementation and communication overheads tend to limit
practical performance.
Members of the systems community have rightly pointed out issues with researchers'
focus on an implementation's scalability rather than absolute speedup;
we refer interested readers to the recent work by~\citet{mcsherry:2015-cost}.

\end{notes}

\chapter{MCMC with data subsets}
\label{sec:mcmc-subsets}

In MCMC sampling for Bayesian inference, the task is to simulate a Markov chain
that admits as its stationary distribution the posterior distribution of interest.
While there are many standard procedures for constructing and simulating from
such Markov chains, when the dataset is large many of these algorithms' updates
become computationally expensive.
This growth in complexity naturally suggests the question of whether there are
MCMC procedures that can generate approximate posterior samples without using
the full dataset in each update.
In this chapter, we focus on recent MCMC sampling schemes that scale Bayesian
inference by operating on only subsets of data at a time.

\begin{notes}
Some differences between algorithms:
\begin{itemize}
\item Dynamic versus static methods for selecting data subsets.
\item Data subsets, mini-batches, incremental processing.
\item Exact versus approximate.
\end{itemize}
\end{notes}

\section{Factoring the joint density}

In most Bayesian inference problems, the fundamental object of interest is the
posterior density, which for fixed data is proportional to the product of the
prior and the likelihood:
\be
\pi(\theta \given \x) \propto \pi(\theta, \x) = \pi_0(\theta) \pi(\x \given \theta).
\ee
In this survey we are often concerned with posteriors where the data~$\x =
\{x_n\}_{n=1}^N$ are conditionally independent given the model
parameters~$\theta$, and hence the likelihood can be decomposed into a product
of terms:
\be
\pi(\theta \given \x) \propto \pi_0(\theta) \pi(\x \given \theta) 
                 = \pi_0(\theta) \prod_{n=1}^{N} \pi(x_n \given \theta).
\label{eq:factorization}
\ee
When~$N$ is large, this factorization can be exploited to construct MCMC
algorithms in which the updates depend only on subsets of the data.

In particular, we can use subsets of data to form an unbiased Monte Carlo
estimate of the log likelihood and consequently the log joint density.
The log likelihood is a sum of terms:
\be
\log \pi(\x \given \theta)
= \sum_{n=1}^{N} \log \pi(x_n \given \theta),
\label{eq:log-likelihood}
\ee
and we can approximate this sum using a random subset of~${m < N}$ terms
\be
\log \pi(\x \given \theta)
\approx \frac{N}{m} \sum_{n=1}^m \log \pi(x_n^* \given \theta),
\label{eq:log-likelihood-subset}
\ee
where $\{x_n^*\}_{n=1}^m$ is a uniformly random subset of $\{x_n\}_{n=1}^N$.
This approximation is an unbiased estimator and yields an unbiased estimate of
the log joint density:
\be
\log \pi(\theta) \pi(\x \given \theta) \approx \log \pi_0(\theta) + \frac{N}{m} \sum_{n=1}^m \log \pi(x_n^* \given \theta).
\label{eq:log-posterior-subset}
\ee
Several of the methods reviewed in this chapter exploit this estimator to
perform MCMC updates.

\section{Adaptive subsampling for Metropolis--Hastings}
\label{sec:adaptive}

In traditional Metropolis--Hastings (MH), we evaluate the joint density to
decide whether to accept or reject a proposal.
As noted by~\citet{korattikara-2014-austerity}, because the value of the joint
density depends on the full dataset, when~$N$ is large this is an unappealing
amount of computation to reach a binary decision.
In this section, we survey ideas for using approximate MH tests that depend on
only a subset of the full dataset.
The resulting approximate MCMC algorithms proceed in each iteration by reading
only as much data as required to satisfy some estimated error tolerance.

While there are several variations, the common idea is to model the probability
that the outcome of such an approximate MH test differs from the exact MH test.
This probability model allows us to construct an approximate MCMC sampler,
outlined in Section~\ref{sec:adaptive-stopping-rule}, where the user specifies
some tolerance for the error in an MH test and the amount of data evaluated is
controlled by an \emph{adaptive stopping rule}.
Different models for the MH test error lead to different stopping rules.
\citet{korattikara-2014-austerity} use a normal model to construct
a $t$-statistic hypothesis test, which we describe in Section~\ref{sec:t-test}.
\citet{bardenet:2014-subsampling} instead use concentration inequalities,
which we describe in Section~\ref{sec:concentration}.
Given an error model and resulting stopping rule, both schemes rely on an MH
test based on a Monte Carlo estimate of the log joint density, which we
summarize in Section~\ref{sec:approx-mh-test}.
Our notation in this section follows~\citet{bardenet:2014-subsampling}.

\citet{bardenet:2014-subsampling} observe that similar ideas have been
developed both in the context of simulated annealing\footnote{Simulated annealing
is a stochastic optimization heuristic that is operationally similar to MH.}
by the operations research community~\citep{bulgak:1988,alkhamis:1999,wang:2006},
and in the context of MCMC inference for factor graphs~\citep{singh:2012-mcmcmc}.

\subsection{An approximate MH test based on a data subset}
\label{sec:approx-mh-test}

In the Metropolis--Hastings algorithm~(\S\ref{sec:mh}), the proposal is
stochastically accepted when
\be
\frac{\pi(\theta' \given \x) q(\theta \given \theta')}{\pi(\theta \given \x) q(\theta' \given \theta)} > u,
\ee
where~$u \sim \Unif(0, 1)$.
Rearranging and using log probabilities gives
\be
\log \left[\frac{\pi(\x \given \theta')}{\pi(\x \given \theta)}\right]
> \log \left[u\frac{q(\theta' \given \theta)\pi_0(\theta)}{q(\theta \given \theta')\pi_0(\theta')}\right].
\label{eq:threshold}
\ee
Scaling both sides by~$1/N$ gives an equivalent threshold,
\be
\Lambda(\theta, \theta') > \psi(u, \theta, \theta'),
\label{eq:threshold}
\ee
where on the left, $\Lambda(\theta, \theta')$ is the average log likelihood ratio,
\be
\Lambda(\theta, \theta')
= \frac{1}{N} \sum_{n=1}^N \log \left[\frac{\pi(x_n \given \theta')}{\pi(x_n \given \theta)}\right]
\equiv \frac{1}{N} \sum_{n=1}^N \ell_n,
\ee
where
\be
\ell_n = \log \pi(x_n \given \theta') - \log \pi(x_n \given \theta),
\label{eq:log-likelihood-ratio}
\ee
and on the right,
\be
\psi(u, \theta, \theta')
= \frac{1}{N} \log \left[u\frac{q(\theta' \given \theta)\pi_0(\theta)}{q(\theta \given \theta')\pi_0(\theta')}\right].
\ee
We can form an approximate threshold by subsampling the~$\ell_n$.
Let~$\{\ell_n^*\}_{n=1}^{m}$ be a subsample of size~${m<N}$,
without replacement, from~$\{\ell_n\}_{n=1}^{N}$.
This gives the following approximate test:
\be
\hat\Lambda_m(\theta, \theta') > \psi(u, \theta, \theta'),
\label{eq:approx-threshold}
\ee
where
\be
\hat\Lambda_m(\theta, \theta')
= \frac{1}{m} \sum_{n=1}^m \log \left[\frac{\pi(x_n^* \given \theta')}{\pi(x_n^* \given \theta)}\right]
\equiv \frac{1}{m} \sum_{n=1}^m \ell_n^*\,.
\label{eq:difference}
\ee
This subsampled average log likelihood ratio $\hat\Lambda_m(\theta, \theta')$
is an unbiased estimate of the average log likelihood ratio $\Lambda(\theta,
\theta')$.
However, an error is made in the event that the approximate
test~\eqref{eq:approx-threshold} disagrees with the exact
test~\eqref{eq:threshold}, and the probability of such an error event depends
on the distribution of $\hat\Lambda_m(\theta, \theta')$ and not just its mean.

Note that because the proposal~$\theta'$ is usually a small perturbation
of~$\theta$, we expect~$\log \pi(x_n\given \theta')$ to be similar to~$\log
\pi(x_n \given \theta)$.
In this case, we expect the log likelihood ratios~$\ell_n$ have a smaller variance
compared to the variance of~$\log \pi(x_n \given \theta)$ across data terms.

\subsection{Approximate MH with an adaptive stopping rule}
\label{sec:adaptive-stopping-rule}

\begin{algorithm}[t!]
\caption{Approximate MH with an adaptive stopping rule}
\label{alg:mh-adaptive}
\begin{algorithmic}
\State \textbf{Input:} Initial state~$\theta_0$, number of iterations~$T$,
data~${\x = \{x_n\}_{n=1}^N}$, posterior $\pi(\theta \given \x)$,
proposal $q(\theta' \given \theta)$
\State \textbf{Output:} Samples $\theta_1, \dots, \theta_T$
\For {$t$ in $0, \dots, T-1$}
	\State $\theta' \sim q(\theta' \given \theta_t)$ \Comment{Generate proposal}
	\State $u \sim \Unif(0, 1)$ \Comment{Draw random number}
	\State $\psi(u, \theta, \theta') \gets \dfrac{1}{N} \log \left[u\dfrac{q(\theta' \given \theta)\pi_0(\theta)}{q(\theta \given \theta')\pi_0(\theta')}\right]$
	\State $\hat\Lambda(\theta,\theta') \gets$ \Call{AvgLogLikeRatioEstimate}{$\theta,\theta',\psi(u,\theta,\theta')$}
	\If {$\hat\Lambda(\theta, \theta') > \psi(u, \theta, \theta')$} \Comment{Approximate MH test}
		\State $\theta_{t+1} \gets \theta'$ \Comment{Accept proposal}
	\Else
		\State $\theta_{t+1} \gets \theta_t$ \Comment{Reject proposal}
	\EndIf
\EndFor
\end{algorithmic}
\end{algorithm}

A nested sequence of data subsets, sampled without replacement, that converges to the complete
dataset gives us a sequence of approximate MH tests that converges to the exact MH test.
Modeling the error of such an approximate MH test gives us a mechanism for
designing an approximate MH algorithm in which, at each iteration,
we incrementally read more data until an adaptive stopping rule
informs us that our error is less than some user-specified tolerance.
Algorithm~\ref{alg:mh-adaptive} outlines this approach.
The function \textsc{AvgLogLikeRatioEstimate} computes~$\hat\Lambda(\theta, \theta')$
according to an adaptive stopping rule that depends on an error model,
\ie a way to approximate or bound the probability that the approximate outcome
disagrees with the full-data outcome:
\be
\P\left[((\hat\Lambda_m(\theta, \theta') > \psi(u, \theta, \theta')) \neq
                              ((\Lambda(\theta, \theta') > \psi(u, \theta, \theta'))\right].
\label{eq:prob-wrong}
\ee
We describe two possible error models in Sections~\ref{sec:t-test} and~\ref{sec:concentration}.

A practical issue with adaptive subsampling is choosing the sizes of the data subsets.
One approach, taken by~\citet{korattikara-2014-austerity}, is to use a fixed
batch size~$b$ and read~$b$ more data points at a time.
\citet{bardenet:2014-subsampling} instead geometrically increase the total subsample size,
and also discuss connections between adaptive stopping rules and related ideas such
as bandit problems, racing algorithms and boosting.

\subsection{Using a $t$-statistic hypothesis test}
\label{sec:t-test}

\citet{korattikara-2014-austerity} propose an approximate MH acceptance
probability that uses a parametric test of significance as its error model.
By assuming a normal model for the log likelihood estimate $\hat\Lambda(\theta, \theta')$,
a $t$-statistic hypothesis test then provides an estimate of
whether the approximate outcome agrees with the full-data outcome,
\ie the expression in Equation~\eqref{eq:prob-wrong}.
This leads to an adaptive framework as in Section~\ref{sec:adaptive-stopping-rule}
where, at each iteration, the data are processed incrementally until the $t$-test
satisfies some user-specified tolerance~$\epsilon$.

Let us model the~$\ell_n$ as i.i.d.\ from a normal distribution with
bounded variance~$\sigma^2$:
\begin{align}
  {\ell_n} &\sim \N(\mu, \sigma^2)\,.
\label{model-single}
\end{align}
The mean estimate~$\hat\mu_m$ for~$\mu$ based on the subset of size~$m$
is equal to~$\hat\Lambda_m(\theta, \theta')$:
\begin{align}
  \hat\mu_m &= \hat\Lambda_m(\theta, \theta') = \frac{1}{m}\sum_{n=1}^m \ell_n^*\,.
  \label{eq:mu}
\end{align}
The error estimate~$\hat\sigma_m$ for~$\sigma$ may be derived from~$s_m/\sqrt{m}$,
where~$s_m$ is the empirical standard deviation of the~$m$ subsampled~$\ell_n$ terms, \ie
\be
s_m = \sqrt{\frac{m}{m-1}\left(\hat\Lambda_m^2(\theta, \theta') - \hat\Lambda_m(\theta, \theta')^2\right)},
\label{eq:empirical-std}
\ee
where
\be
\hat\Lambda_m^2(\theta, \theta') = \frac{1}{m} \sum_{n=1}^m (\ell_n^*)^2.
\ee
To obtain a confidence interval, we multiply this
estimate by the finite population correction, giving:
\begin{align}
\hat\sigma_m = \frac{s_m}{\sqrt{m}}  \sqrt{\frac{N - m}{N - 1}}\,.
\label{eq:sigma}
\end{align}
If~$m$ is large enough for the CLT to hold, the test statistic
\be
t = \frac{\hat\Lambda_m(\theta, \theta') - \psi(u, \theta, \theta')}{\hat\sigma_m}
\label{eq:test-statistic}
\ee
follows a Student's $t$-distribution with~${m-1}$ degrees of freedom
when~$\Lambda(\theta, \theta') = \psi(u, \theta, \theta')$.
The tail probability for~$|t|$ then gives the probability that the
approximate and actual outcomes agree, and thus
\be
\rho = 1 - \phi_{m-1}(|t|)
\ee
is the probability that they disagree, where~$\phi_{m-1}(\cdot)$ is the CDF of
the Student's $t$-distribution with~${m-1}$ degrees of freedom.
The $t$-test thus gives an adaptive stopping rule, \ie for any
user-provided tolerance~${\epsilon \ge 0}$, we can incrementally increase~$m$
until~${\rho \le \epsilon}$.
We illustrate this approach in Algorithm~\ref{alg:hypothesis}.

\begin{algorithm}[t!]
\caption{Estimate of the average log likelihood ratio.
The adaptive stopping rule uses a $t$-statistic hypothesis test.}
\label{alg:hypothesis}
\begin{algorithmic}
\State \textbf{Parameters:} batch size~$b$, user-defined error tolerance~$\epsilon$
  \Function{AvgLogLikeRatioEstimate}{$\theta,\theta',\psi(u,\theta,\theta')$}
    \State $m,~\hat\Lambda(\theta,\theta'),~\hat\Lambda^2(\theta,\theta') \gets 0,~0,~0$
    \While{True}
      \State $c \gets \min(b, N-m)$
      \State $\hat\Lambda(\theta, \theta') \gets
        \dfrac{1}{m+c} \left( m \hat\Lambda(\theta, \theta') + \displaystyle\sum_{n=m+1}^{m+c}
              \log\dfrac{\pi(x_n \given \theta')}{\pi(x_n \given \theta)} \right)$
      \State $\hat\Lambda^2(\theta, \theta') \gets
        \dfrac{1}{m+c} \left( m \hat\Lambda^2(\theta, \theta') + \displaystyle\sum_{n=m+1}^{m+c}
              \left[\log\dfrac{\pi(x_n \given \theta')}{\pi(x_n \given \theta)}\right]^2 \right)$
      \State $m \gets m + c$
      \vspace{.5em}
	\State $s \gets \sqrt{\dfrac{m}{m-1}\left(\hat\Lambda^2(\theta, \theta') - \hat\Lambda(\theta, \theta')^2\right)}$
	\vspace{.5em}
	\State $\hat\sigma \gets \dfrac{s}{\sqrt{m}}  \sqrt{\dfrac{N - m}{N - 1}}$
	\State $\rho \gets 1 - \phi_{m-1}\left(\left|\dfrac{\hat\Lambda(\theta, \theta') - \psi(u, \theta, \theta')}{\hat\sigma}\right|\right)$
      \If {$\rho > \epsilon$ \textbf{or} $m = N$}
      \vspace{.2em}
      \State \Return $\hat\Lambda(\theta,\theta')$
      \vspace{.2em}
      \EndIf
    \EndWhile
  \EndFunction
\end{algorithmic}
\end{algorithm}

\begin{notes}
\subfile{files/korattikara-austerity-2014}
\end{notes}

\subsection{Using concentration inequalities}
\label{sec:concentration}

\newcommand{\Ctt}{C_{\theta, \theta'}}

\citet{bardenet:2014-subsampling} propose an adaptive subsampling method that
is mechanically similar to using a $t$-test but instead uses concentration
inequalities.
In addition to a bound on the error (of the approximate acceptance probability)
that is local to each iteration, concentration bounds yield a bound on the
total variation distance between the approximate and true stationary distributions.

As in Section~\ref{sec:t-test}, we evaluate an approximate MH threshold
based on a data subset of size~$m$, given in Equation~\eqref{eq:approx-threshold}.
We bound the probability that the approximate binary outcome is incorrect via
\emph{concentration inequalities} that characterize the quality
of~$\hat\Lambda_m(\theta, \theta')$ as an estimate for~$\Lambda(\theta, \theta')$.
Such a concentration inequality is a probabilistic statement that,
for~$\delta_m \in (0, 1)$ and some constant~$c_m$,
\be
\P\left(\left|\hat\Lambda_m(\theta, \theta') - \Lambda(\theta, \theta')\right| \le c_m \right)
\ge 1 - \delta_m.
\ee
For example, in Hoeffding's inequality without replacement~\citep{serfling:1974}
\be
c_m = \Ctt \sqrt{\frac{2}{m}\left(1 - \frac{m-1}{N}\right)\log\left(\frac{2}{\delta_m}\right)}
\label{eq:serfling}
\ee
where
\be
\Ctt = \max_{1 \le n \le N} \left|\log \pi(x_n \given \theta') - \log \pi(x_n \given \theta)\right|
= \max_{1 \le n \le N} |\ell_n|,
\label{eq:Ctt}
\ee
using~$\ell_n$ as in Equation~\eqref{eq:log-likelihood-ratio}.
Alternatively, if the empirical standard deviation~$s_m$ of the~$m$
subsampled~$\ell_n^*$ terms is small, then the empirical Bernstein bound,
\be
c_m = s_m \sqrt{\frac{2 \log(3/\delta_m)}{m}} + \frac{6\Ctt \log(3/\delta_m)}{m},
\label{eq:bernstein}
\ee
is tighter~\citep{audibert:2009}, where~$s_m$ is given in Equation~\eqref{eq:empirical-std}.
While~$\Ctt$ can be obtained via all the~$\ell_n$, this is precisely the computation we want to avoid.
Therefore, the user must provide an estimate of~$\Ctt$.

\citet{bardenet:2014-subsampling} use a concentration bound to
construct an adaptive stopping rule based on a strategy called
empirical Bernstein stopping~\citep{mnih:2008-bernstein}.
Let~$c_m$ be a concentration bound as in Equation~\eqref{eq:serfling} or~\eqref{eq:bernstein}
and let~$\delta_m$ be the associated error.
This concentration bound states that
${|\hat\Lambda_m(\theta, \theta') - \Lambda(\theta, \theta')| \le c_m}$
with probability~${1 - \delta_m}$.
If~${|\hat\Lambda_m(\theta, \theta') - \psi(u, \theta, \theta')| > c_m}$,
then the approximate MH test agrees with the exact MH test with probability~${1 - \delta_m}$.
We reproduce a helpful illustration of this scenario
from~\citet{bardenet:2014-subsampling} in Figure~\ref{fig:bardenet}.
If instead~${|\hat\Lambda_m(\theta, \theta') - \psi(u, \theta, \theta')| \le c_m}$,
then we want to increase~$m$ until this is no longer the case.
Let~$M$ be the \emph{stopping time}, \ie the number of data points evaluated using this criterion,
\be
M = \min\left(N, \inf_{m \ge 1}
    \left|\hat\Lambda_m(\theta, \theta') - \psi(u, \theta, \theta')\right| > c_m \right).
\ee

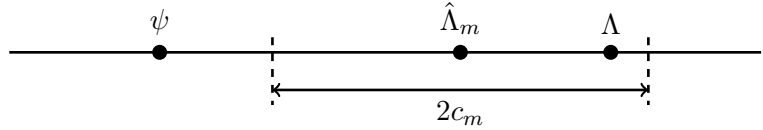
\begin{figure}[t!]
\centering
\begin{tikzpicture}[dot/.style={circle,inner sep=2pt,fill},
                    empty/.style={circle,inner sep=0pt}]
\draw[line width=1pt] (0, 0) -- (10, 0);
\draw (0, 0) node[empty](){};
\draw (2, 0) node[dot,label=above:$\psi$](){};
\draw (6, 0) node[dot,label=above:$\hat\Lambda_m$](){};
\draw (8, 0) node[dot,label=above:$\Lambda$](){};
\draw (10, 0) node[empty](){};
\draw[<->,line width=1pt] (3.5, -0.5) node[empty](){} -- (8.5, -0.5) node[empty](){};
\draw (6, -0.5) node[empty,label=below:$2c_m$](){};
\draw[dashed,line width=1pt] (3.5, 0.2) node[empty](){} -- (3.5, -0.7) node[empty](){};
\draw[dashed,line width=1pt] (8.5, 0.2) node[empty](){} -- (8.5, -0.7) node[empty](){};
\end{tikzpicture}
\caption{Reproduction of Figure 2 from~\citet{bardenet:2014-subsampling}.
If~${|\hat\Lambda_m(\theta, \theta') - \psi(u, \theta, \theta')| > c_m}$,
then the adaptive stopping rule using a concentration bound is satisfied and
we use the approximate MH test based on~$\hat\Lambda_m(\theta, \theta')$.}
\label{fig:bardenet}
\end{figure}

We can set~$\delta_m$ according to a user-defined parameter~${\epsilon \in (0, 1)}$
so that~$\epsilon$ gives an upper bound on the error of the
approximate acceptance probability. Let~$p > 1$ and set
\be
\delta_m = \frac{p - 1}{p m^p}\epsilon,
\quad \text{thus} \quad
\sum_{m \ge 1} \delta_m \le \epsilon.
\label{eq:delta-m}
\ee
A union bound argument gives
\be
\P\left(\bigcap_{m \ge 1}
\left\{\left|\hat\Lambda_m(\theta, \theta') - \Lambda(\theta, \theta')\right|
        \le c_m \right\} \right) \ge 1 - \epsilon,
\ee
under sampling without replacement.
Hence, with probability~${1-\epsilon}$, the approximate MH test
based on~$\hat\Lambda_M(\theta, \theta')$ agrees with the exact MH test.
In other words, the stopping rule for computing~$\hat\Lambda_m(\theta, \theta')$
in Algorithm~\ref{alg:mh-adaptive} is satisfied once we observe
${|\hat\Lambda_m(\theta, \theta') - \psi(u, \theta, \theta')| > c_m}$.
We illustrate this approach in Algorithm~\ref{alg:hoeffding}, using Hoeffding's inequality without replacement.

In their actual implementation, \citet{bardenet:2014-subsampling} modify~$\delta_m$
to reflect the number of batches processed instead of the subsample size~$m$.
For example, suppose we use the concentration bound in Equation~\eqref{eq:serfling},
\ie Hoeffding's inequality without replacement.
Then after processing a subsample of size~$m$ in~$k$ batches,
the adaptive stopping rule checks whether
${|\hat\Lambda_m(\theta, \theta') - \psi(u, \theta, \theta')| > c_m}$, where
\be
c_m = \Ctt \sqrt{\frac{2}{m}\left(1 - \frac{m-1}{N}\right)\log\left(\frac{2}{\delta_k}\right)}
\ee
and
\be
\delta_k = \frac{p - 1}{p k^p}\epsilon.
\ee
Also, as mentioned in Section~\ref{sec:adaptive-stopping-rule}, \citet{bardenet:2014-subsampling}
geometrically increase the subsample size by a factor~$\gamma$.
In their experiments, they use the empirical Bernstein-Serfling bound~\citep{bardenet:2013-concentration}.
For the hyperparameters, they set~${p = 2}$, ${\gamma=2}$, and ${\epsilon = 0.01}$,
and remark that they empirically found their algorithm to be robust to the choice of~$\epsilon$.

\begin{algorithm}[t!]
\caption{Estimate of the average log likelihood ratio.
The adaptive stopping rule uses Hoeffding's inequality without replacement.}
\label{alg:hoeffding}
\begin{algorithmic}
\State \textbf{Parameters:} batch size~$b$, user-defined error tolerance~$\epsilon$,
		estimate of~$\Ctt = \max_n |\ell_n|$, $p > 1$
  \Function{AvgLogLikeRatioEstimate}{$\theta,\theta',\psi(u,\theta,\theta')$}
    \State $m,~\hat\Lambda(\theta,\theta') \gets 0,~0$
    \While{True}
      \State $c \gets \min(b, N-M)$
      \State $\hat\Lambda(\theta, \theta') \gets
        \dfrac{1}{m+c} \left( m \hat\Lambda(\theta, \theta') + \displaystyle\sum_{n=m+1}^{m+c}
              \log\dfrac{\pi(x_n \given \theta')}{\pi(x_n \given \theta)} \right)$
      \State $m \gets m + c$
      \vspace{.3em}
      \State $\delta \gets \dfrac{p - 1}{p m^p}\epsilon$
      \State $c \gets \Ctt \sqrt{\dfrac{2}{m}\left(1 - \dfrac{m-1}{N}\right)\log\left(\dfrac{2}{\delta}\right)}$
      \If {$\left|\hat\Lambda(\theta, \theta') - \psi(u, \theta, \theta')\right| > c$ \textbf{or} $m = N$}
      \State \Return $\hat\Lambda(\theta,\theta')$
      \EndIf
    \EndWhile
  \EndFunction
\end{algorithmic}
\end{algorithm}

\subsection{Error bounds on the stationary distribution}
\label{sec:theory-general}

In this and the next subsection, we reproduce some theoretical results from 
\citet{korattikara-2014-austerity} and \citet{bardenet:2014-subsampling}.
After setting up some notation, we emphasize the most general aspects of these
results, which apply to pairs of transition kernels whose differences are bounded,
and thus are not specific to adaptive subsampling procedures.
The central theorem is an upper bound on the difference
between the stationary distributions of such pairs of kernels in the case of Metropolis--Hastings.
Its proof depends on the ability to bound the difference in the acceptance probabilities,
at each iteration, of the two MH transition kernels.

\paragraph{Preliminaries and notation.}
Let~$P$ and~$Q$ be probability measures (distributions) with Radon--Nikodym
derivatives (densities) $f_P$ and~$f_Q$, respectively, and absolutely continuous
with respect to measure~$\nu$.
The \emph{total variation distance} between~$P$ and~$Q$ is
\be
\tv{P}{Q} \equiv \frac{1}{2} \int_{\theta \in \Theta} d\nu(\theta) |f_P(\theta) - f_Q(\theta)|.
\ee
For any transition kernel~$T$, let~$T^k$ denote the kernel obtained via~$k$ iterations of~$T$.
Let~$T$ denote a transition kernel with stationary distribution~$\pi(\theta \given \x)$.
Let~$\tilde T$ denote an approximation to~$T$, with stationary distribution~$\tilde \pi$.
When~$T$ is a MH transition kernel, let~$q(\theta' \given \theta)$ denote its proposal,
and let~$\alpha(\theta, \theta')$ denote its acceptance probability,
given current and proposed states~$\theta$ and~$\theta'$, respectively, \ie
\be
\alpha(\theta, \theta') = \min\left(1, \frac{\pi(\theta') q(\theta \given \theta')}{\pi(\theta) q(\theta' \given \theta)}\right).
\ee
In this case, $\tilde T$ is an approximate MH transition kernel with the same proposal~$q(\theta' \given \theta)$,
and let~$\tilde\alpha(\theta, \theta')$ denote its acceptance probability.
Throughout this section, we specify when~$\tilde T$ is constructed from~$T$
via an adaptive stopping rule; some of the results are more general.
Let
\be
\calE(\theta, \theta') = \tilde\alpha(\theta, \theta') - \alpha(\theta, \theta')
\label{eq:accept-error}
\ee
be the acceptance probability error of the approximate MH test, with respect to the exact test.
Finally, let
\be
\Deltamax = \sup_{\theta, \theta'} |\calE(\theta, \theta')|
\label{eq:deltamax}
\ee
be the worst case absolute acceptance probability error. \\

\paragraph{Theoretical results.}
The theorem below provides an upper bound on the total variation 
distance between the stationary distributions of~$T$ and~$\tilde T$;
the bound is linear in~$\Deltamax$.

\begin{theorem}[Total variation bound under uniform geometric ergodicity~\citep{bardenet:2014-subsampling}]
Let $T$ be uniformly geometrically ergodic, \ie there exists an
integer~${h < \infty}$, probability measure~$\nu$ on~${(\Theta, \B(\Theta))}$,
and constant~${\lambda \in [0, 1)}$ such that for all ${\theta \in \Theta}$ and ${B \in \B(\Theta)}$,
\be
T^h(\theta, B) \ge (1 - \lambda) \nu(B),
\ee
and thus there exists a constant $A < \infty$ such that
for all~${\theta \in \Theta}$ and~${k > 0}$,
\be
\tv{T^k(\theta,\cdot)}{\pi} \le A \lambda^{\lfloor k/h \rfloor}.
\ee
It follows that there exists a constant $C < \infty$ such that
for all~${\theta \in \Theta}$ and~${k > 0}$,
\be
\tv{\tilde T^k (\theta, \cdot)}{\tilde \pi}
\le C \left(1 - (1-\epsilon)^h (1-\lambda) \right)^{\lfloor k/h \rfloor}.
\ee
Moreover,
\be
\tv{\pi}{\tilde \pi} \le \frac{A h \Deltamax}{1-\lambda}.
\label{eq:upper-bound}
\ee
\end{theorem}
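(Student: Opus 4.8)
The plan is to prove the three conclusions in order, all of them resting on a single one--step comparison between the exact and approximate Metropolis--Hastings kernels. \textbf{Step 1 (kernel difference).} First I would write each MH kernel in accept/reject form, $T(\theta, d\theta') = \alpha(\theta,\theta')\, q(\theta' \given \theta)\, d\theta' + \rho(\theta)\, \delta_\theta(d\theta')$ with $\rho(\theta) = 1 - \int \alpha(\theta,\theta')\,q(\theta'\given\theta)\,d\theta'$, and likewise $\tilde T$ with $\tilde\alpha,\tilde\rho$. Subtracting, the absolutely continuous part of $T(\theta,\cdot) - \tilde T(\theta,\cdot)$ has density $-\calE(\theta,\theta')\,q(\theta'\given\theta)$ and the atom at $\theta$ carries mass $\int \calE(\theta,\theta')\,q(\theta'\given\theta)\,d\theta'$, so
\[
\sup_{\theta}\, \tv{T(\theta,\cdot)}{\tilde T(\theta,\cdot)} \;\le\; \sup_{\theta}\int \bigl|\calE(\theta,\theta')\bigr|\, q(\theta'\given\theta)\, d\theta' \;\le\; \Deltamax .
\]
This is routine bookkeeping with the structure of the accept/reject mixture.

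\textbf{Step 2 (geometric ergodicity of $\tilde T$).} Here I would use that the adaptive stopping rule is designed so that, conditioned on the current state, proposal, and uniform draw, the approximate accept/reject outcome coincides with the exact one outside an event of probability at most $\epsilon$; marginalizing over the internal randomness yields the pointwise lower bound $\tilde T(\theta,\cdot) \ge (1-\epsilon)\, T(\theta,\cdot)$ as measures. Composing $h$ steps and invoking the hypothesized minorization $T^h(\theta,\cdot) \ge (1-\lambda)\nu$ gives $\tilde T^h(\theta,\cdot) \ge (1-\epsilon)^h (1-\lambda)\, \nu$, i.e.\ a minorization for $\tilde T$ with coefficient $1-(1-\epsilon)^h(1-\lambda)$. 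The standard argument behind uniform ergodicity then produces a constant $C<\infty$ with $\tv{\tilde T^k(\theta,\cdot)}{\tilde\pi} \le C\bigl(1-(1-\epsilon)^h(1-\lambda)\bigr)^{\lfloor k/h\rfloor}$ and, in particular, shows $\tilde\pi$ exists and is unique.

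\textbf{Step 3 (stationary--distribution bound).} Put $\mu \triangleq \tilde\pi - \tilde\pi T = \tilde\pi\tilde T - \tilde\pi T$; this is a signed measure with $\mu(\Theta) = 0$, and averaging Step 1 against $\tilde\pi$ gives $\|\mu\|_{\text{TV}} \le \Deltamax$. Iterating the identity $\tilde\pi = \tilde\pi T + \mu$ gives $\tilde\pi = \tilde\pi T^n + \sum_{j=0}^{n-1}\mu T^j$ for every $n$, and since $\tv{\tilde\pi T^n}{\pi} \le A\lambda^{\lfloor n/h\rfloor} \to 0$ by the assumed ergodicity of $T$, the series converges in total variation to $\tilde\pi - \pi = \sum_{j\ge 0}\mu T^j$. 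Because $\mu$ integrates to zero I can subtract $\pi$ inside each summand, $\mu T^j = \int \mu(d\theta)\,(T^j(\theta,\cdot) - \pi)$, whence $\|\mu T^j\|_{\text{TV}} \le A\lambda^{\lfloor j/h\rfloor}\,\|\mu\|_{\text{TV}} \le A\,\Deltamax\,\lambda^{\lfloor j/h\rfloor}$ (up to the convention-dependent constant); summing the geometric series in blocks of length $h$ yields $\tv{\pi}{\tilde\pi} \le \sum_{j\ge 0} A\,\Deltamax\, \lambda^{\lfloor j/h\rfloor} = A\,\Deltamax\, h/(1-\lambda)$.

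\textbf{Main obstacle.} Step 1 and the geometric summation in Step 3 are mechanical. The delicate points are (i) Step 2 --- extracting the clean comparison $\tilde T(\theta,\cdot) \ge (1-\epsilon)T(\theta,\cdot)$ from the precise definition of the adaptive stopping rule, being careful about which source of randomness the tolerance $\epsilon$ controls and that the bound holds uniformly in $\theta$ --- and (ii) justifying, in Step 3, the passage to the infinite sum, which leans on the ergodicity of $T$ and implicitly on $\tilde\pi$ being well defined (hence on Step 2). I expect (i) to be the real crux, since it is the only place where the actual construction of $\tilde T$ enters rather than the abstract bound $\Deltamax$ on the one--step kernel difference.
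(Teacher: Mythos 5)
Your argument is sound, but note that the survey itself never proves this theorem: it is stated with a citation to \citet{bardenet:2014-subsampling}, and the only proof material in the text is a sketch of a \emph{cousin} result from \citet{korattikara-2014-austerity} that replaces uniform geometric ergodicity of $T$ with the stronger one-step contraction hypothesis $\tv{PT}{\pi} \le \eta\,\tv{P}{\pi}$ for all $P$. Under that hypothesis the stationary bound falls out in one line, $\tv{\tilde\pi}{\pi} = \tv{\tilde\pi \tilde T}{\pi} \le \tv{\tilde\pi \tilde T}{\tilde\pi T} + \tv{\tilde\pi T}{\pi} \le \Deltamax + \eta\,\tv{\tilde\pi}{\pi}$, giving $\Deltamax/(1-\eta)$; your telescoping $\tilde\pi - \pi = \sum_{j \ge 0}\mu T^j$ is exactly what is needed to run the same idea under the weaker geometric-ergodicity assumption, at the price of the factor $h/(1-\lambda)$ in place of $1/(1-\eta)$. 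So your proof is the right one for the theorem as stated, and it matches the argument in the cited source.

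Two remarks on the write-up. First, you correctly identify Step 2 as the crux, and your resolution is the right one: the bound $\Deltamax \le \epsilon$ on acceptance probabilities alone does \emph{not} give $\tilde\alpha \ge (1-\epsilon)\alpha$ (take $\alpha = 1/2$, $\epsilon = 1/10$), so the minorization $\tilde T(\theta,\cdot) \ge (1-\epsilon)\,T(\theta,\cdot)$ genuinely requires the conditional coupling you invoke --- that for each fixed $(\theta,\theta',u)$ the exact decision is deterministic and the approximate decision matches it with probability at least $1-\epsilon$ over the subsampling noise, which is what the union bound over the concentration events delivers; this also handles the rejection atom, since $\tilde\rho(\theta) \ge (1-\epsilon)\rho(\theta)$ by the same conditioning. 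Second, the constants in Step 3 carry convention-dependent factors (whether $A$ is defined with the $\tfrac{1}{2}$-normalized total variation, and whether one passes through $|\mu|(\Theta) = 2\|\mu\|_{\text{TV}}$ when bounding $\|\mu T^j\|_{\text{TV}}$), so the displayed constant $A h \Deltamax/(1-\lambda)$ should be read up to such a factor of two; since the theorem only asserts finite constants this is harmless, but it is worth making explicit if you want the bound verbatim.
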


The upper bound in Equation~\eqref{eq:upper-bound} depends on the worst case acceptance probability error.
For adaptive subsampling schemes, this depends on the choice of adaptive procedure.

We briefly outline a proof from~\citet{korattikara-2014-austerity} of a similar
theorem that exploits a stronger assumption on~$T$.
Specifically, assume~$T$ satisfies the contraction condition,
\be
\tv{PT}{\pi} \le \eta \tv{P}{\pi},
\ee
for all probability distributions~$P$ and some constant~$\eta \in [0, 1)$.
We can combine the contraction condition with a bound on the \emph{one-step error}
between~$T$ and~$\tilde T$, defined as~$\tv{P \tilde T}{P T}$,
to bound~$\tv{\pi}{\tilde \pi}$.
Note that this result does not require~$T$ to be a MH kernel.

For approximate MH with an adaptive stopping rule,~$\Deltamax$, the maximum
acceptance probability error, gives an upper bound on the one-step error.
\citet{korattikara-2014-austerity} show how to calculate an upper bound
on~$\Deltamax$ when using a $t$-test.
Using concentration inequalities leads to a simpler bound:
by construction, the user-defined error tolerance,~$\epsilon$,
directly gives an upper bound on~$\Deltamax$~\citep{bardenet:2014-subsampling}.

Finally, we note that an adaptive subsampling schemes using a concentration inequality
enables an upper bound on the stopping time~\citep{bardenet:2014-subsampling}.

\section{Sub-selecting data via a lower bound on the likelihood}
\label{sec:firefly}

\citet{maclaurin-2014-firefly} introduce \emph{Firefly Monte Carlo} (FlyMC),
an auxiliary variable MCMC sampling procedure
that operates on only subsets of data in each iteration.
At each iteration, the algorithm dynamically selects what data to evaluate
based on the random indicators included in the Markov chain state.
In addition, it generates samples from the exact target posterior rather
than an approximation.
However, FlyMC requires a lower bound on the likelihood with a particular
``collapsible'' structure (essentially an exponential family lower bound) and
is therefore not as generally applicable.
The algorithm's performance depends on the tightness of the bound; it
can achieve impressive gains in performance when model structure allows.

FlyMC samples from an augmented posterior that eliminates potentially
many likelihood factors.
Define
\be
L_n(\theta) = p(x_n \given \theta)
\label{eq:likelihood}
\ee
and let~$B_n(\theta)$ be a strictly positive lower bound on~$L_n(\theta)$,
\ie ${0 < B_n(\theta) \le L_n(\theta)}$.
For each datum, we introduce a binary auxiliary variable~${z_n \in \{0, 1\}}$
conditionally distributed according to a Bernoulli distribution,
\be
p(z_n \given x_n, \theta) = \left[\frac{L_n(\theta) - B_n(\theta)}{L_n(\theta)}\right]^{z_n}
\left[\frac{B_n(\theta)}{L_n(\theta)}\right]^{1-z_n},
\label{eq:aux}
\ee
where the $z_n$ are independent for different $n$.
When the bound is tight, \ie ${B_n(\theta) = L_n(\theta)}$, then~$z_n = 0$ with probability 1.
More generally, a tighter bound results in a higher probability that~$z_n = 0$.
Augmenting the density with ${\z = \{z_n\}_{n=1}^N}$ gives:
\begin{align}
\tilde\pi(\theta, \z \given \x)
&\propto \pi(\theta \given \x) p(\z \given \x, \theta) \nn \\
&= \pi_0(\theta) \prod_{n=1}^N \pi(x_n \given \theta) p(z_n \given x_n, \theta).
\end{align}
Using Equations~\eqref{eq:likelihood} and~\eqref{eq:aux}, we can now write:
\begin{align}
\tilde\pi(\theta, \z \given \x)
&\propto \pi_0(\theta) \prod_{n=1}^N L_n(\theta)
	\left[\frac{L_n(\theta) - B_n(\theta)}{L_n(\theta)}\right]^{z_n}
	\left[\frac{B_n(\theta)}{L_n(\theta)}\right]^{1-z_n} \nn \\
&= \pi_0(\theta) \prod_{n=1}^N (L_n(\theta) - B_n(\theta))^{z_n} B_n(\theta)^{1-z_n} \nn \\
&= \pi_0(\theta) \prod_{n : z_n=1} (L_n(\theta) - B_n(\theta))
	\prod_{n : z_n=0} B_n(\theta).
\label{eq:augmented}
\end{align}
Thus for any fixed configuration of~$\z$ we can evaluate the joint density
using only the likelihood terms $L_n(\theta)$ where ${z_n = 1}$ and the bound
values $B_n(\theta)$ for each ${n=1,2,\ldots,N}$.

While Equation~\eqref{eq:augmented} still involves a product of~$N$ terms, if the
product of the bound terms ${\prod_{n : z_n = 0} B_n(\theta)}$ can be evaluated
without reading each corresponding data point then the joint density can be
evaluated reading only the data~$x_n$ for which ${z_n = 1}$.
In particular, if the form of $B_n(\theta)$ is an exponential family density,
then the product~${\prod_{n : z_n = 0} B_n(\theta)}$ can be evaluated using only
a finite-dimensional sufficient statistic for the data $\{x_n : z_n = 0\}$.
Thus by exploiting lower bounds in the exponential family, FlyMC can reduce the
amount of data required at each iteration of the algorithm while maintaining
the exact posterior as its stationary distribution.
\citet{maclaurin-2014-firefly} show an application of this methodology to
Bayesian logistic regression.

FlyMC presents three main challenges.
The first is constructing a collapsible lower bound, such as an exponential
family, that is sufficiently tight.
The second is designing an efficient implementation.
\citet{maclaurin-2014-firefly} discuss these issues and, in particular,
design a cache-like data structure for managing the relationship between
the~$N$ indicator values and the data.  Finally, it is likely that the inclusion of these auxiliary variables slows the mixing of the Markov chain, but \citet{maclaurin-2014-firefly} only provide empirical evidence that this effect is small relative to the computational savings from using data subsets.

\begin{notes}
\subfile{files/maclaurin-2014-firefly}
\end{notes}

\section{Stochastic gradients of the log joint density}
\label{sec:sgld}

In this section, we review recent efforts to develop MCMC algorithms inspired by
stochastic optimization techniques.
This is motivated by the existence of, first, MCMC algorithms that can be
thought of as the sampling analogues of optimization algorithms, and second,
scalable stochastic versions of these optimization algorithms.

Traditional gradient ascent or descent performs optimization by iteratively
computing and following a local gradient~\citep{optimization:1983-book}.
In Bayesian MAP inference, the objective function is typically a log joint density
and the update rule for gradient ascent is given by
\be
\theta_{t+1} = \theta_t + \frac{\epsilon_t}{2} \biggl(\nabla \log\pi(\theta_t,  \x)\biggr)
\label{eq:gradient-descent}
\ee
for~${t = 1, \dots, \infty}$.
As discussed in Section~\ref{sec:background:sgd}, \emph{stochastic} gradient
descent (SGD) is simple modification of gradient descent that exploits
situations where the objective function decomposes into a sum of many terms.
While the traditional gradient descent update depends on all the data,~\ie
\be
\theta_{t+1} = \theta_t + \frac{\epsilon_t}{2} \left(\nabla\log \pi_0(\theta_t) +
                        \sum_{n=1}^N \nabla \log\pi(x_n \given \theta_t)\right),
\ee
SGD forms an update based on only a data subset,
\be
\theta_{t+1} = \theta_t + \frac{\epsilon_t}{2} \left(\nabla\log \pi_0(\theta_t) +
                        \frac{N}{m} \sum_{n=1}^m \nabla \log\pi(x_n \given \theta_t)\right).
\label{eq:sgd}
\ee
The iterates converge to a local extreme point of the log joint density in the
sense that $\lim_{t \to \infty} \nabla \log \pi(\theta_t | \x) = 0$
if the step size sequence~$\{\epsilon_t\}_{t=1}^\infty$ satisfies
\be
\sum_{t=1}^\infty \epsilon_t = \infty \quad \text{and} \quad \sum_{t=1}^\infty \epsilon_t^2 < \infty.
\label{eq:step-size}
\ee
A common choice of step size sequence is~${\epsilon_t = \alpha(\beta + t)^{-\gamma}}$ for some~${ \beta > 0 }$ and~${ \gamma \in (0.5,1] }$.

\citet{welling-2011-langevin} propose \emph{stochastic gradient Langevin dynamics} (SGLD),
an approximate MCMC procedure that combines SGD with a simple kind of
\emph{Langevin dynamics} (\emph{Langevin Monte Carlo})~\citep{neal:1994-hmc}.
They extend the \emph{Metropolis-adjusted Langevin algorithm} (MALA)
that uses noisy gradient steps to generate proposals for a Metropolis--Hastings
chain~\citep{roberts:1996-langevin}. At iteration~$t$, the MH proposal is
\be
\theta' = \theta_t + \frac{\epsilon}{2}\biggl(\nabla \log \pi(\theta_t, \x)\biggr) + \eta_t,
\label{eq:langevin}
\ee
where the injected noise~$\eta_t \sim \N(0, \epsilon)$ is Gaussian.
Notice that the scale of the noise is~$\sqrt{\epsilon}$,
\ie is constant and set by the gradient step size parameter.
The MALA proposal is thus a stochastic gradient step, constructed by adding noise to
a step in the direction of the gradient.

SGLD modifies the Langevin dynamics in Equation~\eqref{eq:langevin}
by using stochastic gradients based on data subsets, as in Equation~\eqref{eq:sgd},
and requiring that the step size parameter satisfy Equation~\eqref{eq:step-size}.
Thus, at iteration~$t$, the proposal is
\be
\theta' = \theta_t + \frac{\epsilon_t}{2}\left(\nabla\log \pi_0(\theta_t) +
            \frac{N}{m} \sum_{n=1}^m \nabla \log\pi(x_n \given \theta_t)\right) + \eta_t,
\label{eq:sgld}
\ee
where~$\eta_t \sim \N(0, \epsilon_t)$.
Notice that the injected noise decays with the gradient step size parameter, but at a slower rate.
Specifically, if~$\epsilon_t$ decays as~$t^{-\gamma}$, then~$\eta_t$ decays as~$t^{-\gamma/2}$.
As in MALA, the SGLD proposal is a stochastic gradient step, where the noise comes
from subsampling as well as the injected noise.

\begin{algorithm}[t!]
\caption{Stochastic gradient Langevin dynamics (SGLD).}
\label{alg:sgld}
\begin{algorithmic}
\State \textbf{Input:} Initial state $\theta_0$, number of iterations $T$, data $\x$,
grad log prior $\nabla\log \pi_0(\theta)$, grad log likelihood $\nabla \log\pi(x \given \theta)$,  batch size~$m$, step size tuning parameters (\eg $\alpha, \beta, \gamma$)
\State \textbf{Output:} Samples $\theta_1, \dots, \theta_T$
\State $J = N/m$
\For {$\tau$ in $0, \dots, T/J - 1$}
	\State $\x \gets \Permute(\x)$ \Comment{For sampling without replacement}
	\For {$k$ in $0, \dots, J-1$}
		\State $t = \tau J + k$
		\State $\epsilon_t \gets \alpha(\beta + t)^{-\gamma}$ \Comment{Example step size}
		\State $\eta_t \sim \N(0, \epsilon_t)$ \Comment{Draw noise to inject}
		\State $\theta' \gets \theta_t + \dfrac{\epsilon_t}{2}\left(\nabla\log \pi_0(\theta_t) +
            \dfrac{N}{m} \displaystyle\sum_{n=km+1}^{km+m} \nabla \log\pi(x_n \given \theta_t) \right) + \eta_t$
         \State $\theta_{t+1} \gets \theta'$  \Comment{Accept proposal with probability 1}
    \EndFor
\EndFor
\end{algorithmic}
\end{algorithm}

An actual Metropolis--Hastings algorithm would accept or reject the proposal
in Equation~\eqref{eq:sgld} by evaluating the full (log) joint density
at~$\theta'$ and~$\theta_t$, but this is precisely the computation we wish to avoid.
\citet{welling-2011-langevin} observe that as~${\epsilon_t \rightarrow 0}$,
$\theta' \rightarrow \theta_t$ in both Equations~\eqref{eq:langevin} and~\eqref{eq:sgld}.
In this limit, the probability of accepting the proposal converges to~$1$,
but the chain stops completely.
The authors suggest that~$\epsilon_t$ can be decayed to a value that is large enough for
efficient sampling, yet small enough for the acceptance probability to essentially be~$1$.
These assumptions lead to a scheme where~${\epsilon_t > \epsilon_\infty > 0}$, for all~$t$,
and all proposals are accepted, therefore the acceptance probability is never evaluated.
We show this scheme in Algorithm~\ref{alg:sgld}.
Without the stochastic MH acceptance step, however, asymptotic samples are no longer
guaranteed to represent the target distribution.

In more recent work, \citet{patterson:2013-sgrld} apply SGLD to
\emph{Riemann manifold Langevin dynamics}~\citep{girolami:2011-riemann} and
\citet{chen-2014-sghmc} combine the idea of SGD with \emph{Hamiltonian Monte Carlo}~(HMC),
an improved generalization of Langevin dynamics~\citep{neal:1994-hmc,neal-2010-hmc}.
Finally, we note that all the methods in this section
require gradient information that might not be readily computable.

\section{Summary}

In this chapter, we have surveyed three recent approaches to scaling MCMC that operate on subsets of data.
Below and in Table~\ref{table:subsets}, we summarize and compare
adaptive subsampling approaches~(\S\ref{sec:adaptive}),
FlyMC~(\S\ref{sec:firefly}), and SGLD~(\S\ref{sec:sgld}) along several axes.

\begin{table}
\centering
\resizebox{\textwidth}{!}{%
\begin{tabular}{llll}
\toprule
  & \textbf{Adaptive subsampling}
  & \textbf{FlyMC}
  & \textbf{SGLD} \\
\midrule
  \textbf{Approach}
  & Approximate MH test
  & Auxiliary variables
  & Optimization plus noise \\
\midrule
  \textbf{Requirements}
  & Error model, \eg $t$-test
  & Likelihood lower bound
  & Gradients, \ie $\nabla \log \pi(\theta, \x)$ \\
\midrule
  \textbf{Data access pattern}
  & Mini-batches
  & Random
  & Mini-batches \\
\midrule
  \textbf{Hyperparameters}
  & \pbox{5cm}{Batch size, \\ error tolerance per iteration}
  & None
  & \pbox{5cm}{Batch size, \\ error tolerance, \\ annealing schedule} \\
\midrule
  \textbf{Asymptotic bias}
  & Bounded TV
  & None
  & Bounded weak error \\
\bottomrule
\end{tabular}
}
\caption{Summary of recent MCMC methods for Bayesian inference that operate on data subsets.
Error refers to the total variation distance between the stationary distribution of
the Markov chain and the target posterior distribution.}
\label{table:subsets}
\end{table}

\paragraph{Approaches.}
Adaptive subsampling approaches replace the Metropolis--Hastings (MH) test,
a function of all the data, with an approximate test that depends on only a subset.
FlyMC is an auxiliary variable method that stochastically replaces likelihood
computations with a collapsible lower bound.
Stochastic gradient Langevin dynamics (SGLD) replaces gradients in a
Metropolis-adjusted Langevin algorithm (MALA) with stochastic gradients based
on data subsets and eliminates the Metropolis--Hastings test.

\paragraph{Generality, requirements, and assumptions.}
Each of the methods exploits assumptions or additional problem structure.
Adaptive subsampling methods require an error model that accurately represents the
probability that an approximate MH test will disagree with the exact MH test.
A normal model~\citep{korattikara-2014-austerity} or
concentration bounds~\citep{bardenet:2014-subsampling} represent natural choices;
under certain conditions, tighter concentration bounds may apply.
FlyMC requires a strictly positive collapsible lower bound on the likelihood,
essentially an exponential family lower bound, which may not in general be
available.
SGLD requires the log gradients of the prior and likelihood.

\paragraph{Data access patterns.}
While all the methods use subsets of data, their access patterns differ.
Adaptive subsampling and SGLD require randomization to avoid issues of bias due
to data order, but this randomization can be achieved by permuting the data
before each pass and hence these algorithms allow data access that is mostly
sequential.
In contrast, FlyMC operates on random subsets of data determined by the Markov
chain itself, leading to a random access pattern.
However, subsets from one iteration to the next tend to be correlated, and
motivate implementation details such as the proposed cache data structure.

\paragraph{Hyperparameters.}
FlyMC does not introduce additional hyperparameters that require tuning.
Both adaptive subsampling methods and SGLD introduce hyperparameters that can
significantly affect performance.
Both are mini-batch methods, and thus have the batch size as a tuning
parameter.
In adaptive subsampling methods, the stopping criterion is evaluated
potentially more than once before it is satisfied.
This motivates schemes that geometrically increase the amount of data processed
whenever the stopping criterion is not satisfied, which introduces additional
hyperparameters.
Adaptive subsampling methods additionally provide a single tuning parameter
that allows the user to control the error at each iteration.
Finally, since these adaptive methods define an approximate MH test, they
implicitly also require that the user specify a proposal distribution.
For SGLD, the user must specify an annealing schedule for the step size
parameter; in particular, it should converge to a small positive value so that
the injected noise term dominates, while not being too large compared to the
scale of the posterior distribution.

\paragraph{Error.}
FlyMC is exact in the sense that the target posterior distribution is a
marginal of its augmented state space.
The adaptive subsampling approaches and SGLD are approximate methods in that
neither has a stationary distribution equal to the target posterior.
The adaptive subsampling approaches bound the error of the MH test at each
iteration, and for MH transition kernels with uniform ergodicity this one-step
error bound leads to an upper bound on the total variation distance between the
approximate stationary distribution and the target posterior distribution.
The theoretical analysis of SGLD is less clear~\citep{sato2014approximation}.

\section{Discussion}

\paragraph{Data subsets.}
The methods surveyed in this chapter achieve computational gains by using data
subsets in place of an entire dataset of interest.
The adaptive subsampling algorithms~(\S\ref{sec:adaptive}) are more successful when
a small subsample leads to an accurate estimator for the exact MH test's accept/reject decision.
Intuitively, such an estimator is easier to construct when the log posterior
values at the proposed and current states are significantly different.
This tends to be true far away from the mode(s) of the posterior,
\eg in the tails of a distribution that decay exponentially fast,
compared to the area around a mode, which is locally more flat.
Thus, these algorithms tend to evaluate more data when the chain is
in the vicinity of a mode, and less data when the chain is far away
(which tends to be the case for an arbitrary initial condition).
SGLD~(\S\ref{sec:sgld}) exhibits somewhat related behavior.
Recall that SGLD behaves more like SGD when the update rule is dominated by the
gradient term, which tends to be true during the initial execution phase.
Similar to SGD, the chain progresses toward a mode at a rate that depends on
the accuracy of the stochastic gradients.
For a log posterior target, stochastic gradients tend to be more accurate
estimators of true gradients far away from the mode(s).
In contrast, the MAP-tuned version of FlyMC~(\S\ref{sec:firefly}) requires the
fewest data evaluations when the chain is close to the MAP, since by design,
the lower likelihood bounds are tightest there.
Meanwhile, the untuned version of FlyMC tends to exhibit the opposite behavior.

\paragraph{Adaptive proposal distributions.}
The Metropolis-Hastings algorithm requires the user to specify a proposal distribution.
Fixing proposal distribution can be problematic, because the behavior of MH is
sensitive to the proposal distribution and can furthermore change as the chain converges.
A common solution, employed \eg by~\citet{bardenet:2014-subsampling},
is to use an adaptive MH scheme~\citep{haario:2001-adaptive,andrieu:2006-adaptive}.
These algorithms tune the proposal distribution during execution, using information
from the samples as they are generated, in a way that provably converges asymptotically.
Often, it is desirable for the proposal distribution to be close to the target.
This motivates adaptive schemes that fit a distribution to the observed samples
and use this fitted model as the proposal distribution.
For example, a simple online procedure can update the mean~$\mu$ and
covariance~$\Sigma$ of a multidimensional Gaussian model as follows:
\bea
\mu_{t+1} &=& \mu_t + \gamma_{t+1} (\theta_{t+1} - \mu_t) \qquad t \ge 0 \nn \\
\Sigma_{t+1} &=& \Sigma_k + \gamma_{t+1} ((\theta_{t+1} - \mu_t)(\theta_{t+1} - \mu_t)^\top - \Sigma_t), \nn
\eea
where~$t$ indexes the MH iterations and~$\gamma_{t+1}$ controls the speed with
which the adaptation vanishes.
An appropriate choice is~${\gamma_{t} = t^{-\alpha}}$ for~${\alpha \in [1/2, 1)}$.
The tutorial by~\citet{andrieu:2008-adaptive-tutorial} provides a review of
this and other, more sophisticated, adaptive MH algorithms.

\paragraph{Combining methods.}
The subsampling-based methods in this chapter are conceptually modular, and some may be combined.
For example, it might be of interest to consider a `tunable' version of FlyMC that
achieves even greater computational efficiency at the cost of its original exactness.
For example, we might use an adaptive subsampling scheme~(\S\ref{sec:adaptive})
to evaluate only a subset of terms in Equation~\eqref{eq:augmented};
this subset would need to represent terms corresponding to both possible values of~$z_n$.
As another example, \citet{korattikara-2014-austerity} suggest using
adaptive subsampling as a way to `fix up' SGLD.
Recall that the original SGLD algorithm completely eliminates the MH test and
blindly accepts all proposals, in order to avoid evaluating the full posterior.
A reasonable compromise is to instead evaluate a fraction of the data within the
adaptive subsampling framework, since this bounds the per-iteration error.

\paragraph{Unbiased likelihood estimators.}
The estimator in Equation~\eqref{eq:log-likelihood-subset} based on a
data subset is an unbiased estimator for the log likelihood; to be explicit,
\be
\exp\left\{\frac{N}{m} \sum_{n=1}^m \log \pi(x_n^* \given \theta)\right\}
\label{eq:likelihood-biased}
\ee
is not an unbiased estimate of the likelihood.
While it is possible to transform Equation~\eqref{eq:likelihood-biased}
into an unbiased likelihood estimate,
\eg using a Poisson estimator~\citep{wagner:1987,papaspiliopoulos:2009,fearnhead:2010},
it is not necessarily non-negative, which is a requirement to incorporate the estimator into a Metropolis-Hastings algorithm.
In general, we cannot derive estimators that are both unbiased and nonnegative~\citep{jacob:2015-unbiased,lyne:2015-roulette}.
\emph{Pseudo-marginal MCMC} algorithms,\footnote{Pseudo-marginal MCMC is also known as \emph{exact-approximate sampling}.}
first introduced by~\citet{lin:2000-pseudo}, rely on non-negative unbiased likelihood estimators
to construct unbiased MCMC procedures~\citep{andrieu:2009-pseudo}.
In this context, methods for constructing unbiased non-negative likelihood estimators include
importance sampling~\citep{beaumont:2003-is} and particle filters~\citep{andrieu:2010-particle,doucet:2015-unbiased}.

\chapter{Parallel and distributed MCMC}
\label{sec:mcmc-parallel}

MCMC procedures that take advantage of parallel computing resources form
another broad approach to scaling Bayesian inference.
Because the computational requirements of inference often scale with the amount
of data involved, and because large datasets may not even fit on a single
machine, these approaches often focus on data parallelism.
In this chapter we consider several approaches to scaling MCMC by exploiting
parallel computation, either by adapting classical MCMC algorithms or by
defining new simulation dynamics that are inherently parallel.

One way to use parallel computing resources is to run multiple sequential MCMC
algorithms at once.
However, running identical chains in parallel does not reduce the transient
bias in MCMC estimates of posterior expectations, though it would reduce their
variance.
Instead of using parallel computation only to collect more MCMC samples and
thus reduce only estimator variance without improving transient bias, it is
often preferable to use computational resources to speed up the simulation of
the chain itself.
Section~\ref{sec:parallelizing-standard} surveys several methods that use
parallel computation to speed up the execution of MCMC procedures, including
both basic methods and more recent ideas.

Alternatively, instead of adapting serial MCMC procedures to exploit parallel
resources, another approach is to design new approximate algorithms that are
inherently parallel.
Section~\ref{sec:parallelizing-aggregation} summarizes some recent ideas for
simulations that can be executed in a data-parallel manner and have their
results aggregated or corrected to represent posterior samples.

\section{Parallelizing standard MCMC algorithms}
\label{sec:parallelizing-standard}

An advantage to parallelizing standard MCMC algorithms is that they
retain their theoretical guarantees and analyses.
Indeed, a common goal is to produce identical samples under serial and parallel
execution, so that parallel resources enable speedups without introducing new
approximations.
This section first summarizes some basic opportunities for parallelism in MCMC
and then surveys the speculative execution framework for MH.

\subsection{Conditional independence and graph structure}
\label{sec:conditional-independence}

The MH algorithm has a straightforward opportunity for parallelism.
In particular, if the target posterior can be written as
\begin{equation}
  \pi(\theta \given \x) \propto \pi_0(\theta) \pi(\x \given \theta) 
                  = \pi_0(\theta) \prod_{n=1}^{N} \pi(x_n \given \theta),
\end{equation}
then when the number of likelihood terms~$N$ is large it may be beneficial to
parallize the evaluation of the product of likelihoods.
The communication between processors is limited to transmitting the value of
the parameter and the scalar values of likelihood products.
This basic parallelization, which naturally fits in a bulk synchronous parallel
(BSP) computational model, exploits conditional independence in the
probabilistic model, namely that the data are indepdendent given the parameter.

Gibbs sampling algorithms can exploit more fine-grained conditional
independence structure, and are thus a natural fit for graphical models which
express such structure.
Given a graphical model and a corresponding graph coloring with~$K$ colors that
partitions the set of random variables into~$K$ groups, the random variables in
each color group can be resampled in parallel while conditioning on the values in the
other~${K-1}$ groups~\citep{gonzalez:2011-parallel}.
Thus graphical models provide a natural perspective on opportunities for
parallelism.
See Figure~\ref{fig:graph_coloring} for some examples.
\todo{say exponential family sufficient statistics? reduce part of map-reduce}

\begin{figure}
  \centering
  \begin{subfigure}{0.45\textwidth}
    \centering
    \includegraphics[width=0.6\textwidth]{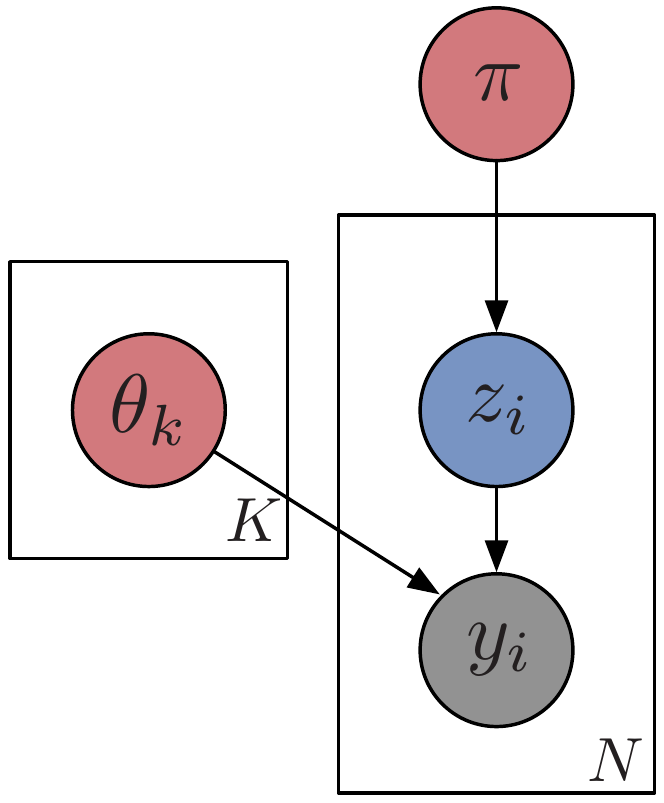}
    \caption{A mixture model}
    \label{fig:graph_coloring:mixture}
  \end{subfigure}
  \begin{subfigure}{0.45\textwidth}
    \centering
    \includegraphics[width=0.8\textwidth]{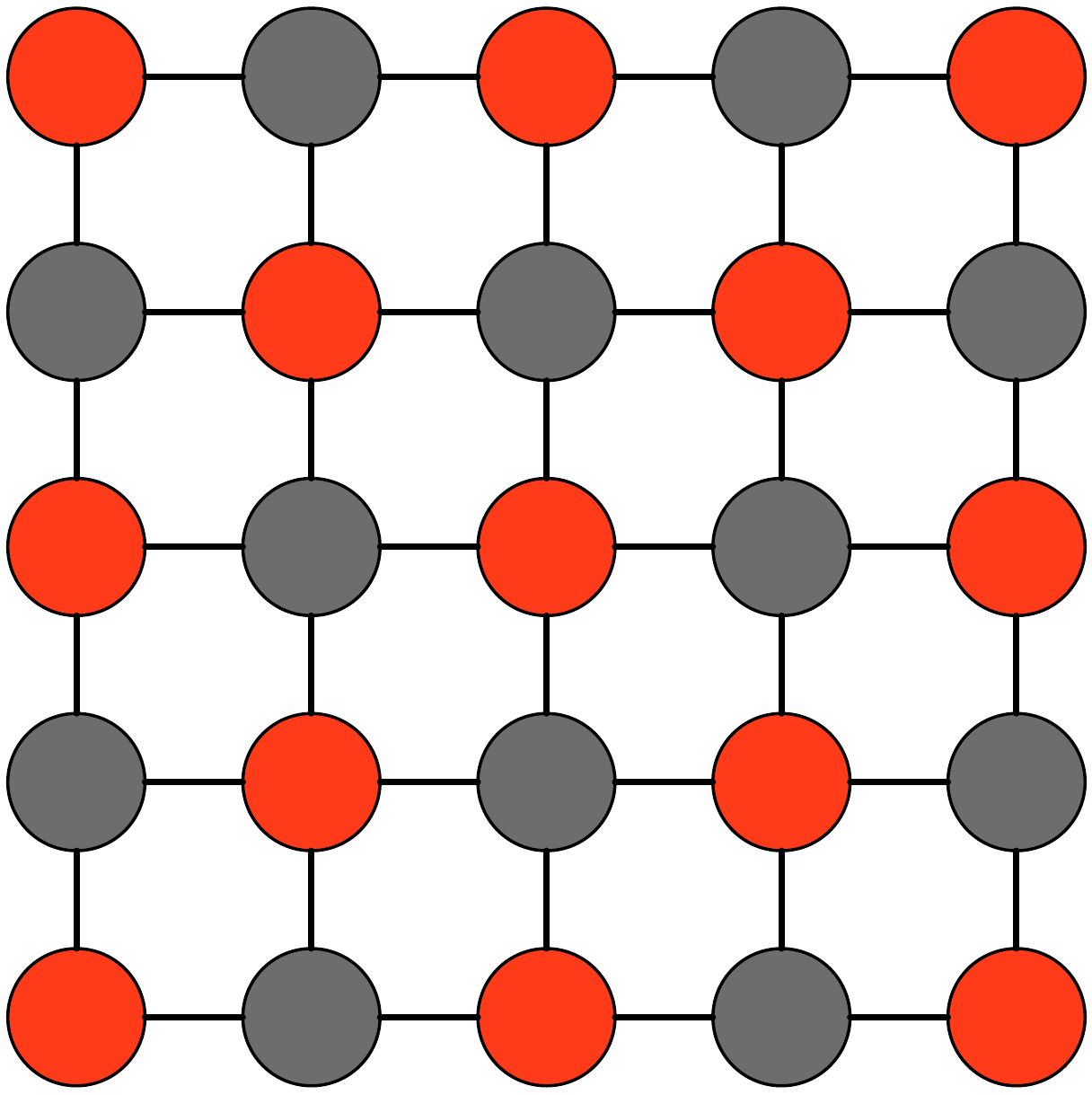}
    \caption{An undirected grid}
    \label{fig:graph_coloring:grid}
  \end{subfigure}
  \caption{Graphical models and graph colorings can expose opportunities for
    parallelism in Gibbs samplers.~\subref{fig:graph_coloring:mixture}  In this
    directed graphical model for a discrete mixture, each label (red)
    can be sampled in parallel conditioned on the parameters (blue) and data
    (gray) and similarly each parameter can be resampled in parallel
    conditioned on the labels.~\subref{fig:graph_coloring:grid} This
    undirected grid has a classical ``red-black'' coloring, emphasizing that
    the variables corresponding to red nodes can be resampled in parallel given
    the values of black nodes and vice-versa.}
  \label{fig:graph_coloring}
\end{figure}

These opportunities for parallelism, while powerful in some cases, are limited
by the fact that they require frequent global synchronization and
communication.
Indeed, at each iteration it is often the case that every element of the
dataset is read by some processor and many processors must mutually
communicate.
The methods we survey in the remainder of this chapter aim to mitigate these
limitations by adjusting the allocation of parallel resources or by reducing
communication.

\subsection{Speculative execution and prefetching}
\label{sec:prefetching}

Another class of parallel MCMC algorithms uses speculative parallel execution
to accelerate individual chains.
This idea is called \emph{prefetching} in some of the literature and appears to
have received only limited attention.

As shown in Algorithm~\ref{mh}, the body of a MH implementation is a
loop containing a single conditional statement and two associated branches.
We can thus view the possible execution paths as a binary tree,
illustrated in Figure~\ref{tree}.
The vanilla version of parallel prefetching speculatively evaluates all paths
in this binary tree on parallel processors~\citep{brockwell-2006-prefetching}.
The sampled path will be exactly one of these, so with~$J$ processors this approach
achieves a speedup of~$\log_2 J$ with respect to single core execution,
ignoring communication and bookkeeping overheads.

\begin{figure}[t!]
\centering%
\resizebox{\columnwidth}{!}{%
 \def\radius {5mm}
 \tikzstyle{state}=[circle, thick, minimum size=\radius, font=\footnotesize]
 \begin{tikzpicture}[->,>=stealth',level/.style={sibling distance = 5cm/#1, level distance = 1.5cm}]
   \draw [white,-] (-6cm,-0.75cm) -- (6cm,-0.75cm);
   \node [state] {$\theta^t$}
   child{ node [state] {$\theta^{t+1}_{0}$}
	 child{ node [state] {$\theta^{t+2}_{{00}}$}
	   child{ node [state] {$\theta^{t+3}_{{000}}$}}
	   child{ node [state] {$\theta^{t+3}_{{001}}$}}
	 }
	 child{ node [state] {$\theta^{t+2}_{{01}}$}
	   child{ node [state] {$\theta^{t+3}_{{010}}$}}
	   child{ node [state] {$\theta^{t+3}_{{011}}$}}
	 }
   }
   child{ node [state] {$\theta^{t+1}_1$}
	 child{ node [state] {$\theta^{t+2}_{{10}}$}
	   child{ node [state] {$\theta^{t+3}_{{100}}$}}
	   child{ node [state] {$\theta^{t+3}_{{101}}$}}
	 }
	 child{ node [state] {$\theta^{t+2}_{{11}}$}
	   child{ node [state] {$\theta^{t+3}_{{110}}$}}
	   child{ node [state] {$\theta^{t+3}_{{111}}$}}
	 }
   }
   ;
 \end{tikzpicture}
}
\caption{Metropolis--Hastings conceptualized as a binary tree.
Nodes at depth~$d$ correspond to iteration~$t + d$, where the root is at depth~0,
and branching to the right/left indicates that the proposal is accepted/rejected.
Each subscript is a sequence, of length~$d$, of~0's and~1's,
corresponding to the history of rejected and accepted proposals
with respect to the root.
}
\label{tree}
\end{figure}

Na\"{i}ve prefetching can be improved by observing that the two branches in
Algorithm~\ref{mh} are not taken with equal probability.
For typical algorithm tunings, the reject branch tends to be more probable; a
classic result for the optimal MH acceptance rate in the Gaussian case is
0.234~\citep{roberts-1997-accept}, so prefetching scheduling policies can be
built around the expectation of rejection.
\citet{angelino:2014-prefetching} provides a thorough review of these
strategies.

\emph{Parallel predictive prefetching} makes more efficient use of parallel resources
by dynamically predicting the outcome of each MH test~\citep{angelino:2014-prefetching}.
In the case of Bayesian inference, these predictions can be constructed in the
same manner as the approximate MH algorithms based on subsets of data,
as discussed in Section~\ref{sec:adaptive-stopping-rule}.
Furthermore, these predictions can be made in the context of an error model,
\eg with the concentration inequalities used by~\citet{bardenet:2014-subsampling}.
This yields a straightforward and rational mechanism for allocating parallel
cores to computations most likely to fall along the true execution path.

Algorithms~\ref{prefetching:master} and~\ref{prefetching:worker} sketch
pseudocode for an implementation of parallel predictive prefetching
that follows a master-worker pattern.
See~\citet{angelino-2014-thesis} for a formal description of the algorithm
and implementation details.

\begin{algorithm}[t]
  \caption{Parallel predictive prefetching master process}
  \label{prefetching:master}
  \begin{algorithmic}
      \Repeat
          \State Receive message from worker $j$
          \If {worker $j$ wants work}
              \State {Find highest utility node~$\rho$ in tree with work left to do}
              \State {Send worker $j$ the computational state of~$\rho$}
          \ElsIf {message contains state~$\theta_\rho$ at proposal node~$\rho$}
              \State {Record state~$\theta_\rho$ at~$\rho$}
          \ElsIf {message contains update at~$\rho$}
              \State {Update estimate of~$\pi(\theta_\rho \given \x)$ at~$\rho$}
            \For {node~$\alpha$ in $\{\rho$ and its descendants$\}$}
                \State Update utility of~$\alpha$
                  \If {utility of~$\alpha$ below threshold and worker~$k$ at~$\alpha$}
                      \State {Send worker~$k$ message to stop current computation}
                  \EndIf
            \EndFor
              \If {posterior computation at~$\rho$ and its parent complete}
                  \State {Know definitively whether to accept or reject~$\theta_\rho$}
                  \State {Delete subtree corresponding to branch not taken}
                  \If {node~$\rho$ is the root's child}
                      \Repeat
                          \State Trim old root so that new root points to child
                          \State Output state at root, the next state in the chain
                      \Until {posterior computation at root's child incomplete}
                  \EndIf
              \EndIf
          \EndIf
      \Until {master has output $T$ Metropolis--Hastings chain states}
      \State Terminate all worker processes
  \end{algorithmic}
\end{algorithm}

\begin{algorithm}[t]
  \caption{Parallel predictive prefetching worker process}
  \label{prefetching:worker}
  \begin{algorithmic}
    \Repeat
        \State Send master request for work
        \State Receive work assignment at node~$\rho$ from master
        \If {the corresponding state~$\theta_\rho$ has not yet been generated}
            \State Generate proposal~$\theta_\rho$
        \EndIf
        \Repeat
            \State Advance the computation of~$\pi(\theta_\rho \given \x)$
            \State Send update at~$\rho$ to master
            \If {receive message to stop current computation}
                \State \textbf{break}
            \EndIf
        \Until {computation of~$\pi(\theta_\rho \given \x)$ is complete}
    \Until {terminated by master}
  \end{algorithmic}
\end{algorithm}

\section{Defining new data-parallel dynamics}
\label{sec:parallelizing-aggregation}

In this section we survey two ideas for performing inference using new
data-parallel dynamics.
These algorithms define new dynamics in the sense that their iterates do not
form ergodic Markov chains which admit the posterior distribution as an
invariant distribution, and thus they do not qualify as classical MCMC schemes.
Instead, while some of the updates in these algorithms resemble standard MCMC
updates, the overall dynamics are designed to exploit parallel and
distributed computation.
A unifying theme of these new methods is to perform local computation on data
while controlling the amount of global synchronization or communication.

One such family of ideas involves the definition of \emph{subposteriors},
defined using only subsets of the full dataset.
Inference in the subposteriors can be performed in parallel, and the results
are then globally aggregated into an approximate representation of the full
posterior.
Because the synchronization and communication costs---as well as the
approximation quality---are determined by the aggregation step, several such
aggregation procedures have been proposed.
In Section~\ref{sec:pmcmc:subposteriors} we summarize some of these proposals.

Another class of data-parallel dynamics does not define independent
subposteriors but instead, motivated by Gibbs sampling, focuses on simulating
from local conditional distributions with out-of-date information.
In standard Gibbs sampling, updates can be parallelized in models with
conditional independence structure (Section~\ref{sec:parallelizing-standard}),
but without such structure the Gibbs updates may depend on the full
dataset and all latent variables, and thus must be performed sequentially.
These sequential updates can be especially expensive with large or distributed
datasets.
A natural approximation to consider is to run the same local Gibbs updates in
parallel with out-of-date global information and only infrequent communication.
While such a procedure loses the theoretical guarantees provided by standard Gibbs
sampling analysis, some empirical and theoretical results are promising.
We refer to this broad class of methods as \emph{Hogwild} Gibbs algorithms, and we
survey some particular algorithms and analyses in Section~\ref{sec:pmcmc:hogwild}.

\subsection{Aggregating from subposteriors}
\label{sec:pmcmc:subposteriors}

Suppose we want to divide the evaluation of the posterior across~$J$ parallel cores.
We can divide the data into~$J$ partition elements,~${ \x^{(1)}, \dots, \x^{(J)}} $, also called \emph{shards},
and factor the posterior into~$J$ corresponding \emph{subposteriors}, as
\be
\pi(\theta \given \x) = \prod_{j=1}^J \pi^{(j)}(\theta \given \x^{(j)}),
\ee
where
\be
\pi^{(j)}(\theta \given \x^{(j)})
= \pi_0(\theta)^{1/J} \prod_{x \in \x^{(j)}} \pi(x \given \theta),
\quad j = 1, \dots, J.
\label{eq:subposterior}
\ee
The contribution from the original prior is down-weighted so that
the posterior is equal to the product of the~$J$ subposteriors,
\ie ${\pi(\theta \given \x) = \prod_{j=1}^J \pi^{(j)}(\theta \given \x^{(j)})}$.
Note that a subposterior is not the same as the posterior formed from
the corresponding partition, \ie
\be
\pi^{(j)}(\theta \given \x^{(j)})
\neq \pi(\theta \given \x^{(j)})
= \pi_0(\theta) \prod_{x \in \x^{(j)}} \pi(x \given \theta).
\ee

\subsubsection{Embarrassingly parallel consensus of subposteriors}
\label{sec:ep-consensus}

Once a large dataset has been partitioned across multiple machines, a natural alternative
is to try running MCMC inference on each partition element separately and in parallel.
This yields samples from each subposterior in Equation~\ref{eq:subposterior},
but there is no obvious choice for how to combine them in a coherent fashion to
form approximate samples of the full posterior.
In this section, we survey various proposals for forming such a \emph{consensus}
solution from the subposterior samples.
Algorithm~\ref{alg:consensus} outlines the structure of consensus strategies for
embarrassingly parallel posterior sampling.
This terminology, used by~\citet{huang-2005-sampling} and~\citet{scott-2013-consensus},
invokes related notions of consensus, notably those that have existed for decades in the
optimization literature on data-parallel algorithms in decentralized or distributed settings.
We discuss this topic briefly in Section~\ref{sec:weierstrass}.

Below, we present two recent consensus strategies for combining
subposterior samples, through weighted averaging and density estimation, respectively.
The earlier report by~\citet{huang-2005-sampling} proposes four consensus
strategies, based either on normal approximations or importance resampling;
the authors focus on Gibbs sampling for hierarchical models and do not
evaluate any actual parallel implementations.
Another consensus strategy is the recently proposed \emph{variational consensus
Monte Carlo} (VCMC) algorithm, which casts the consensus problem within a
variational Bayes framework~\citep{rabinovich:2015-vcmc}.

\begin{algorithm}[t!]
\caption{Embarrassingly parallel consensus of subposteriors}
\label{alg:consensus}
\begin{algorithmic}
\State \textbf{Input:} Initial state~$\theta_0$, number of samples~$T$,
data partitions ${\x^{(1)}, \dots, \x^{(J)}}$,
subposteriors ${\pi^{(1)}(\theta \given \x^{(1)}), \dots, \pi^{(J)}(\theta \given \x^{(J)})}$
\State \textbf{Output:} Approximate samples $\hat\theta_1, \dots, \hat\theta_T$
\For {$j = 1, 2, \dots, J$ in parallel}
	\State Initialize $\theta_{j,0}$
	\For {$t = 1, 2, \dots, T$}
		\State Simulate MCMC sample $\theta_{j,t}$ from subposterior $\pi^{(j)}(\theta \given \x^{(j)})$
	\EndFor
	\State Collect $\theta_{j,1}, \dots, \theta_{j,T}$
\EndFor

\State $\hat\theta_1, \dots, \hat\theta_T \gets $ \Call{ConsensusSamples}{$\{\theta_{j,1}, \dots, \theta_{j,T}\}_{j=1}^J$}
\end{algorithmic} 
\end{algorithm}

Throughout this section, Gaussian densities provide a useful reference point
and motivate some of the consensus strategies.
Consider the jointly Gaussian model
\begin{align}
    \theta &\sim \mathcal{N}(0,\Sigma_0)
    \\
    \x^{(j)} \given \theta &\sim \mathcal{N}(\theta,\Sigma_j).
    \label{eq:gaussian-likelihood}
\end{align}
The joint density is:
\begin{align}
p(\theta, \x) 
 &= p(\theta) \prod_{j=1}^J p(\x^{(j)} \given \theta) \nn \\
&\propto \exp\left\{-\frac{1}{2} \theta^\top \Sigma_0^{-1} \theta\right\}
	\prod_{j=1}^J \exp\left\{-\frac{1}{2} \left(\x^{(j)} - \theta\right)^\top \Sigma_J^{-1} \left(\x^{(j)} - \theta\right) \right\} \nn \\
	&\propto \exp\left\{-\frac{1}{2} \theta^\top \left(\Sigma_0^{-1} + \sum_{j=1}^J \Sigma_j^{-1}\right) \theta
	+ \left(\sum_{j=1}^J \Sigma_j^{-1} \x^{(j)} \right)^\top \theta \right\} \nn \, .
\end{align}
Thus the posterior is Gaussian:
\be
\theta \given \x \sim \mathcal{N}(\mu, \Sigma),
\ee
where
\begin{align}
    \Sigma &= \left( \Sigma_0^{-1} + \sum_{j=1}^J \Sigma_j^{-1} \right)^{-1}
    \label{eq:gaussian-sigma}
    \\
    \mu &= \Sigma \left( \sum_{j=1}^J \Sigma_j^{-1} \x^{(j)} \right).
    \label{eq:gaussian-mu}
\end{align}
To arrive at an expression for the subposteriors, we begin by factoring the
joint distribution into an appropriate product:
\be
p(\theta, \x) \propto \prod_{j=1}^J f_j(\theta),
\ee
where
\begin{align}
f_j(\theta)  &= p(\theta)^{1/J} p(\x^{(j)} \given \theta) 
  \nn \\
&= \exp\left\{-\frac{1}{2} \theta^\top (\Sigma_0^{-1} / J) \theta \right\}
	\exp\left\{-\frac{1}{2} \left(\x^{(j)} - \theta\right)^\top \Sigma_j^{-1} \left(\x^{(j)} - \theta\right) \right\} \nn \\
&\propto \exp\left\{-\frac{1}{2} \theta^\top \left(\Sigma_0^{-1} /J + \Sigma_j^{-1}\right) \theta 
	+ \left(\Sigma_j^{-1} \x^{(j)} \right)^\top \theta \right\}. \nn
\end{align}
Thus the subposteriors are also Gaussian:
\begin{align}
    \theta_j \sim  \mathcal{N}\left(\tilde\mu_j, \tilde\Sigma_j \right) \propto f_j(\theta)
    \label{eq:gaussian-sub-post}
\end{align}
where
\begin{align}
\tilde\Sigma_j &= \left(\Sigma_0^{-1} / J + \Sigma_j^{-1} \right)^{-1}
\label{eq:tilde-sigma}
\\
\tilde\mu_j &= \left(\Sigma_0^{-1} / J + \Sigma_j^{-1} \right)^{-1} \left(\Sigma_j^{-1} \x^{(j)} \right).
\label{eq:tilde-mu}
\end{align}

\subsubsection{Weighted averaging of subposterior samples}
\label{sec:subposterior-avg}

One approach is to combine the subposterior samples via weighted averaging~\citep{scott-2013-consensus}.
For simplicity, we assume that we obtain~$T$ samples in $\reals^d$ from each 
subposterior, and let~$\{\theta_{j,t}\}_{t=1}^T$ denote the samples from the~$j$th subposterior.
The goal is to construct~$T$ \emph{consensus posterior} samples~$\{\hat\theta_t\}_{t=1}^T$,
that (approximately) represent the full posterior, from the~$JT$ subposterior samples,
where each~$\hat\theta_t$ combines subposterior samples~$\{\theta_{j,t}\}_{j=1}^J$.
We associate with each subposterior~$j$ a matrix~${W_j \in \reals^{d \times d}}$
and assume that each consensus posterior sample is a weighted\footnote{Our notation differs slightly from that of~\citet{scott-2013-consensus} in that our weights~$W_j$ are normalized.} average:
\be
\hat\theta_t = \sum_{j=1}^J W_j \theta_{j,t}\, .
\ee
The challenge now is to design an appropriate set of weights.

Following~\citet{scott-2013-consensus}, we consider the special case of
Gaussian subposteriors, as in Equation~\ref{eq:gaussian-sub-post}.
Our presentation is slightly different, as we also account for the effect of
having a prior.
We also drop the subscript $t$ from our notation for simplicity.
Let~$\{\theta_j\}_{j=1}^J$ be a set of draws from the~$J$ subposteriors. 
Each~$\theta_j$ is an independent Gaussian and thus~$\hat\theta = \sum_{j=1}^J
W_j \theta_j$ is Gaussian.
From Equation~\ref{eq:tilde-mu}, its mean is
\begin{align}
\E[\hat\theta] = \sum_{j=1}^J W_j \E[\theta_j] &= \sum_{j=1}^J W_j \tilde\mu_j \nn \\
&= \sum_{j=1}^J W_j \left(\Sigma_0^{-1} / J + \Sigma_j^{-1} \right)^{-1} \left(\Sigma_j^{-1} \x^{(j)} \right).
\end{align}
Thus, if we choose
\be
W_j = \Sigma \left(\Sigma_0^{-1}/J + \Sigma_j^{-1}\right)
=  \left( \Sigma_0^{-1} + \sum_{j=1}^J \Sigma_j^{-1} \right)^{-1} \!\!\!\!\!\left(\Sigma_0^{-1} / J + \Sigma_j^{-1}\right)
\label{eq:weights}
\ee
where~$\Sigma$ is the posterior covariance in Equation~\ref{eq:gaussian-sigma}, then
\be
\E[\hat\theta] = \Sigma \left(\sum_{j=1}^J \Sigma_j^{-1}  \x^{(j)}\right) = \mu,
\ee
where~$\mu$ is the posterior mean in Equation~\ref{eq:gaussian-mu}.
A similar calculation shows that $\textup{Cov}(\hat\theta) = \Sigma$.

Thus for the Gaussian model, $\hat\theta$ is distributed according to the
posterior distribution, indicating that Equation~\ref{eq:weights} gives the
appropriate weights. Each weight matrix~$W_j$ is a function of~$\Sigma_0$, the
prior covariance, and the subposterior covariances~$\{\Sigma_j\}_{j=1}^J$.
We can form a Monte Carlo estimate of each~$\Sigma_j$ using the empirical
sample covariance~$\bar\Sigma_j$.
Algorithm~\ref{alg:consensus-weighted} summarizes this consensus approach with
weighted averaging.
While this weighting is optimal in the Gaussian setting,
\citet{scott-2013-consensus} shows it to be effective in some non-Gaussian models.
\citet{scott-2013-consensus} also suggests weighting each dimension of
a sample~$\theta$ by the reciprocal of its marginal posterior variance,
effectively restricting the weight matrices $W_j$ to be diagonal.

\begin{algorithm}[t!]
\caption{Consensus of subposteriors via weighted averaging.}
\label{alg:consensus-weighted}
\begin{algorithmic}
\State \textbf{Parameters:} Prior covariance~$\Sigma_0$
\Function{ConsensusSamples}{$\{\theta_{j,1}, \dots, \theta_{j,T}\}_{j=1}^J$}
	\For {$j = 1, 2, \dots, J$}
		\State $\bar\Sigma_j \gets $ Sample covariance of $\{\theta_{j,1}, \dots, \theta_{j,T}\}$
	\EndFor
	\State $\Sigma \gets \left( \Sigma_0^{-1} + \displaystyle\sum_{j=1}^J \bar\Sigma_j^{-1} \right)^{-1}$
	\For {$j = 1, 2, \dots, J$}
		\State $W_j \gets \Sigma \left(\Sigma_0^{-1} / J + \bar\Sigma_j^{-1}\right)$ \Comment{Compute weight matrices}
	\EndFor
	\For {$t = 1, 2, \dots, T$}
		\State $\hat\theta_t \gets \displaystyle\sum_{j=1}^J W_j \theta_{j,t}$ \Comment{Compute weighted averages}
	\EndFor
	\State \Return $\hat\theta_1, \dots, \hat\theta_T$
\EndFunction
\end{algorithmic}
\end{algorithm}

\begin{notes}
\subfile{files/scott-2013-consensus}
\end{notes}

\subsubsection{Subposterior density estimation}
\label{sec:subposterior-kde}

Another consensus strategy relies on density estimation~\citep{xing-2014-embarrassing}.
First, use the subposterior samples to separately fit a density
estimator,~$\tilde\pi^{(j)}(\theta \given \x^{(j)})$, to each subposterior.
The product of these density estimators then represents a density estimator
for the full posterior target, \ie
\be
\pi(\theta \given \x) \approx \tilde\pi(\theta \given \x) = \prod_{j=1}^J \tilde\pi^{(j)}(\theta \given \x^{(j)}) \, .
\ee
Finally, one can sample from this posterior density estimator using MCMC;
ideally, this density is straightforward to obtain and sample.
In general, however, density estimation can yield complex models that are
not amenable to efficient sampling.

\citet{xing-2014-embarrassing} explore three density estimation approaches
of various complexities.
Their first approach assumes a parametric model and is therefore approximate.
Specifically, they fit a Gaussian to each set of subposterior samples, yielding
\be
\tilde\pi(\theta \given \x) = \prod_{j=1}^J \N(\bar\mu_j, \bar\Sigma_j),
\ee
where $\bar\mu_j$ and $\bar\Sigma_j$ are the empirical mean and covariance, respectively,
of the samples from the~$j$th subposterior.
This product of Gaussians simplifies to a single Gaussian~$\N(\hat\mu_J, \hat\Sigma_J)$, where
\begin{align}
\hat\Sigma_J &= \left( \sum_{j=1}^J \bar\Sigma_j^{-1} \right)^{-1}
\label{eq:xing-sigma} \\
\hat\mu_J &= \hat\Sigma_J  \left( \sum_{j=1}^J \bar\Sigma_j^{-1} \bar\mu_j \right).
\label{eq:xing-mu}
\end{align}
These parameters are straightforward to compute and the overall density estimate
can be sampled with reasonable efficiency and even in parallel, if desired.
Algorithm~\ref{alg:consensus-gaussians} summarizes this consensus strategy based on fits to Gaussians.

\begin{algorithm}[t!]
\caption{Consensus of subposteriors via fits to Gaussians.}
\label{alg:consensus-gaussians}
\begin{algorithmic}
\Function{ConsensusSamples}{$\{\theta_{j,1}, \dots, \theta_{j,T}\}_{j=1}^J$}
	\For {$j = 1, 2, \dots, J$}
		\State $\bar\mu_j \gets $ Sample mean of $\{\theta_{j,1}, \dots, \theta_{j,T}\}$
		\State $\bar\Sigma_j \gets $ Sample covariance of $\{\theta_{j,1}, \dots, \theta_{j,T}\}$
	\EndFor
	\State $\hat\Sigma_J \gets \left( \displaystyle\sum_{j=1}^J \bar\Sigma_j^{-1} \right)^{-1}$ \Comment{Covariance of product of Gaussians}
	\State $\hat\mu_J \gets \hat\Sigma_J  \left( \displaystyle\sum_{j=1}^J \bar\Sigma_j^{-1} \bar\mu_j \right)$ \Comment{Mean of product of Gaussians}
	\For {$t = 1, 2, \dots, T$}
		\State $\hat\theta_t \sim \N(\hat\mu_J, \hat\Sigma_J)$ \Comment{Sample from fitted Gaussian}
	\EndFor
	\State \Return $\hat\theta_1, \dots, \hat\theta_T$
\EndFunction
\end{algorithmic}
\end{algorithm}

In the case when the model is jointly Gaussian, the parametric density estimator
we form is~$\N(\hat\mu_J, \hat\Sigma_J)$, with~$\hat\mu_J$ and~$\hat\Sigma_J$
given in Equations~\ref{eq:xing-mu} and~\ref{eq:xing-sigma}, respectively.
In this special case, the estimator exactly represents the Gaussian posterior.
However, recall that we could have instead written the exact posterior directly
as~$\mathcal{N}(\mu, \Sigma)$, where~$\mu$ and~$\Sigma$ are in
Equations~\ref{eq:gaussian-mu} and~\ref{eq:gaussian-sigma}, respectively.
Thus, computing the exact posterior is more or less as expensive as computing the
density estimator, \ie $J$ local matrix inversions (or corresponding linear system solves).

The second approach proposed by \citet{xing-2014-embarrassing} is to use
a nonparametric kernel density estimate (KDE) for each subposterior.
Suppose we obtain~$T$ samples~$\{\theta_{j,t}\}_{t=1}^T$ from the~$j$th subposterior,
then its KDE with bandwidth parameter~$h$ has the following functional form:
\be
\tilde\pi^{(j)}(\theta \given \x^{(j)}) = \frac{1}{T}\sum_{t=1}^T \frac{1}{h^d} K\left(\frac{\|\theta-\theta_{j,t}\|}{h}\right),
\ee
\ie the KDE is a mixture of $T$ kernels, each centered at one of the samples.
If we use~$T$ samples from each subposterior, then the density estimator for the full posterior
is a complicated function with~$T^J$ terms, since it the a product of~$J$ such mixtures,
and is therefore very challenging to sample from.
\citet{xing-2014-embarrassing} use a Gaussian KDE for each subposterior,
and from this derive a density estimator for the full posterior that is a mixture of~$T^J$
Gaussians with unnormalized mixture weights.
They also consider a third, semi-parametric approach to density estimation given by the
product of a parametric (Gaussian) model and a nonparametric (Gaussian KDE) correction.
As the number of samples~${T \rightarrow \infty}$, the nonparametric and semi-parametric
density estimates exactly represent the subposterior densities and are therefore asymptotically exact.
Unfortunately, their complex mixture representations grow exponentially in size,
rendering them somewhat unwieldy in practice.

\begin{notes}
\subfile{files/xing-2014-embarrassing}
\end{notes}

\subsubsection{Weierstrass samplers}
\label{sec:weierstrass}

The consensus strategies surveyed so far are embarrassingly parallel.
These methods obtain samples from each subposterior independently and in
parallel, and from these attempt to construct samples that (approximately)
represent the posterior post-hoc.
The methods in this section proceed similarly, but introduce some amount of
information sharing between the parallel samplers.
This communication pattern is reminiscent of the
alternating direction method of multipliers (ADMM) algorithm for
data-parallel convex optimization;
for a detailed treatment of ADMM, see the review by~\citet{boyd:2011-admm}.

Weierstrass samplers~\citep{dunson-2013-weierstrass} are named for the
Weierstrass transform:
\be
W_h f(\theta) =
\int_{-\infty}^{\infty} \frac{1}{\sqrt{2\pi}h} \exp\left\{-\frac{(\theta-\xi)^2}{2h^2}\right\} f(\xi)d\xi \, ,
\label{eq:weierstrass-transform}
\ee
which was introduced by~\citet{weierstrass:1885}.
The transformed function~${W_h f(\theta)}$ is the convolution of a
one-dimensional function~$f(\theta)$ with a Gaussian density of standard
deviation~$h$, and so converges pointwise to~$f(\theta)$ as~$h \rightarrow 0$,
\[
\lim_{h \rightarrow 0} W_h f(\theta) =
\int_{-\infty}^{\infty} \delta(\theta - \xi) f(\xi) d\xi = f(\theta),
\]
where~$\delta(\tau)$ is the Dirac delta function.
For $h$ > 0, $W_h f(\theta)$ can be thought of as a smoothed approximation
to~$f(\theta)$.
Equivalently, if $f(\theta)$ is the density of a random variable $\theta$, then
$W_h f(\theta)$ is the density of a noisy measurement of $\theta$, where the
noise is an additive Gaussian with zero mean and standard deviation $h$.

\citet{dunson-2013-weierstrass} analyzes a more general class of Weierstrass
transforms by defining a multivariate version and also allowing non-Gaussian kernels:
\[
W_h^{(K)} f(\theta_1, \dots, \theta_d) =
\int_{-\infty}^\infty f(\xi_1, \dots, \xi_d) \prod_{i=1}^d h_i^{-1}
                                K_i\left(\frac{\theta_i - \xi_i}{h_i}\right) d\xi_i.
\]
For simplicity, we restrict our attention to the one-dimensional Weierstrass
transform.

Weierstrass samplers use Weierstrass transforms on subposterior densities to
define an augmented model.
Let~$f_j(\theta)$ denote the~$j$-th subposterior,
\begin{align}
  f_j(\theta) &= \pi^{(j)}(\theta \given \x^{(j)})
      = \pi_0(\theta)^{1/J} \prod_{x \in \x^{(j)}} \pi(x \given \theta),
  \label{eq:weierstrass-subposterior}
\end{align}
so that the full posterior can be approximated as
\begin{align}
\pi(\theta \given \x) \propto \prod_{j=1}^J f_j(\theta)
&\approx \prod_{j=1}^J W_{h} f_j(\theta) \nn \\
&= \prod_{j=1}^J \int \frac{1}{\sqrt{2\pi} h}
   \exp\left\{-\frac{(\theta-\xi_j)^2}{2h^2}\right\} f_j(\xi_j)d\xi_j \nn \\
&\propto \int \prod_{j=1}^J \exp\left\{-\frac{(\theta-\xi_j)^2}{2h^2}\right\} f_j(\xi_j)d\xi_j\, .
\label{eq:weierstrass}
\end{align}
The integrand of~\eqref{eq:weierstrass} defines the joint density of an
augmented model that includes the $\xi = \{\xi_j\}_{j=1}^J$ as auxiliary variables:
\be
\pi_h( \theta, \xi \given \x) \propto
\prod_{j=1}^J \exp\left\{-\frac{(\theta-\xi_j)^2}{2h^2}\right\} f_j(\xi_j).
\ee
The posterior of interest can then be approximated by the marginal distribution
of~$\theta$ in the augmented model,
\be
\int \pi_h(\theta, \xi \given \x) d\xi \approx \pi(\theta \given \x)
\ee
with pointwise equality in the limit as~${h \to 0}$.
Thus by running MCMC in the augmented model, producing Markov chain samples of
both~$\theta$ and~$\xi$, we can generate approximate samples of the posterior.
Furthermore, the augmented model is more amenable to parallelization due to its
conditional independence structure: conditioned on~$\theta$, the subposterior
parameters~$\xi$ are rendered independent.

\begin{figure}[p]
  \centering

  \begin{subfigure}{\textwidth}
    \centering
    \includegraphics[scale=0.5]{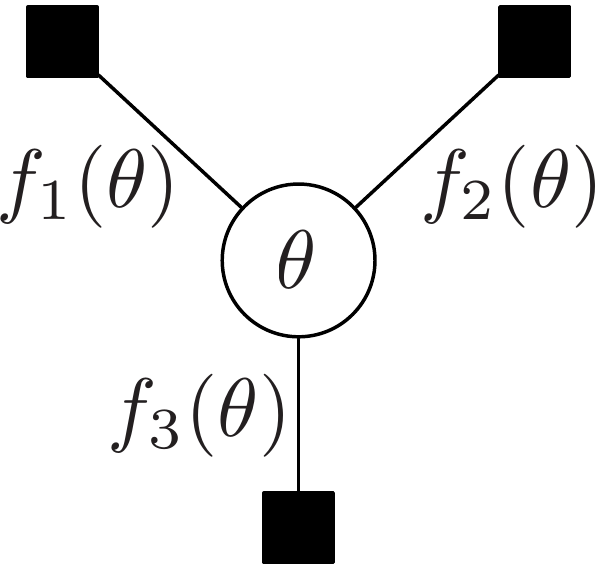}
    \caption{Factor graph for $\pi(\theta \given \x)$ in terms of
    subposterior factors $f_j(\theta)$.}
    \label{fig:weierstrass_graph_exact}
  \end{subfigure}

  \vspace{2em}

  \begin{subfigure}{\textwidth}
    \centering
    \includegraphics[scale=0.5]{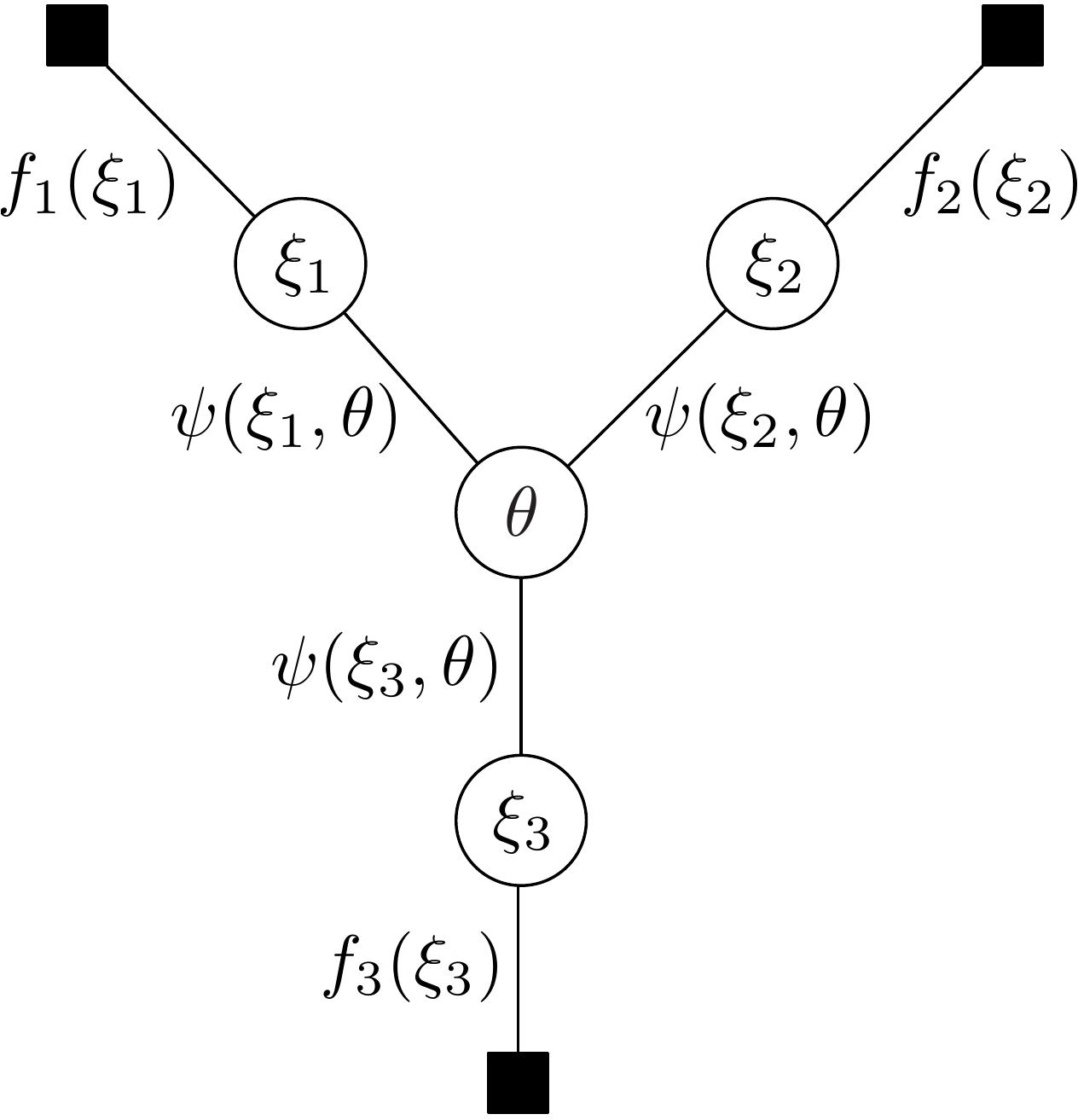}
    \caption{Factor graph for the Weierstrass augmented model $\pi(\theta, \xi
    \given \x)$.}
    \label{fig:weierstrass_graph_approx}
  \end{subfigure}

  \caption{Factor graphs defining the augmented model of the Weierstrass
  sampler.}
\end{figure}

The same augmented model construction can be motivated without explicit
reference to the Weierstrass transform of densities.
Consider the factor graph model of the posterior in
Figure~\ref{fig:weierstrass_graph_exact}, which represents the definition of
the posterior in terms of subposterior factors,
\begin{equation}
  \pi(\theta \given \x) \propto \prod_{j=1}^J f_j(\theta).
\end{equation}
This model can be equivalently expressed as a model where each subposterior
depends on an exact local copy of $\theta$.
That is, writing $\xi_j$ as the local copy of $\theta$ for subposterior $j$,
the posterior is the marginal of a new augmented model given by
\begin{equation}
  \pi(\theta, \xi \given \x) \propto \prod_{j=1}^J f_j(\xi_j) \delta(\xi_j - \theta) \, .
\end{equation}
This new model can be represented by the factor graph in
Figure~\ref{fig:weierstrass_graph_approx}, with potentials $\psi(\xi_j, \theta) =
\delta(\xi_j - \theta)$.
Finally, rather than taking the~$\xi_j$ to be exact local copies of~$\theta$,
we can instead relax them to be noisy Gaussian measurements of~$\theta$:
\begin{align}
  \pi_h(\theta, \xi \given \x) &\propto \prod_{j=1}^J f_j(\xi_j) \psi_h(\xi_j, \theta) \\
  \psi_h(\xi_j, \theta) &= \exp \left\{ - \frac{(\theta - \xi_j)^2}{2 h^2} \right\}.
\end{align}
Thus the potentials $\psi_h(\xi_j, \theta)$ enforce some consistency across the
noisy local copies of the parameter but allow them to be decoupled, where the
amount of decoupling depends on $h$.
With smaller values of $h$ the approximate model is more accurate, but the
local copies are more coupled and hence sampling in the augmented model is less
efficient.

We can construct a Gibbs sampler for the joint distribution~$\pi(\theta, \xi \given \x)$
in Equation~\ref{eq:weierstrass} by alternately sampling from~$p(\theta \given \xi)$
and~$p(\xi_j \given \theta, \x^{(j)})$, for $j=1, \dots, J$.
It follows from Equation~\ref{eq:weierstrass} that
\begin{align}
p(\theta \given \xi_1, \dots, \xi_j, \x) &\propto \prod_{j=1}^J \exp\left\{-\frac{(\theta^2-2\theta \xi_j)}{2h^2} \right\}.
\end{align}
Rearranging terms gives
\begin{align}
p(\theta \given \xi_1, \dots, \xi_j, \x)
&\propto \exp\left\{-\frac{(\theta -\bar\xi)^2}{2h^2/J} \right\},
\end{align}
where $\bar\xi = J^{-1} \sum_{j=1}^J \xi_j$.
The remaining Gibbs updates follow from Equation~\ref{eq:weierstrass},
which directly yields
\begin{align}
p(\xi_j \given \theta, \x^{(j)}) &\propto
                    \exp\left\{-\frac{(\theta - \xi_j)^2}{2h^2}\right\} f_j(\xi_j),
                    \quad j = 1, \dots, J.
\end{align}

\begin{algorithm}[t!]
\caption{Weierstrass Gibbs sampling. For simplicity,~$\theta \in \reals$.}
\label{alg:weierstrass-gibbs}
\begin{algorithmic}
\State \textbf{Input:} Initial state~$\theta_0$, number of samples~$T$,
data partitions ${\x^{(1)}, \dots, \x^{(J)}}$,
subposteriors ${f_1(\theta), \dots, f_J(\theta)}$,
tuning parameter $h$
\State \textbf{Output:} Samples $\theta_1, \dots, \theta_T$
\State Initialize $\theta_0$
\For {$t = 0, 1, \dots, T-1$}
	\State Send $\theta_{t}$ to each processor
	\For {$j = 1, 2, \dots, J$ in parallel}
		\State $\xi_{j,t+1} \sim p(\xi_{j,t+1} \given \theta_t, \x^{(j)}) \propto \N(\xi_{j,t+1} \given \theta_t, h^2) ~ f_j(\xi_{j,t+1})$
	\EndFor
	\State Collect $\xi_{1,t+1}, \dots, \xi_{J,t+1}$
	\State $\bar\xi_{t+1} = \dfrac{1}{J} \displaystyle\sum_{j=1}^J \xi_{j,t+1}$
	\State $\theta_{t+1} \sim \N(\theta_{t+1} \given \bar\xi_{t+1}, h^2/J)$
\EndFor
\end{algorithmic}
\end{algorithm}

This Gibbs sampler allows for parallelism but requires communication at every round.
A straightforward parallel implementation, shown in
Algorithm~\ref{alg:weierstrass-gibbs}, generates the updates for~$\xi_1, \dots,
\xi_J$ in parallel, but the update for~$\theta$ depends on the most recent
values of all the~$\xi_j$.
\citet{dunson-2013-weierstrass} describes an approximate variant of the full
Gibbs procedure that avoids frequent communication by only occasionally
updating~$\theta$.
In other efforts to exploit parallelism while avoiding communication, the
authors propose alternate Weierstrass samplers based on importance sampling and
rejection sampling.

\subsection{Hogwild Gibbs}
\label{sec:pmcmc:hogwild}
Instead of designing new data-parallel algorithms from scratch, another
approach is to take an existing MCMC algorithm and execute its updates in
parallel at the expense of accuracy or theoretical guarantees.
In particular, Hogwild Gibbs algorithms take a Gibbs sampling
algorithm~(\S\ref{sec:gibbs}) with interdependent sequential updates (\eg due
to collapsed parameters or lack of graphical model structure) and simply run
the updates in parallel anyway, using only occasional communication and
out-of-date (stale) information from other processors.
Because these strategies take existing algorithms and let the updates run
`hogwild' in the spirit of Hogwild! stochastic gradient descent in convex
optimization \citep{recht:2011-hogwild}, we refer to these methods as Hogwild
Gibbs.

Similar approaches have a long history.
Indeed, \citet{gonzalez:2011-parallel} attributes a version of this strategy,
Synchronous Gibbs, to the original Gibbs sampling paper \citep{geman1984stochastic}.
However, these strategies have seen renewed interest, particularly due to
extensive empirical work on Approximate Distributed Latent Dirichlet Allocation
(AD-LDA)~\citep{newman:2007-p-lda,newman:2009-ad-lda,asuncion:2008-async-lda,liu:2011-plda+,ihler:2012-understanding},
which showed that running collapsed Gibbs sampling updates in parallel allowed
for near-perfect parallelism without a loss in predictive likelihood
performance.
With the growing challenge of scaling MCMC both to not only big datasets
but also big models, it is increasingly important to understand when and how
these approaches may be useful.

In this section, we first define some variations of Hogwild Gibbs based on
examples in the literature.
Next, we survey the empirical results and summarize the current state of
theoretical understanding.

\subsubsection{Defining Hogwild Gibbs variants}
Here we define some Hogwild Gibbs methods and related schemes, such
as the stale synchronous parameter server.
In particular, we consider bulk-synchronous parallel and asynchronous
variations.
We also fix some notation used for the remainder of the section.

For all of the Hogwild Gibbs algorithms, as with standard Gibbs sampling, we
are given a collection of~$n$ random variables, $\{x_i : i \in [n]\}$ where
$[n] \triangleq \{1,2,\ldots,n\}$, and we assume that we can sample from the
conditional distributions~$x_i | x_{\neg i}$, where~$x_{\neg i}$ denotes $\{
x_j : j \neq i\}$.
For the Hogwild Gibbs algorithms, we also assume we have~$K$ processors, each
of which is assigned a set of variables on which to perform MCMC updates.
We represent an assignment of variables to processors by fixing a partition
$\{\mathcal{I}_1,\mathcal{I}_1,\ldots,\mathcal{I}_K\}$ of~$[n]$, so that the
$k$th processor performs updates on the state values indexed by~$\mathcal{I}_k$.

\subsubsection*{Bulk-synchronous parallel Hogwild Gibbs}

A bulk-synchronous parallel (BSP) Hogwild Gibbs algorithm assigns variables to
processors and alternates between performing parallel processor-local updates
and global synchronization steps.
During epoch~$t$, the~$k$th processor performs~$q(t,k)$ MCMC updates, such as
Gibbs updates, on the variables $\{x_i : i \in \mathcal{I}_k\}$ without
communicating with the other processors; in particular, these updates are
computed using out-of-date values for all $\{x_j : j \not \in \mathcal{I}_k\}$.
After all processors have completed their local updates, all processors
communicate the updated state values in a global synchronization step and the
system advances to the next epoch.
We summarize this Hogwild Gibbs variant in Algorithm~\ref{hogwild:alg:bsp},
in which the local MCMC updates are taken to be Gibbs updates.

\begin{algorithm}[t]
  \caption{Bulk-synchronous parallel (BSP) Hogwild Gibbs}
  \label{hogwild:alg:bsp}

  \begin{algorithmic}
      \Require Joint distribution over $x=(x_1,\ldots,x_n)$, partition
      $\{\mathcal{I}_1, \ldots, \mathcal{I}_K\}$ of $\{1,2,\ldots,n\}$,
      iteration schedule $q(t,k)$
      \State Initialize $\bar{x}^{(1)}$
      \For{$t=1,2,\ldots$}
          \For{$k=1,2,\ldots,K$ in parallel}
              \State $\bar{x}_{\mathcal{I}_k}^{(t+1)} \gets \Call{LocalGibbs}{\bar{x}^{(t)}, \mathcal{I}_k, q(t,k)}$
          \EndFor
          \State Synchronize
      \EndFor
  \end{algorithmic}

  \begin{algorithmic}
      \Function{LocalGibbs}{$\bar{x}$, $\mathcal{I}$, $q$}
          \For{$j=1,2,\ldots,q$}
              \For{$i \in \mathcal{I}$ in order}
                  \State $\bar{x}_i \gets \text{sample~} x_i \given x_{\neg i} = \bar{x}_{\neg i}$
              \EndFor
          \EndFor
      \Return $\bar{x}$
      \EndFunction
  \end{algorithmic}
\end{algorithm}

Several special cases of the BSP Hogwild Gibbs scheme have been of interest.
The Synchronous Gibbs scheme of \citet{gonzalez:2011-parallel}
associates one variable with each processor, so that $|\mathcal{I}_k|=1$ for
each $k=1,2,\ldots,K$ (in which case we may take $q=1$ since no local
iterations are needed with a single variable).
One may also consider the case where the partition is arbitrary and $q$ is very
large, in which case the local MCMC iterations may converge and exact block
samples are drawn on each processor using old statistics from other processors
for each outer iteration.
Finally, note that setting $K=1$ and $q(t,k)=1$ reduces to standard Gibbs sampling
on a single processor.

\subsubsection*{Asynchronous Hogwild Gibbs}

Another Hogwild Gibbs pattern involves performing updates asynchronously.
That is, processors might communicate only by sending messages to one another
instead of by a global synchronization.
Versions of this Hogwild Gibbs pattern has proven effective both for collapsed
latent Dirichlet allocation topic model inference
\citep{asuncion:2008-async-lda}, and for Indian Buffet Process inference
\citep{doshi-velez:2009-p-ibp}.
A version was also explored in the Gibbs sampler of the Stale Synchronous
Parameter (SSP) server of~\citet{ho:2013-stale}, which placed an upper bound on
the staleness of the entries of the state vector on each processor.

There are many possible communication strategies in the asynchronous setting,
and so we follow a version of the random communication strategy employed by
\citet{asuncion:2008-async-lda}.
In this approach, after performing some number of local updates, a processor
sends its updated state information to a set of randomly-chosen processors and
receives updates from other processors.
The processor then updates its state representation and performs another round
of local updates.
A version of this asynchronous Hogwild Gibbs strategy is summarized in
Algorithm~\ref{hogwild:alg:async}.

\begin{algorithm}[t]
  \caption{Asynchronous Hogwild Gibbs}
  \label{hogwild:alg:async}

  \begin{algorithmic}
    \State Initialize $\bar{x}^{(1)}$
    \For{each processor $k=1,2,\ldots,K$ in parallel}
      \For{$t=1,2,\ldots$}
        \State $\bar{x}_{\mathcal{I}_k}^{(t+1)} \gets \Call{LocalGibbs}{\bar{x}^{(t)}, \mathcal{I}_k, q(t,k)}$
        \State Send $\bar{x}_{\mathcal{I}_k}^{(t+1)}$ to $K'$ randomly-chosen processors
        \For{each $k' \neq k$}
          \If{update $\bar{x}_{\mathcal{I}_{k'}}$ received from processor $k'$}
            \State $\bar{x}_{\mathcal{I}_{k'}}^{(t+1)} \gets \bar{x}_{\mathcal{I}_{k'}}$
          \Else
            \State $\bar{x}_{\mathcal{I}_{k'}}^{(t+1)} \gets \bar{x}_{\mathcal{I}_{k'}}^{(t)}$
          \EndIf
        \EndFor
      \EndFor
    \EndFor
  \end{algorithmic}
\end{algorithm}

\subsubsection{Theoretical analysis}
Despite its empirical successes, theoretical understanding of Hogwild Gibbs
algorithms is limited.
There are two settings in which some analysis has been offered: first, in a
variant of AD-LDA, \ie~Hogwild Gibbs applied to Latent Dirichlet Allocation
models, and second in the jointly Gaussian case.

The work of \citet{ihler:2012-understanding} provides some understanding of the
effectiveness of a variant of AD-LDA by bounding in terms of run-time
quantities the one-step error probability induced by proceeding with sampling
steps in parallel, thereby allowing an AD-LDA user to inspect the computed
error bound after inference~\citep[Section 4.2]{ihler:2012-understanding}. In
experiments, the authors empirically demonstrate very small upper bounds on
these one-step error probabilities, \eg~a value of their parameter
$\varepsilon=10^{-4}$ meaning that at least $99.99\%$ of samples are expected
to be drawn just as if they were sampled sequentially.  However, this
per-sample error does not necessarily provide a direct understanding of the
effectiveness of the overall algorithm because errors might accumulate over
sampling steps; indeed, understanding this potential error accumulation is of
critical importance in iterative systems. Furthermore, the bound is in terms of
empirical run-time quantities, and thus it does not provide guidance on which other models the Hogwild strategy may be effective. \citet[Section
4.3]{ihler:2012-understanding} also provides approximate scaling analysis by
  estimating the order of the one-step bound in terms of a Gaussian
  approximation and some distributional assumptions.

The jointly Gaussian case is more tractable for
analysis~\citep{johnson:2013-analyzing,johnson:2014-thesis}.
In particular, \citet[Theorem 7.6.6]{johnson:2014-thesis} shows that for the BSP Hogwild Gibbs
process to be stable, \ie~to form an ergodic Markov chain and have a
well-defined stationary distribution, for any variable partition and any
iteration schedule it suffices for the model's joint Gaussian precision matrix
to satisfy a generalized diagonal dominance condition.
Because the precision matrix contains the coefficients of the log potentials in
a Gaussian graphical model, the diagonal dominance condition captures the
intuition that Hogwild Gibbs should be stable when variables do not interact
too strongly.
\citet[Proposition 7.6.8]{johnson:2014-thesis} gives a more refined condition
for the case where the number of processor-local Gibbs iterations is large.

When a bulk-synchronous parallel Gaussian Hogwild Gibbs process defines an
ergodic Markov chain and has a stationary distribution, \citet[Chapter
7]{johnson:2014-thesis} also provides an understanding of how that stationary
distribution relates to the model distribution.
Because both the model distribution and the Hogwild Gibbs process stationary
distribution  are Gaussian, accuracy can be measured in terms of the mean
vector and covariance matrix.
\citet[Proposition 7.6.1]{johnson:2014-thesis} shows that the mean of a stable
Gaussian Hogwild Gibbs process is always correct.
\citet[Propositions 7.7.2 and 7.7.3]{johnson:2014-thesis} identify a tradeoff
in the accuracy of the process covariance matrix as a function of the number of
processor-local Gibbs iterations: at least when the processor interactions are
sufficiently weak, more processor-local iterations between synchronization
steps increase the accuracy of the covariances among variables within each
processor but decrease the accuracy of the covariances between variables on
different processors.
\citet[Proposition 7.7.4]{johnson:2014-thesis} also gives a more refined error
bound as well as an inexpensive way to correct covariance estimates for the
case where the number of processor-local Gibbs iterations is large.

\section{Summary}
Many ideas for parallelizing MCMC have been proposed, exhibiting many
tradeoffs.
These ideas vary in generality, in faithfulness to the posterior, and in the
parallel computation architectures for which they are best suited.
Here we summarize the surveyed methods, emphasizing their relative strengths on
these criteria.
See Table~\ref{table:parallel} for an overview.

\begin{landscape}
\begin{table}[p]
\centering
\resizebox{\linewidth}{!}{%
\begin{tabular}{lllllll}
\toprule
  & \pbox{5cm}{\textbf{Parallel density}\\ \textbf{evaluation} (\S\ref{sec:conditional-independence})}
  & \textbf{Prefetching} (\S\ref{sec:prefetching})
  & \textbf{Consensus} (\S\ref{sec:pmcmc:subposteriors})
  & \textbf{Weierstrass} (\S\ref{sec:pmcmc:subposteriors})
  & \textbf{Hogwild Gibbs} (\S\ref{sec:pmcmc:hogwild}) \\
\midrule
  \textbf{Requirements}
  & Conditional independence
  & None
  & Approximate factorization
  & Approximate factorization
  & \pbox{5cm}{Weak dependencies across processors} \\
\midrule
  \textbf{Parallel model}
  & BSP
  & Speculative execution
  & MapReduce
  & BSP
  & \pbox{5cm}{BSP and asynchronous message passing variants} \\
\midrule
  \textbf{Communication}
  & Each iteration
  & Master scheduling
  & Once
  & Tuneable
  & Tuneable \\
\midrule
  \textbf{Design choices}
  & None
  & Scheduling policy
  & \pbox{5cm}{Data partition, \\ consensus algorithm}
  & \pbox{5cm}{Data partition, \\ synchronization frequency, \\ Weierstrass $h$}
  & \pbox{5cm}{Data partition, \\ communication frequency }\\
\midrule
  \pbox{5cm}{\textbf{Computational} \\ \textbf{overheads}}
  & None
  & Scheduling, bookkeeping
  & Consensus step
  & Auxiliary variable sampling
  & None \\
\midrule
  \pbox{5cm}{\textbf{Approximation} \\ \textbf{error}}
  & None
  & None
  & \pbox{5cm}{Depends on number of processors and consensus algorithm}
  & \pbox{5cm}{Depends on number of processors and Weierstrass $h$}
  & \pbox{5cm}{Depends on number of processors and staleness} \\
\midrule
\end{tabular}
}
\caption{Summary of recent approaches to parallel MCMC.}
\label{table:parallel}
\end{table}
\end{landscape}

\paragraph{Simulating independent Markov chains}
Independent instances of serial MCMC algorithms can be run in an embarrassingly
parallel manner, requiring only minimal communication between processors to
ensure distinct initializations and to collect samples.
This approach can reduce Monte Carlo variance by increasing the number of
samples collected in any time budget, achieving an ideal parallel speedup, but
does nothing to accelerate the warm-up period of the chains during which the
transient bias is eliminated (see Section~\ref{sec:mcmc} and
Chapter~\ref{ch:discussion}).
That is, using parallel resources to run independent chains does nothing to
improve mixing unless there is some mechanism for information sharing as in \citet{nishihara-2014-gess}.
  In addition, running an independent MCMC chain on each processor requires each
processor to access the full dataset, which may be problematic for especially
large datasets.
These considerations motivate both subposterior methods and Hogwild Gibbs.

\paragraph{Direct parallelization of standard updates}
Some MCMC algorithms applied to models with particular structure allow for
straightforward parallel implementation.
In particular, when the likelihood is factorized across data points,
the computation of the Metropolis--Hastings acceptance probability can be
parallelized.
This strategy lends itself to a bulk-synchronous parallel (BSP) computational
model.
Parallelizing MH in this way yields exact MCMC updates and can be effective at
reducing the mixing time required by serial MH, but it requires a simple
likelihood function and its implementation requires frequent synchronization
and communication, mitigating parallel speedups unless the likelihood function is very expensive.

Gibbs sampling also presents an opportunity for direct parallelization for
particular graphical model structures.
In particular, given a graph coloring of the graphical model, variables
corresponding to nodes assigned a particular color are conditionally mutually
independent and can be updated in parallel without communication.
However, frequent synchronization and significant communication can be required
to transmit sampled values to neighbors after each update.
Relaxing both the strict conditional independence requirements and
synchronization requirements motivates Hogwild Gibbs.

\paragraph{Prefetching and speculative execution}
The prefetching algorithms studied in Section~\ref{sec:prefetching} use
speculative execution to transform traditional (serial) Metropolis--Hastings
into a parallel algorithm without incurring approximate updates or requiring
any model structure.
The implementation naturally follows a master-worker pattern, where the master
allocates (possibly speculative) computational work, such as proposal generation
or (partial) density evaluation, to worker processors.
Ignoring overheads, basic prefetching algorithms achieve at least logarithmic
speedup in the number of processors available.
More sophisticated scheduling by the master, such as predictive
prefetching~\citep{angelino:2014-prefetching}, can increase speedup
significantly.
While this method is very general and yields the same iterates as serial MH,
the speedup can be limited.

\paragraph{Subposterior consensus and Weierstrass samplers}
Subposterior methods, such as the consensus Monte Carlo algorithms
and the Weierstrass samplers of
Section~\ref{sec:weierstrass}, allow for data parallelism and minimal
communication because each subposterior Markov chain can be allocated to a
processor and simulation can proceed independently.
Communication is required only for final sample aggregation in consensus Monte Carlo
or the periodic resampling of the global parameter in the Weierstrass sampler.
In consensus Monte Carlo, the quality of the approximate inference depends on
both the effectiveness of the aggregation strategy and the extent to which
dependencies in the posterior can be factorized into subposteriors.
The Weierstrass samplers directly trade off approximation quality and the
amount of decoupling between subposteriors.

The consensus Monte Carlo approach originated at
Google~\citep{scott-2013-consensus} and naturally fits the MapReduce
programming model, allowing it to be executed on large computational clusters.
Recent work has extended consensus Monte Carlo and provides tools for
designing simple consensus strategies~\citep{rabinovich:2015-vcmc}, but the
generality and approximation quality of subposterior methods remain unclear.
The Weierstrass sampler fits well into a BSP model.

\paragraph{Hogwild Gibbs}
Hogwild Gibbs of Section~\ref{sec:pmcmc:hogwild} also allows for data
parallelism but avoids factorizing the posterior as in consensus Monte Carlo or
instantiating coupled copies of a global parameter as in the Weierstrass
sampler.
Instead, processor-local sampling steps (such as local Gibbs updates) are
performed with each processor treating other processors' states as fixed at
stale values; processors can communicate updated states less frequently, either
via synchronous or asynchronous communication.
Hogwild Gibbs variants span a range of parallel computation paradigms from
fully synchronous BSP to fully asynchronous message-passing.
While Hogwild Gibbs has proven effective in practice for several models, its
applicability and approximation tradeoffs remain unclear.

\section{Discussion}
The ideas surveyed in this chapter suggest several challenges and questions.

\paragraph{More parallelism means less accuracy}
The new data-parallel methods surveyed here, namely consensus Monte
Carlo, the Weierstrass samplers, and Hogwild Gibbs, do not generate samples
that are asymptotically distributed according to the target posterior.
Instead, each generates samples that are asymptotically distributed according
to a distribution that is meant to approximate the target posterior.
While the nature of these approximations differ, each has a tradeoff between
parallelism and accuracy: increasing parallelism by using more processors
decreases the faithfulness of the asymptotic posterior approximation.
This tradeoff may be inherent to most data-parallel MCMC schemes, though
parallel predictive prefetching strategies do not suffer the same drawback.

\paragraph{How to split up data?}
The performance of data-parallel methods may be significantly affected by the
data partitioning that assigns data subsets to processors.
In the case of Hogwild Gibbs, it is probably best to choose a data partition
that minimizes the strength of cross-processor dependence.
Similarly, in the case of subposterior methods, some factorizations may be
more effective than others.
Since data paritioning is likely to have significant practical effects for all
of these methods, it may be fruitful to develop and analyze general heuristics
for assigning data to processors.

\paragraph{Analysis of approximation quality and tradeoffs}
The works surveyed in this chapter introduce several alternative
approximations.
While each is well motivated, it is unclear how to choose the most appropriate
method for a given model, or how to think about and compare the various
approximations and tradeoffs.
A more unified perspective is necessary, through empirical comparison or
through analyzing their application to simple models that are tractable for
analysis.

\chapter{Scaling variational mean field algorithms}
\label{ch:mean-field}

Variational inference is a standard paradigm for posterior inference in
Bayesian models.
Because variational methods pose inference as an optimization problem, ideas in
scalable optimization can in principle yield scalable posterior inference algorithms.
In this chapter, we consider such scalable algorithms mainly in the context of
mean field variational inference, which is often called variational Bayes.

These scalable variational inference algorithms can be compared to the
algorithms of Chapters~\ref{sec:mcmc-subsets} and~\ref{sec:mcmc-parallel}
in the same way that variational methods are usually compared to MCMC.
That is, because inference is typically performed in a family of distributions
that does not include the exact posterior, it can be said that variational
methods do not fully instantiate the Bayesian computation that MCMC methods do
(at least, when given unbounded computation time).
Indeed, MAP inference, in which the posterior is represented only as a single
atom, is an extreme case of variational inference.
More generally, mean field variational families typically provide only unimodal
approximations, and additionally cannot represent some posterior correlations
near particular modes.
As a result, MCMC methods can provide better performance even when the Markov
chain only explores a single mode in a reasonable number of iterations.
\todo{citation needed (maybe YWTeh stuff)}

Despite these potential shortcomings, variational inference is widely used in
machine learning because the computational advantage over MCMC can be
significant.
This computational advantage is particularly salient in the context of scaling
inference to large datasets.
The big data context may also inform the relative cost of performing inference
in a constrained variational family rather than attempting to represent the
posterior exactly: when the posterior is concentrated, a variational
approximation may suffice.
While such questions may ultimately need to be explored empirically on a
case-by-case basis, the scalable variational inference methods surveyed in this
chapter provide the tools for such an exploration.

In this chapter we summarize two patterns of scalable variational inference.
First, in Section~\ref{sec:stochastic}, we discuss the application of stochastic
gradient optimization methods to mean field variational inference problems.
Second, in Section~\ref{sec:svb}, we describe an alternative approach that
instead leverages the idea of incremental posterior updating to develop an
inference algorithm with minibatch-based updates.

\section{Stochastic optimization and variational inference}
\label{sec:stochastic}
Stochastic gradient optimization is a powerful tool for scaling optimization
algorithms to large datasets, and it has been applied to mean field
variational inference problems to great effect.
While many traditional algorithms for optimizing mean field objective
functions, including both gradient-based and coordinate optimization methods,
require re-reading the entire dataset in each iteration, the stochastic
gradient framework allows each update to be computed with respect to
minibatches of the dataset while providing very general asymptotic convergence
guarantees.

In this section we first summarize the stochastic variational inference (SVI)
framework of~\citet{hoffman2013stochastic}, which applies to models with
complete-data conjugacy.
Next, we discuss alternatives and extensions which can handle more general
models at the cost of updates with greater variance and, hence, slower
convergence.

\subsection{SVI for complete-data conjugate models}
\label{sec:svi}
This section follows the development in~\citet{hoffman2013stochastic}.
It depends on results from stochastic gradient optimization theory;
see Section~\ref{sec:background:sgd} for a review. For
notational simplicity we consider each minibatch to consist of only a single
observation; the generalization to minibatches of arbitrary sizes is immediate.

Many common probabilistic models are hierarchical: they can be written in terms
of \emph{global} latent variables (or parameters), \emph{local} latent
variables, and observations.
That is, many models can be written as
\begin{equation}
    p(\phi,z,y)=p(\phi)\prod_{k=1}^K p(z^{(k)} \given \phi)p(y^{(k)} \given z^{(k)},\phi)
    \label{eq:svi-joint}
\end{equation}
where $\phi$ denotes global latent variables, $z=\{z^{(k)}\}_{k=1}^K$ denotes local latent variables, and $y=\{y^{(k)}\}_{k=1}^K$ denotes observations.
See Figure~\ref{fig:svi} for a graphical model.
Given such a class of models, the mean field variational inference
problem is to approximate the posterior $p(\phi,z \given \bar{y})$ for fixed data
$\bar{y}$ with a distribution of the form $q(\phi)q(z) = q(\phi) \prod_k q(z^{(k)})$
by finding a local minimum of the KL divergence from the approximating distribution to
the posterior or, equivalently, finding a local maximum of the
marginal likelihood lower bound
\begin{equation}
    \mathcal{L}[q(\phi) q(z)] \triangleq \E_{q(\phi)q(z)}
    \left[ \log \frac{p(\phi,z,\bar{y})}{q(\phi)q(z)} \right] \leq \log p(\bar{y}).
    \label{eq:generic_vlb}
\end{equation}

\begin{figure}[t]
    \centering
    \includegraphics[width=1.25in]{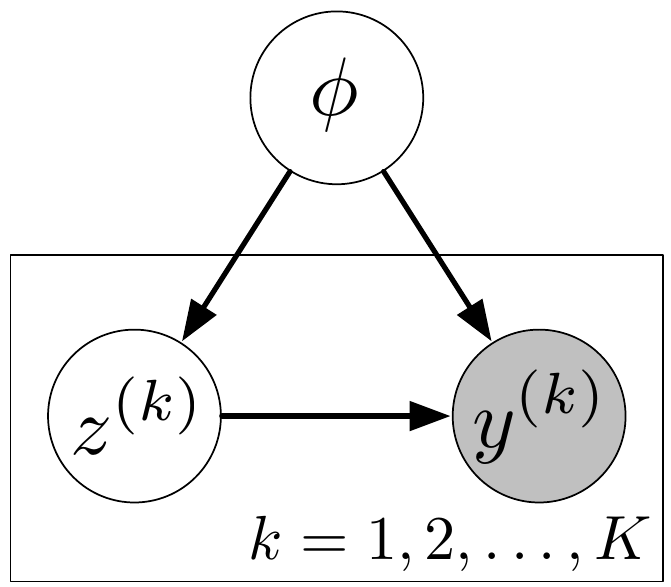}
    \caption{Prototypical graphical model for stochastic variational inference
    (SVI). The global latent variables are represented by $\phi$ and the local
    latent variables by $z^{(k)}$.}
    \label{fig:svi}
\end{figure}

\citet{hoffman2013stochastic} develops a stochastic gradient ascent algorithm
for such models that leverages complete-data conjugacy.
Gradients of $\mathcal{L}$ with respect to the parameters of $q(\phi)$ have a
convenient form if we assume the prior $p(\phi)$ and each complete-data
likelihood~$p(z^{(k)},y^{(k)} \given \phi)$ are a conjugate pair of exponential family
densities.
That is, if we have
\begin{align}
    \log p(\phi) &= \langle \eta_\phi, \; t_\phi(\phi) \rangle - \log Z_\phi(\eta_\phi) 
    \label{eq:prior_var_fam}
    \\
    \log p(z^{(k)},y^{(k)} \given \phi) &= \langle \eta_{zy}(\phi),\; t_{zy}(z^{(k)},y^{(k)}) \rangle - \log Z_{zy}(\eta_{zy}(\phi))
\end{align}
then conjugacy identifies the statistic of the prior with the natural parameter
and log partition function of the likelihood
via
\begin{align}
t_\phi(\phi) &=
(\eta_{zy}(\phi),\; -\log Z_{zy}(\eta_{zy}(\phi))\,,
\end{align}
so that
\begin{equation}
    p(\phi, z^{(k)},\bar{y}^{(k)}) \propto \exp \left\{ \langle \eta_\phi + {\textstyle (t_{zy}(z^{(k)}, \; \bar{y}^{(k)}),1)}, t_\phi(\phi) \rangle \right\}\,.
    \label{eq:var_conjugacy_term}
\end{equation}
Conjugacy implies the optimal variational factor $q(\phi)$ has the same form as
the prior; that is, without loss of generality we can write $q(\phi)$ in the same form as~\eqref{eq:prior_var_fam},
\begin{equation}
    q(\phi) = \exp \left\{
        \langle \widetilde{\eta}_\phi, \; t_\phi(\phi) \rangle -
        \log Z_\phi(\widetilde{\eta}_\phi) \right\},
\end{equation}
for some variational parameter $\widetilde{\eta}_\phi$.

Given this conjugacy structure, we can find a simple expression for the gradient of
$\mathcal{L}$ with respect to the global variational parameter
$\widetilde{\eta}_\phi$, optimizing out the local variational factor $q(z)$.
That is, we write the variational objective over global parameters as
\begin{equation}
    \mathcal{L}(\widetilde{\eta}_\phi) = \max_{q(z)} \mathcal{L}[q(\phi)q(z)]\,.
\end{equation}
Writing the optimal parameters of $q(z)$ as $\widetilde{\eta}_z^*$,
note that when $q(z)$ is partially optimized to a stationary point of
$\mathcal{L}$, so that~${\frac{\partial \mathcal{L}}{\partial
\widetilde{\eta}_z^*} = 0}$ at~$\widetilde{\eta}_z^*$, the chain rule implies
that the gradient with respect to the global variational parameters simplifies:
\begin{align}
    \frac{\partial \mathcal{L}}{\partial \widetilde{\eta}_\phi} (\widetilde{\eta}_\phi)
    &= \frac{\partial \mathcal{L}}{\partial \widetilde{\eta}_\phi} (\widetilde{\eta}_\phi, \widetilde{\eta}_z^*) + \frac{\partial \mathcal{L}}{\partial \widetilde{\eta}_z^*} \frac{\partial \widetilde{\eta}_z^*}{\partial \widetilde{\eta}_\phi} (\widetilde{\eta}_\phi, \widetilde{\eta}_z^*)
    \\
    &= \frac{\partial \mathcal{L}}{\partial \widetilde{\eta}_\phi} (\widetilde{\eta}_\phi, \widetilde{\eta}_z^*)\,.
\end{align}
Because the optimal local factor $q(z)$ can be computed with local mean
field updates for a fixed value of the global variational parameter
$\widetilde{\eta}_\phi$, we need only find an expression for the gradient
$\nabla_{\widetilde{\eta}_\phi} \mathcal{L}(\widetilde{\eta}_\phi)$ in terms of
the optimized local factors.

To find an expression for the gradient $\nabla_{\widetilde{\eta}_\phi}
\mathcal{L}(\widetilde{\phi}_\phi)$ that exploits conjugacy structure,
using~\eqref{eq:var_conjugacy_term} we can substitute
\begin{equation}
    p(\phi,z,\bar{y})\propto \exp \left\{ \langle \eta_\phi + \sum_k (t_{zy}(z^{(k)},\bar{y}^{(k)}),1), \; t_\phi(\phi) \rangle \right\},
\end{equation}
into the definition of $\mathcal{L}$ in \eqref{eq:generic_vlb}. 
Using the optimal form of $q(\phi)$, we have
\begin{align}
    \mathcal{L}(\widetilde{\eta}_\phi)
    &=
    \E_{q(\phi) q(z)}
    \left[
        \langle \eta_\phi + \sum_k t_{zy}(z^{(k)},\bar{y}^{(k)}) - \widetilde{\eta}_\phi, \;
        t_\phi(\phi) \rangle
    \right]
    \notag \\
    & \qquad + \log Z_\phi(\widetilde{\eta}_\phi) + \text{const.}
    \\
    &=
    \langle \eta_\phi + \sum_k \E_{q(z^{(k)})} [t_{zy}(z^{(k)},\bar{y}^{(k)})] -
    \widetilde{\eta}_\phi,
    \;
    \E_{q(\phi)} [t_\phi(\phi)] \rangle
    \notag \\
    & \qquad + \log Z_\phi(\widetilde{\eta}_\phi) + \text{const}\,,
\end{align}
where the constant does not depend on $\widetilde{\eta}_\phi$.
Using the identity for natural exponential families that
\begin{equation}
    \nabla  \log Z_\phi(\widetilde{\eta}_\phi) = \E_{q(\phi)}[t_\phi(\phi)],
\end{equation}
we can write the same expression as
\begin{align}
    \mathcal{L}(\widetilde{\eta}_\phi)
    =
    &
    \langle \eta_\phi + \sum_k \E_{q(z^{(k)})} [t_{zy}(z^{(k)},\bar{y}^{(k)})] -
    \widetilde{\eta}_\phi, \;
    \nabla \log Z_\phi(\widetilde{\eta}_\phi) \rangle
    \notag \\
    &
    + \log Z_\phi(\widetilde{\eta}_\phi) + \text{const}\,.
\end{align}
Thus we can compute the gradient of $\mathcal{L}(\widetilde{\eta}_\phi)$ with
respect to the global variational parameters $\widetilde{\eta}_\phi$ as
\begin{align}
    \nabla_{\widetilde{\eta}_\phi} \mathcal{L}(\widetilde{\eta}_\phi)
    &=
    \langle \nabla^2 \log Z_\phi(\widetilde{\eta}_\phi), \;
    \eta_\phi + \sum_k \E_{q(z^{(k)})} [t_{zy}(z^{(k)},\bar{y}^{(k)})] -
    \widetilde{\eta}_\phi \rangle
    \notag \\
    &\qquad - \nabla \log Z_\phi(\widetilde{\eta}_\phi)
    + \nabla \log Z_\phi(\widetilde{\eta}_\phi)
    \label{eq:gradient-of-elbo}
    \\
    &=
    \langle \nabla^2 \log Z_\phi(\widetilde{\eta}_\phi), \;
    \eta_\phi + \sum_k \E_{q(z^{(k)})} [t_{zy}(z^{(k)},\bar{y}^{(k)})] -
    \widetilde{\eta}_\phi \rangle
    \notag
\end{align}
where the first two terms come from applying the product rule.

The matrix $\nabla^2 \log Z_\phi(\widetilde{\eta}_\phi)$ is the Fisher information of the
variational family, since
\begin{equation}
    - \E_{q(\phi)} \left[ \nabla^2_{\widetilde{\eta}_\phi} \log q(\phi) \right] = \nabla^2 \log Z_\phi(\widetilde{\eta}_\phi).
\end{equation}
In the context of stochastic gradient ascent,
we can cancel the multiplication by the matrix $\nabla^2
\log Z_\phi(\widetilde{\eta}_\phi)$ simply by choosing the sequence of positive definite
matrices in Algorithm~\ref{alg:sgd} to be $G^{(t)} \triangleq \nabla^2
\log Z_\phi(\widetilde{\eta}_\phi^{(t)})^{-1}$.
This choice yields a stochastic \emph{natural} gradient ascent algorithm
\citep{amari2007methods}, where the updates are stochastic approximations to the natural
gradient
\begin{equation}
    \widetilde{\nabla}_{\widetilde{\eta}_\phi} \mathcal{L}
    =
    \eta_\phi + \sum_k \E_{q(z^{(k)})} [t_{zy}(z^{(k)},\bar{y}^{(k)})] - \widetilde{\eta}_\phi.
\end{equation}
Natural gradients effectively include a second-order quasi-Newton correction
for local curvature in the variational family, making the updates invariant to
reparameterization of the variational family and thus often improving
performance of the algorithm.
More importantly, at least for the case of complete-data conjugate families
considered here, natural gradient steps are in fact easier to compute than
`flat' gradient steps in either the natural parameterization or moment
parameterization of the variational family $q(\phi)$.

Therefore a stochastic natural gradient ascent algorithm on the global
variational parameter $\widetilde{\eta}_\phi$ proceeds at iteration $t$ by
sampling a minibatch $\bar{y}^{(k)}$ and taking a step of some size
$\rho^{(t)}$ in an approximate natural gradient direction via
\begin{equation}
    \widetilde{\eta}_\phi \gets (1-\rho^{(t)}) \widetilde{\eta}_\phi + \rho^{(t)} \left( \eta_\phi + K \E_{q(z^{(k)})} [t(z^{(k)},\bar{y}^{(k)})] \right)
\end{equation}
where we have assumed the minibatches are of equal size to simplify notation.
The local variational factor $q(z^{(k)})$ is computed using a local mean
field update on the data minibatch and the global variational factor.
That is, if $q(z^{(k)})$ is not further factorized in the mean field
approximation, it is computed according to
\begin{equation}
    q(z^{(k)}) \propto \exp \left\{ \E_{q(\phi)} [ \log p(z^{(k)}  \given  \phi)
        p(\bar{y}^{(k)} \given z^{(k)}, \phi)] \right\}.
\end{equation}
We summarize the general SVI algorithm in Algorithm~\ref{alg:svi}.

\begin{algorithm}[t]
    \caption{Stochastic Variational Inference (SVI)}
    \begin{algorithmic}
        \State Initialize global variational parameter $\widetilde{\eta}_\phi^{(0)}$
        \For{$t=0,1,2,\ldots$}
        \State $\hat{k} \gets$ sample index $k$ with probability $p_k > 0$, for $k=1,2,\ldots,K$
            \State $q(z^{(\hat{k})}) \gets
            \text{\textsc{LocalMeanField}}(\widetilde{\eta}^{(t)},
            \bar{y}^{(\hat{k})})$
            \State $\widetilde{\eta}_\phi^{(t+1)} \gets (1-\rho^{(t)}) \widetilde{\eta}_\phi^{(t)} + \rho^{(t)} \left( \eta_\phi + \frac{1}{p_{\hat{k}}} \E_{q(z^{(\hat{k})})} \left[t(z^{(\hat{k})},\bar{y}^{(\hat{k})})\right] \right)$
        \EndFor
    \end{algorithmic}
    \label{alg:svi}
\end{algorithm}

\subsection{Stochastic gradients with general nonconjugate models}
\label{sec:bbvi}
The development of SVI in the preceding section assumes that $p(\phi)$ and
$p(z,y \given \phi)$ are a conjugate pair of exponential families.
This assumption led to a particularly convenient form for the natural gradient
of the mean field variational objective and hence an efficient stochastic
gradient ascent algorithm.
However, when models do not have this conjugacy structure, more general
algorithms are required.

In this section we review Black Box Variational Inference (BBVI), which is a
stochastic gradient algorithm for variational inference that can be applied at
scale~\citep{ranganath:2014-bbvi}.
The ``black box'' name suggests its generality: while the
stochastic variational inference of Section~\ref{sec:svi} requires particular
model structure, BBVI only requires that the model's log joint distribution can
be evaluated.
It also makes few demands of the variational family, since it only requires
that the family can be sampled and that the gradient of its log joint with
respect to the variational parameters can be computed efficiently.
With these minimal requirements, BBVI is not only useful in the big-data
setting but also a tool for handling nonconjugate variational inference more
generally.
Because BBVI uses Monte Carlo approximation to compute stochastic gradient updates,
it fits naturally into a stochastic gradient optimization framework, and hence
it has the additional benefit of yielding a scalable algorithm simply by adding
minibatch sampling to its updates at the cost of increasing their variance.
In this subsection we review the general BBVI algorithm and then compare it to
the SVI algorithm of Section~\ref{sec:svi}.
For a review of Monte Carlo estimation, see Section~\ref{sec:monte-carlo}.

We consider a general model $p(\theta,y) = p(\theta)\prod_{k=1}^K p(y^{(k)} \given
\theta)$ including parameters $\theta$ and observations $y=\{y^{(k)}\}_{k=1}^K$
divided into $K$ minibatches.
The distribution of interest is the posterior $p(\theta \given y)$ and we write
the variational family as $q(\theta) = q(\theta \given \widetilde{\eta}_\theta)$,
where we suppress the particular mean field factorization structure of
$q(\theta)$ from the notation.
The mean field variational lower bound is then
\begin{equation}
    \mathcal{L} = \E_{q(\theta)} \left[ \log \frac{p(\theta,y)}{q(\theta)} \right].
\end{equation}
Taking the gradient with respect to the variational parameter
$\widetilde{\eta}_\theta$ and expanding the expectation into an integral, we have
\begin{align}
    \nabla_{\widetilde{\eta}_\theta} \mathcal{L}
    &= \nabla_{\widetilde{\eta}_\theta} \int q(\theta) \log \frac{p(\theta,y)}{q(\theta)} d\theta
    \label{eq:bbvi_0}
    \\
    &= \int \nabla_{\widetilde{\eta}_\theta} \left[ \log \frac{p(\theta, y)}{q(\theta)} \right] q(\theta) d\theta
    + \int \log \frac{p(\theta,y)}{q(\theta)} \nabla_{\widetilde{\eta}_\theta} q(\theta) d\theta
    \label{eq:bbvi_1}
\end{align}
where we have moved the gradient into the integrand and applied the product
rule to yield two terms.
The first term is identically zero:
\begin{align}
    \int \nabla_{\widetilde{\eta}_\theta} \left[ \log \frac{p(\theta, y)}{q(\theta)} \right] q(\theta) d\theta
    &= - \int \frac{1}{q(\theta)} \nabla_{\widetilde{\eta}_\theta} \left[ q(\theta) \right] q(\theta) d\theta
    \\
    &= - \int \nabla_{\widetilde{\eta}_\theta} q(\theta) d \theta
    \\
    &= - \nabla_{\widetilde{\eta}_\theta} \int q(\theta) d\theta = 0
\end{align}
where we have used $\nabla_{\widetilde{\eta}_\theta} \log p(\theta,y) = 0$.
To write the second term of \eqref{eq:bbvi_1} in a form that allows convenient
Monte Carlo approximation, we first note the identity
\begin{equation}
    \nabla_{\widetilde{\eta}_\theta} \log q(\theta) = \frac{\nabla_{\widetilde{\eta}_\theta} q(\theta)}{q(\theta)}
    \implies
    \nabla_{\widetilde{\eta}_\theta} q(\theta) = q(\theta) \nabla_{\widetilde{\eta}_\theta} \log q(\theta)
\end{equation}
and hence we can write the second term of \eqref{eq:bbvi_1} as
\begin{align}
    \int  \log \frac{p(\theta,y)}{q(\theta)}  \nabla_{\widetilde{\eta}_\theta} q(\theta) d\theta
    &=
    \int  \log \frac{p(\theta,y)}{q(\theta)}  \nabla_{\widetilde{\eta}_\theta} \left[ \log q(\theta) \right]  q(\theta) d \theta
    \\
    &=
    \E_{q(\theta)} \left[ \log \frac{p(\theta,y)}{q(\theta)} \nabla_{\widetilde{\eta}_\theta} \log q(\theta) \right]
    \label{eq:bbvi_3}
    \\
    &\approx
    \frac{1}{|\mathcal{S}|} \sum_{\hat{\theta} \in S} \log \frac{p(\hat\theta,y)}{q(\hat\theta)} \nabla_{\widetilde{\eta}_\theta} \log q(\hat\theta)
    \label{eq:bbvi_2}
\end{align}
where in the final line we have written the expectation as a Monte Carlo
estimate using a set of samples $\mathcal{S}$, where $\hat\theta \iid\sim q(\theta)$ for
$\hat\theta \in \mathcal{S}$.
Notice that the gradient is written as a weighted sum of gradients of the
variational log density with respect to the variational parameters, where the
weights depend on the model log joint density.

The BBVI algorithm uses the Monte Carlo estimate \eqref{eq:bbvi_2} to compute
stochastic gradient updates.
This gradient estimator is also known as the score function estimator
\citep{kleijnen1996optimization,gelman1998simulating}.
The variance of these updates, and hence the convergence of the overall
stochastic gradient algorithm, depends both on the sizes of the gradients of
the variational log density and on the variance of $q(\theta)$.
Large variance in the gradient estimates can lead to very slow optimization,
and so \citet{ranganath:2014-bbvi} proposes and evaluates two variance reduction schemes,
including a control variate method as well as a Rao-Blackwellization method
that can exploit factorization structure in the variational family.

To provide a scalable version of BBVI, gradients can be further approximated by
subsampling minibatches of data.
That is, using~${\log p(\theta,y) = \log p(\theta) + \sum_{k=1}^K \log p(y^{(k)} \given 
\theta)}$ we write \eqref{eq:bbvi_3} and \eqref{eq:bbvi_2} as
\begin{align}
    \nabla_{\widetilde{\eta}_\theta} \mathcal{L}
    &=
    \E_{\hat{k}} \left[ \E_{q(\theta)} \left[
            \left( \log \frac{p(\theta)}{q(\theta)} + K \log p(y^{(\hat{k})} \given \hat\theta) \right)
            \nabla_{\widetilde{\eta}_\theta} \log q(\theta)
        \right] \right]
    \\
    &\approx
    \frac{1}{|\mathcal{S}|} \sum_{\hat{\theta} \in \mathcal{S}} \left( \log \frac{p(\hat\theta)}{q(\hat\theta)} + K \E_{\hat{k}} \left[ \log p(y^{(\hat{k})} \given \hat\theta) \right] \right) \nabla_{\widetilde{\eta}_\theta} \log q(\hat\theta)
\end{align}
with the minibatch index $\hat{k}$ distributed uniformly over
$\{1,2,\ldots,K\}$ and the minibatches are assumed to be the same size for
simpler notation.
This subsampling over minibatches further increases the variance of the updates
and thus may further limit the rate of convergence of the algorithm.
We summarize this version of the BBVI algorithm in Algorithm~\ref{alg:bbvi}.

\begin{algorithm}[t]
    \caption{Minibatch Black-Box Variational Inference (BBVI)}
    \begin{algorithmic}
        \State Initialize $\widetilde{\eta}_\theta^{(0)}$
        \For{$t=0,1,2,\ldots$}
            \State $\mathcal{S} \gets \{ \hat{\theta}_s \} \text{ where } \hat{\theta}_s \sim q(\; \cdot \; | \, \widetilde{\eta}_\theta^{(t)})$
            \State $\hat{k} \sim \text{Uniform}(\{1,2,\ldots,K\})$
            \State $\widetilde{\eta}_\theta^{(t+1)} \gets \widetilde{\eta}_\theta^{(t)} + \frac{1}{|\mathcal{S}|} \sum_{\hat{\theta} \in \mathcal{S}} \left( \log \frac{p(\hat{\theta})}{q(\hat{\theta})} + K \log p(y^{(\hat{k})} \given \hat{\theta}) \right) \nabla_{\widetilde{\eta}_\theta} \log q(\hat{\theta})$
        \EndFor
    \end{algorithmic}
    \label{alg:bbvi}
\end{algorithm}

It is instructive to compare the fully general BBVI algorithm applied to
hierarchical models to the SVI algorithm of Section~\ref{sec:svi}; this
comparison not only shows the benefits of exploiting conjugacy structure but
also suggests a potential Rao-Blackwellization scheme.
Taking~${\theta=(\phi,z)}$ and~${q(\theta) = q(\phi)q(z)}$ and starting from
\eqref{eq:bbvi_0} and \eqref{eq:bbvi_3}, we can write the gradient as
\begin{align}
    \nabla_{\widetilde{\eta}_\theta} \mathcal{L}
    &=
    \E_{q(\phi)q(z)} \left[ \log \frac{p(\phi,z,y)}{q(\phi)q(z)} \nabla_{\widetilde{\eta}_\phi} \log q(\phi)q(z) \right]
    \\
    &\approx
    \frac{1}{|S|} \sum_{\hat\phi \in S} \left( \E_{q(z)} \log \frac{p(\hat\phi,z,y)}{q(\hat\phi)} - \E_{q(z)} \log q(z) \right)
    \nabla_{\widetilde{\eta}_\phi} \log q(\hat\phi)
\end{align}
where $S$ is a set of samples with $\hat\phi \iid\sim q(\phi)$ for $\hat\phi
\in S$.
Thus if the entropy of the local variational distribution $q(z)$ and the
expectations with respect to $q(z)$ of the log density $\log p(\hat\phi,z,y)$
can be computed without resorting to Monte Carlo estimation, then the resulting
update would likely have a lower variance than the BBVI update that requires
sampling over both $q(\phi)$ and $q(z)$.

This comparison also makes clear the advantages of exploiting conjugacy in SVI:
when the updates of Section~\ref{sec:svi} can be used, neither $q(\phi)$ nor
$q(z)$ needs to be sampled. Furthermore, while BBVI uses stochastic gradients
in its updates, the SVI algorithm of Section~\ref{sec:svi} uses stochastic
natural gradients, adapting to the local curvature of the variational family.
Computing stochastic natural gradients in BBVI would require both computing
the Fisher information matrix of the variational family and solving a linear
system with it.

\subsection{Exploiting reparameterization for some nonconjugate models}
\label{sec:reparameterization-trick}
While the score function estimator developed for BBVI in Section~\ref{sec:bbvi}
is sufficiently general to handle essentially any model, some nonconjugate
models admit convenient stochastic gradient estimators that can have lower
variance.
In particular, in settings where the latent variables are continuous (or any
discrete latent variables that can be marginalized efficiently) samples from some
variational distributions can be reparameterized in a way that enables an
alternative stochastic gradient estimator.
This technique is related to non-centered reparameterizations~\citep{papaspiliopoulos2007general}
and has recently been called the reparameterization trick~\citep{DBLP:journals/corr/KingmaW13,rezende2014stochastic}.

The reparameterization trick applies when samples~${\hat \theta \sim q(\theta)}$,
where~$q(\theta)$ has parameter $\widetilde{\eta}_\theta$, can be written
as
\begin{equation}
    \hat \theta = f(\widetilde{\eta}_\theta, \epsilon)
\end{equation}
where $\epsilon \sim p(\epsilon)$ is a random variable with a distribution $p(\epsilon)$ that does not depend on
$\widetilde{\eta}_\theta$ and where $\nabla_{\widetilde{\eta}_\theta}
f(\widetilde{\eta}_\theta, \epsilon)$ can be computed efficiently for almost every
value of $\epsilon$.
In this case, we can compute stochastic estimates of the gradient of the
variational objective by first writing a Monte Carlo approximation of the
objective function itself:
\begin{equation}
    \mathcal{L}
    = \E_{q(\theta)} \left[ \log \frac{p(\theta, y)}{q(\theta)} \right]
    \approx \frac{1}{|S|} \sum_{\hat \epsilon \in S} \log \frac{p(f(\widetilde{\eta}_\theta, \hat \epsilon), y)}{q(f(\widetilde{\eta}_\theta, \hat \epsilon))}
\end{equation}
where $\hat \epsilon \iid \sim p(\epsilon)$ for each $\hat \epsilon \in S$.
Alternatively, when the variational entropy term $\E_{q(\theta)} \log
q(\theta)$ can be computed efficiently, \eg if the variational distribution
is a Gaussian, only the energy term needs to be approximated via Monte Carlo:
\begin{equation}
    \mathcal{L}
    \approx - \E_{q(\theta)} \log q(\theta) + \frac{1}{|S|} \sum_{\hat \epsilon \in S} \log p(f(\widetilde{\eta}_\theta, \hat \epsilon), y).
\end{equation}

This Monte Carlo approximation is a differentiable unbiased estimate of
$\mathcal{L}$ as a function of the variational parameter
$\widetilde{\eta}_\theta$, and so we can form a Monte Carlo estimate of the
gradient of the variational objective simply by differentiating it:
\begin{equation}
    \nabla_{\widetilde{\eta}_\theta} \mathcal{L}
    \approx
    - \nabla_{\widetilde{\eta}_\theta} \E_{q(\theta)} \log q(\theta)
    + \frac{1}{|S|} \sum_{\hat \epsilon \in S} \nabla_{\widetilde{\eta}_\theta} \log p(f(\widetilde{\eta}_\theta, \hat \epsilon), y).
\end{equation}
This estimator often has lower variance than the fully general score function
estimator~\citep{DBLP:journals/corr/KingmaW13} and can be easier to compute.

\section{Streaming variational Bayes (SVB)}
\label{sec:svb}
Streaming variational Bayes (SVB) provides an alternative framework in which to
derive minibatch-based scalable variational inference~\citep{broderick:2013-svb}.
While the methods of Section~\ref{sec:stochastic} generally apply stochastic
gradient optimization algorithms to a fixed variational mean field objective,
SVB instead considers the streaming data setting, in which case there may be no
fixed dataset size and hence no fixed variational objective.
To handle streaming data, the SVB approach is based on the classical idea of
Bayesian updating, in which a posterior is updated to reflect new data as they
become available.
This sequence of posteriors is approximated by a sequence of variational
models, and each variational model is computed from the previous variational
model via an incremental update on new data.

More concretely, given a prior $p(\theta)$ over a parameter $\theta$ and a
(possibly infinite) \todo{philosophically troubling with a finite set of parameters} sequence of data minibatches $y^{(1)},y^{(2)},\ldots$, each
distributed independently according to a likelihood distribution
$p(y^{(k)} \given \theta)$, we consider the sequence of posteriors
\begin{equation}
    p(\theta \given y^{(1)},\ldots,y^{(t)}), \quad t=1,2,\ldots.
\end{equation}
Given an approximation updating algorithm $\mathcal{A}$ one can compute a
corresponding sequence of approximations
\begin{equation}
    p(\theta \given y^{(1)}, \cdots, y^{(t)}) \approx q_t(\theta) = \mathcal{A} \left( y^{(t)}, q_{t-1}(\theta) \right), \quad t=1,2,\ldots
    \label{eq:svb_1}
\end{equation}
with $q_0(\theta) p(\theta)$.
This sequential updating view naturally suggests an online or one-pass
algorithm in which the update \eqref{eq:svb_1} is applied successively to each
of a sequence of minibatches.

A sequence of such updates may also exploit parallel or distributed computing
resources.
For example, the sequence of approximations may be computed as
\begin{align}
    p(\theta \given y^{(1)}, &\cdots, y^{(K_t)})
    \approx q_t(\theta)
    \\
    &= q_{t-1}(\theta) \left( \prod_{k=K_{t-1}+1}^{K_t} \mathcal{A} \left(y^{(k)}, q_{t-1}(\theta) \right) q_{t-1}(\theta)^{-1} \right)
    \label{eq:svb_2}
\end{align}
where $K_{t-1}+1,K_{t-1}+2,\ldots,K_{t}$ indexes a set of data minibatches for
which each update is computed in parallel before being combined in the final
update from $q_{t-1}(\theta)$ to $q_t(\theta)$.

This combination of partial results is especially appealing when the prior
$p(\theta)$ and the family of approximating distributions $q(\theta)$ are in
the same exponential family,
\begin{gather}
    p(\theta) \propto \exp \left\{ \langle \eta, \; t(\theta) \rangle \right\}
    \quad
    q_0(\theta) \propto \exp \left\{ \langle \widetilde{\eta}_0, \; t(\theta) \rangle \right\}
    \\
    q_t(\theta) = \mathcal{A}(y^{(k)}, q_{t-1}(\theta)) \propto \exp \left\{ \langle \widetilde{\eta}_t, \; t(\theta) \rangle \right\}
\end{gather}
for a prior natural parameter $\eta$ and a sequence of variational parameters~$\widetilde{\eta}_t$.
In the exponential family case, the updates \eqref{eq:svb_2} can be written
\begin{align}
    p(\theta \given y^{(1)}, \cdots, y^{(K_t)})
    \approx q_t(\theta)
    &\propto
    \exp \left\{ \langle \widetilde{\eta}_t, \; t(\theta) \rangle \right\}
    \\
    &=
    \exp \left\{ \langle \widetilde{\eta}_{t-1} + \sum_k (\widetilde{\eta}_k - \widetilde{\eta}_{t-1}), \; t(\theta) \rangle \right\}
\end{align}
where we may take the algorithm $\mathcal{A}$ to return an updated natural
parameter, $\widetilde{\eta}_k = \mathcal{A}(y^{(k)},\widetilde{\eta}_t)$.

Finally, similar updates can be performed in an asynchronous distributed
master-worker setting.
Each worker can process a minibatch and send the corresponding natural
parameter increment  to a master process, which
updates the global variational parameter and transmits back the updated
variational parameter along with a new data minibatch.
In symbols, we can write that a worker operating on minibatch~$y^{(k)}$ for
some minibatch index $k$ computes the update increment $\Delta
\widetilde{\eta}_{k}$ according to
\begin{equation}
    \Delta \widetilde{\eta}_{k} = \mathcal{A}(y^{(k)}, \widetilde{\eta}_{\tau(k)})
\end{equation}
where $\tau(k)$ is the index of the global variational parameter used in the
worker's computation.
Upon receiving an update, the master updates its global variational parameter
synchronously according to
\begin{equation}
    \widetilde{\eta}_{t+1} = \widetilde{\eta}_{t} + \Delta \widetilde{\eta}_k.
\end{equation}
We summarize a version of this process in Algorithms~\ref{alg:svb} and~\ref{alg:svb2}.

\begin{algorithm}[t]
    \caption{Streaming Variational Bayes (SVB)}
    \begin{algorithmic}
        \State Initialize $\widetilde{\eta}_0$
        \For{each worker $p=1,2,\ldots,P$}
            \State Send task $(y^{(p)}, \widetilde{\eta}_0)$ to worker $p$
        \EndFor
        \As{workers send updates}
            \State Receive update $\Delta \widetilde{\eta}_k$ from worker $p$
            \State $\widetilde{\eta}_{t+1} \gets \widetilde{\eta}_t + \Delta \widetilde{\eta}_k$
            \State Retrieve new data minibatch index $k'$
            \State Send new task $(y^{(k')},\widetilde{\eta}_{t+1})$ to worker $p$
        \EndAs
    \end{algorithmic}
    \label{alg:svb}
\end{algorithm}

\begin{algorithm}[t]
    \caption{SVB Worker Process}
    \begin{algorithmic}
        \Repeat
            \State Receive task $(y^{(k)},\widetilde{\eta}_t)$ from master
            \State $\Delta \widetilde{\eta}_k \gets \mathcal{A}(y^{(k)},\widetilde{\eta}_t) - \widetilde{\eta}_t$
            \State Send update $\Delta \widetilde{\eta}_k$ to master
        \Until{no tasks remain}
    \end{algorithmic}
    \label{alg:svb2}
\end{algorithm}

A related algorithm, which we do not detail here, is Memoized Variational
Inference (MVI)~\citep{hughes2013memoized,hughes2015reliable}.
While this algorithm is designed for the fixed dataset setting rather than the
streaming setting, the updates can be similar to those of SVB.
In particular, MVI optimizes the mean field objective in the conjugate
exponential family setting using the mean field coordinate descent algorithm
but with an atypical update order, in which only some local variational factors
are updated at a time.
This update order enables minibatch-based updating but, unlike the stochastic
gradient algorithms, does not optimize out the other local variational factors
not included in the minibatch and instead leaves them fixed.

Streaming variational inference algorithms similar to SVB have also recently
been studied in some Bayesian nonparametric mixture
models~\citep{tank2015streaming}.

\section{Summary}

\paragraph{Stochastic gradients vs. streaming.}
The methods of Section~\ref{sec:stochastic} apply stochastic optimization to
variational mean field inference objectives.
In optimization literature and practice, stochastic gradient methods have a
large body of both theoretical and empirical support, and so such methods offer
a compelling framework for scalable inference.
The streaming ideas surveyed in Section~\ref{sec:svb} are less well understood,
but by treating the streaming setting, rather than the setting of a large
fixed-size dataset, they may extend the reach of Bayesian modeling and
inference.

\paragraph{Minibatching.}
All of this chapter's scalable approaches to mean field variational inference
are based on processing minibatches of data. These algorithms arrive at this
data access pattern via two routes: the first applies stochastic gradient
optimization to mean field variational inference~(\S\ref{sec:stochastic}) and
the second considers the streaming data setting~(\S\ref{sec:svb}).
SVI~(\S\ref{sec:svi}) and BBVI~(\S\ref{sec:bbvi}) optimize the variational
objective by replacing full gradient updates with stochastic gradient updates.
In both SVI and BBVI, these approximate gradients arise from randomly
sampling data minibatches, while in BBVI there is additional stochasticity due
to the Monte Carlo approximation required to handle nonconjugate structure.
In contrast to SVI and BBVI, SVB~(\S\ref{sec:svb}) processes data minibatches to
drive incremental posterior updates, constructing a sequence of approximate
posterior distributions that correspond to classical sequential Bayesian
updating without having a single fixed objective to optimize.

\paragraph{Generality, requirements, and assumptions.}
As with most approaches to scaling MCMC samplers for Bayesian inference,
these minibatch-based variational inference methods depend on model structure.
In SVI and scalable BBVI, minibatches map to terms
in a factorization of the joint probability.
In SVB, minibatches map to a sequence of likelihoods to be incorporated into
the variational posterior.
Some of these methods further depend on and exploit exponential family and
conjugacy structure.  SVI is based on complete-data conjugacy,
while BBVI was specifically developed for nonconjugate models.
SVB is a general framework, but in the conjugate exponential family case
the updates can be written in terms of simple updates to natural parameters.
A direction for future research might be to develop new methods based on
identifying and exploiting some `middle ground' between the structural requirements of
SVI and BBVI, or similarly of SVB with and without exponential family structure.

\section{Discussion}

\paragraph{Parallel variants.}
The minibatch-based variational inference methods developed in this chapter suggest
parallel and asynchronous variants.
In the case of SVB, distributed and asynchronous versions,
such as the master-worker pattern depicted by Algorithms~\ref{alg:svb} and~\ref{alg:svb2},
have been empirically studied~\citet{broderick:2013-svb}.
However, we lack theoretical understanding about these procedures, and it is
unclear how to define and track notions of convergence or stability.
Methods based on stochastic gradients, such as SVI, can naturally be extended to
exploit parallel and asynchronous (or ``Hogwild'') variants of stochastic
gradient ascent.
In such parallel settings, these optimization-based techniques benefit from
powerful gradient convergence results \citep[Section
7.8]{bertsekas1989parallel}, though tuning such algorithms is still a
challenge.
Other parallel versions of these ideas and algorithms have also been developed
in \citet{Campbell14_UAI} and \citet{Campbell15_NIPS}.

\chapter{Challenges and questions}
\label{ch:discussion}

In this review, we have examined a variety of different views on scaling
Bayesian inference up to large datasets and greater model complexity and out to
parallel compute resources.  Several different themes have emerged, from
techniques that exploit subsets of data for computational savings to proposals
for distributing inference computations across multiple machines.  Progress is
being made, but there remain significant open questions and outstanding
challenges to be tackled as this research programme moves forward.

\paragraph{Trading off errors in MCMC}
One of the key insights underpinning much of the recent work on scaling
Bayesian inference can be framed in terms of a kind of bias-variance tradeoff.
Traditional MCMC theory provides asymptotically unbiased estimators for which
the error can eventually be driven arbitrarily small.
However, in practice, under limited computational budgets the error can be
significant.
This error has two components: transient bias, in which the samples produced
are too dependent on the Markov chain's initialization, and Monte Carlo
standard error, in which the samples collected may be too few or too highly
correlated to produce good estimates.

\begin{figure}[tp]
  \centering
  \includegraphics[width=\linewidth]{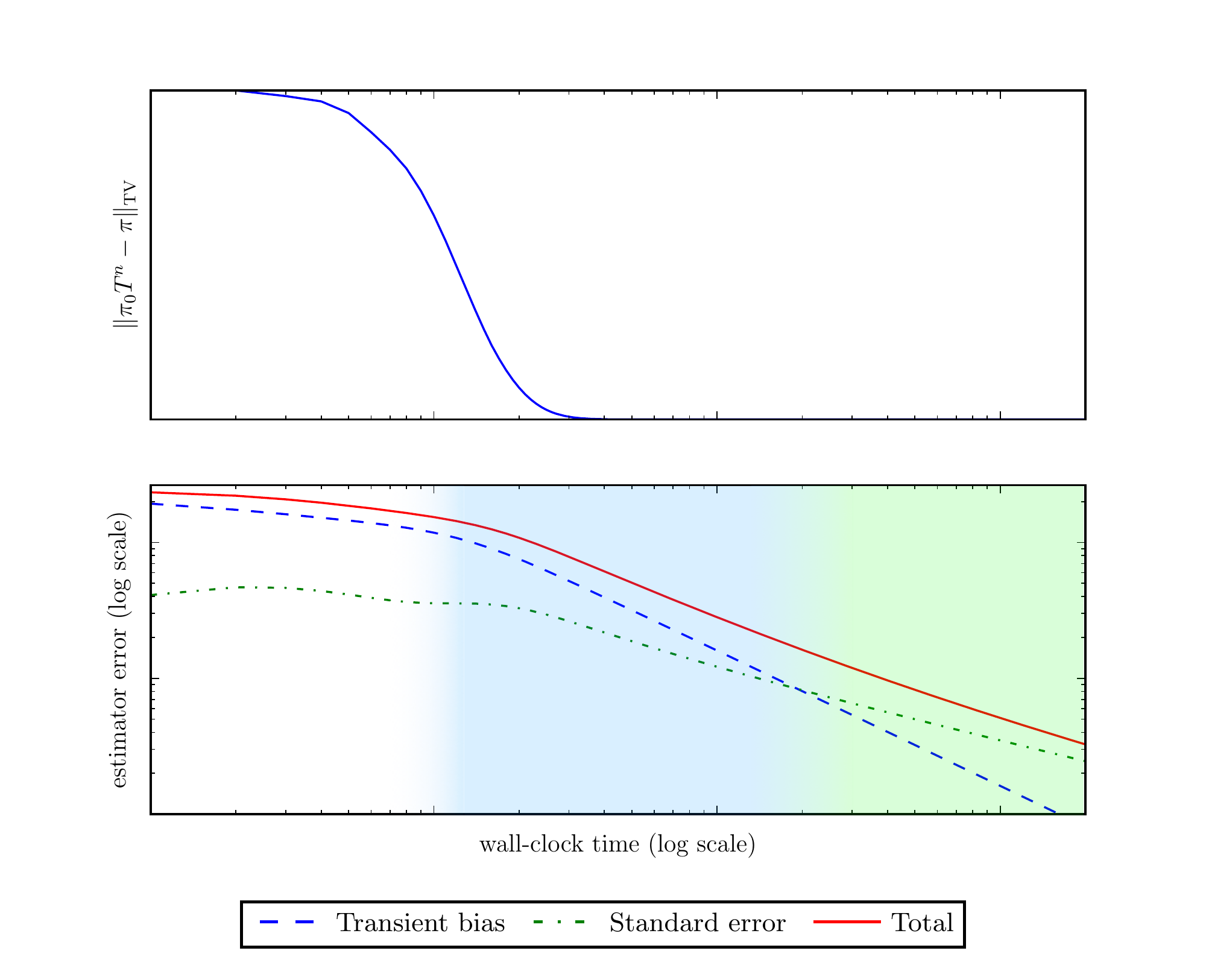}
  \caption{A simulation illustrating the error terms in traditional MCMC
  estimators as a function of wall-clock time (log scale). The marginal distributions of
  the Markov chain iterates converge to the target distribution (top panel),
  while the errors in MCMC estimates due to transient bias and Monte Carlo
  standard error are driven arbitrarily small.}
  \label{fig:traditional_mcmc}
\end{figure}

Figure~\ref{fig:traditional_mcmc} illustrates the error regimes and tradeoffs
in traditional MCMC.%
\footnote{See also Section~\ref{sec:mcmc}}
Asymptotic analysis describes the regime on the right of the plot, after the
sampler has mixed sufficiently well.
In this regime, the marginal distribution of each sample is essentially equal
to the target distribution, and the transient bias from initialization, which
affects only the early samples in the Monte Carlo sum, is washed out rapidly at
least at a $\mathcal{O}(\frac{1}{n})$ rate.
The dominant source of error is due to Monte Carlo standard error, which
diminishes only at a $\mathcal{O}(\frac{1}{\sqrt{n}})$ rate.

However, machine learning practitioners using MCMC often find themselves in
another regime: in the middle of the plot, the error is decreasing but
dominated instead by the transient bias.
The challenge in practice is often to get through this regime, or even to get
into it at all.
When the underlying Markov chain does not mix sufficiently well or when the
transitions cannot be computed sufficiently quickly, getting to this
regime may be practically infeasible for a realistic computational budget.

Several of the new MCMC techniques we have studied aim to address this
challenge.
In particular, the parallel predictive prefetching method of
Section~\ref{sec:prefetching} accelerates this phase of MCMC without affecting
the stationary distribution.
Other methods instead introduce approximate transition operators that can be
executed more efficiently.
For example, the adaptive subsampling methods of Section~\ref{sec:adaptive} and
the stochastic gradient sampler of Section~\ref{sec:sgld} can execute updates
more efficiently by operating only on data subsets, while the Weierstrass and
Hogwild Gibbs samplers of Sections~\ref{sec:weierstrass}
and~\ref{sec:pmcmc:hogwild}, respectively, execute more quickly by leveraging
data parallelism.
These transition operators are approximate in that they do not admit the exact
target distribution as a stationary distribution: instead, the stationary
distribution is only intended to be close to the target.
Framed in terms of Monte Carlo estimates, these approximations effectively
accelerate the execution of the chain at the cost of introducing an asymptotic
bias.
Figure~\ref{fig:asy_biased_mcmc} illustrates this new tradeoff.

\begin{figure}[tp]
  \centering
  \includegraphics[width=\linewidth]{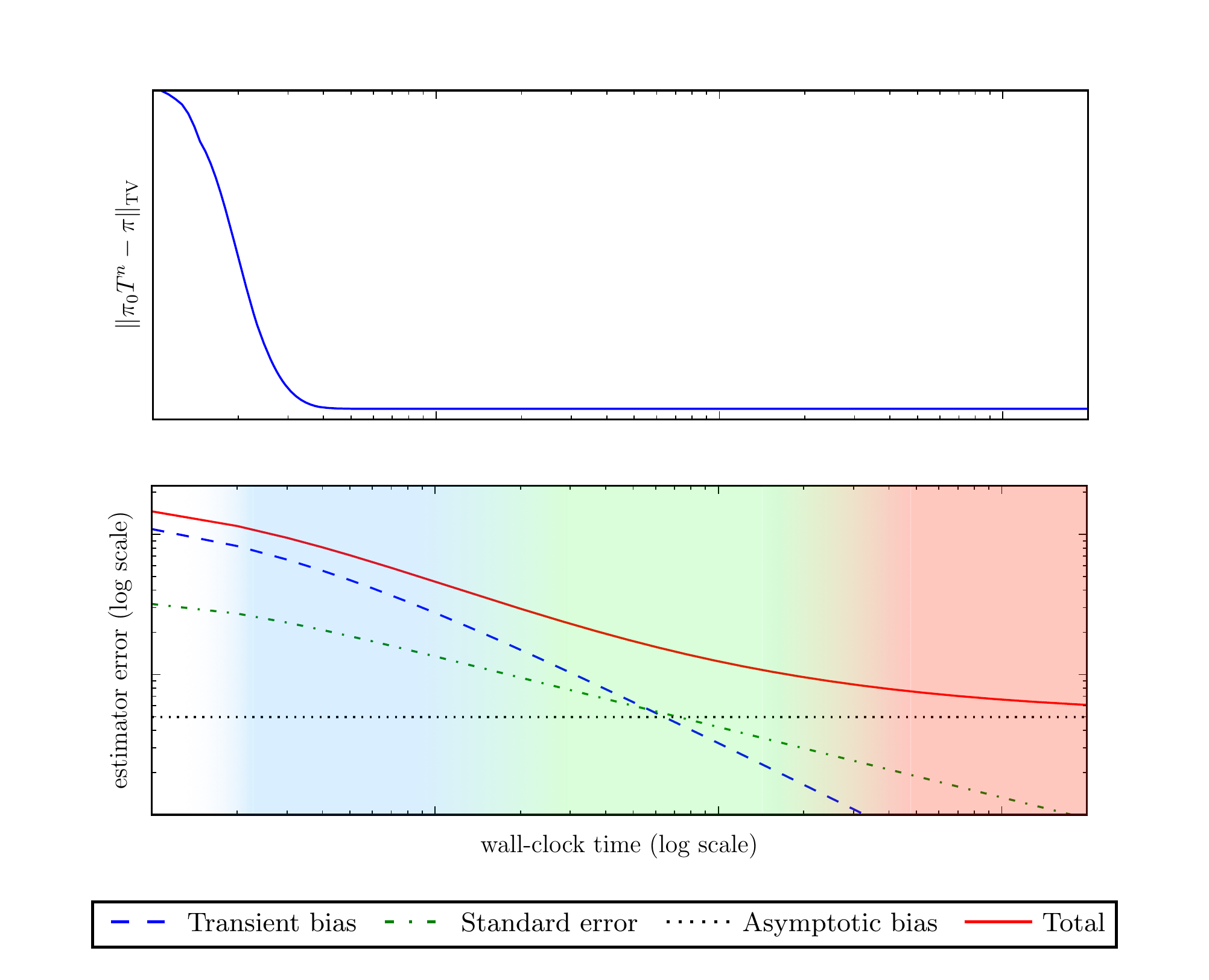}
  \caption{A simulation illustrating the new tradeoffs in some proposed
  scalable MCMC methods. Compare to Figure~\ref{fig:traditional_mcmc}.
  As a function of wall-clock time (log scale), the Markov chain iterations
  execute more than an order of magnitude faster, and hence the marginal
  distributions of the Markov chain iterates converge to the stationary
  distribution more quickly; however, because the stationary distribution is
  not the target distribution, an asymptotic bias remains (top panel).
  Correspondingly, MCMC estimator error, particularly the transient bias, can be
  driven to a small value more rapidly, but there is an error floor due to the
  introduction of the asymptotic bias (bottom panel).}
  \label{fig:asy_biased_mcmc}
\end{figure}

Allowing some asymptotic bias to reduce transient bias or even Monte Carlo
variance is likely to enable MCMC inference at a new scale.
However, both the amount of asymptotic bias introduced by these methods and the
ways in which it depends on model and algorithm parameters remain unclear.
More theoretical understanding and empirical study is necessary to guide
machine learning practice.

\paragraph{Scaling limits of Bayesian inference}
Scalability in the context of Bayesian inference is ultimately about spending
computational resources to better interrogate posterior distributions.
It is therefore important to consider whether there are fundamental limits to
what can be achieved by, \eg spending more money on Amazon EC2, for either
faster computers or more of them.

In parallel systems, linear scaling is ideal: twice as much computational power
yields twice as much useful work.
Unfortunately, even if this lofty parallel speedup goal is achieved, the
asymptotic picture for MCMC is dim: in the asymptotic regime, doubling the
number of samples collected can only reduce the Monte Carlo standard error by a
factor of~$\sqrt{2}$.
This scaling means that there are diminishing returns to purchasing additional
computational resources, even if those resources provide linear speedup in
terms of accelerating the execution of the MCMC algorithm.

Interestingly, variational methods may not suffer from such intrinsic limits.
In particular, the stochastic gradient variational inference methods surveyed
in Section~\ref{sec:stochastic} can utilize optimization methods that, at least
in smooth convex settings, can converge at least at~$\mathcal{O}(\frac{1}{n})$
rates~\citep{bubeckconvex}.
When these convergence properties are maintained for achieving local minima in
non-convex problems, applying additional computational resources would not
inherently suffer from the problem of diminishing marginal returns.

\paragraph{Measuring performance}
With all the ideas surveyed here, one thing is clear: there are many
alternatives for how to scale Bayesian inference.
How should we compare these alternative algorithms?
Can we tell when any of these algorithms work well in an absolute sense?

One standard approach for evaluating MCMC procedures is to define a set of
scalar-valued test functions (or estimands of interest) and compute effective
sample size \citep[Section 11.5]{gelman:2013-bda} as a function of wall-clock
time.
However, in complex models designing an appropriately comprehensive set of test
functions may be difficult.
Furthermore, many such measures require the Markov chain to mix and do not
account for any asyptotic bias \citep{gorham2015measuring}, hence limiting
their applicability to measuring the performance of many of the new inference
methods studied here.

To confront these challenges, one recently-proposed approach
\citep{gorham2015measuring} draws on Stein's method, classically used as an
analytical tool, to design an efficiently-computable measure of discrepancy
between a target distribution and a set of samples.
A natural measure of discrepancy between a target density $p(x)$ and a
(weighted) sample distribution $q(x)$, where $q(x) = \sum_{i=1}^n w_i
\delta_{x_i}(x)$ for some set of samples $\{x_i\}_{i=1}^n$ and weights
$\{w_i\}_{i=1}^n$, is to consider their largest absolute difference across a
large class of test functions:
\begin{equation}
  d_{\mathcal{H}}(q,p) = \sup_{h \in \mathcal{H}} | \E_q h(X) - \E_p h(X) |
  \label{eq:ipm}
\end{equation}
where $\mathcal{H}$ is the class of test functions.
While expectations with respect to the target density $p$ may be difficult to
compute, by designing $\mathcal{H}$ such that $\E_p h(X) = 0$ for every $h \in
\mathcal{H}$, we need only compute expectations with respect to the sample
distribution $q$.
To meet this requirement, instead of designing $\mathcal{H}$ directly, we can
instead choose $\mathcal{H}$ to be the image of another function class
$\mathcal{G}$ under an operator $\mathcal{T}_p$ that may depend on $p$, so that $\mathcal{H} =
\mathcal{T}_p \mathcal{G}$ and the requirement becomes $\E_p (\mathcal{T}_p g)(x) =
0$ and the discrepancy measure becomes
\begin{equation}
  d_{\mathcal{T}_p \mathcal{G}}(q,p) = \sup_{g \in \mathcal{G}}
  | \E_q (\mathcal{T}_p g)(X) |.
  \label{eq:ipm2}
\end{equation}
Such operators $\mathcal{T}_p$ can be designed using infinitessimal generators from
continuous-time ergodic Markov processes, and \citet{gorham2015measuring}
suggest using the operator
\begin{equation}
  (\mathcal{T}_p g)(x) \triangleq \langle g(x), \nabla \log p(x) \rangle + \langle \nabla, \nabla g(x) \rangle
\end{equation}
which requires computing only the gradient of the target log density.
Furthermore, while the optimization in~\eqref{eq:ipm2} is infinite-dimensional in general
and might have infinitely many smoothness constraints from $\mathcal{G}$,
\citet{gorham2015measuring} shows that for the sample distribution $q$ the test
function $g$ need only be evaluated at the finitely-many sample points $\{x_i\}_{i=1}^n$ and
that only a small number of constraints must be enforced.
This new performance metric does not require assumptions on whether the samples
are generated from an unbiased, stationary Markov chain, and so it may provide
clear ways to compare across a broad spectrum sampling-based approximate
inference algorithms.

Another recently-proposed approach attempts to estimate or bound the KL
divergence from an algorithm's approximate posterior representation to the true
posterior, at least when applied to synthetic data.
This approach, called bidirectional Monte Carlo (BDMC)
\citep{grosse2015sandwiching}, can be applied to measure the performance of
both variational mean field algorithms as well as annealed importance sampling
(AIS) and sequential Monte Carlo (SMC) algorithms.
By rearranging the variational identity~\eqref{eq:kl_div_def}, we can write the
KL divergence $\KL(q \| p)$ from an approximating distribution $q(z, \theta)$ to a target
posterior $p(z, \theta \given \bar{y})$ in terms of the log marginal likelihood $\log p(\bar{y})$ and an expectation with respect to $q(z, \theta)$:
\begin{equation}
  \label{eq:bdmc_inference}
  \KL(q \| p) = \log p(\bar{y}) - \E_{q(z, \theta)} \left[ \log \frac{p(z, \theta \given \bar{y})}{q(z, \theta)}  \right].
\end{equation}
Because the expectation can be readily computed in a mean field setting or
stochastically lower-bounded when using AIS \citep[Section
4.1]{grosse2015sandwiching}, with a stochastic upper bound on $\log p(\bar{y})$
we can use~\eqref{eq:bdmc_inference} to compute a stochastic upper bound on
the KL divergence $\KL(q \| p)$.
BDMC provides a method to compute such stochastic upper bounds on $\log p(\bar{y})$
for synthetic datasets $\bar{y}$, and so may enable new performance metrics
that apply to both sampling-based algorithms as well as variational mean field
algorithms.
However, while MCMC transition operators are used to construct AIS algorithms,
BDMC does not directly apply to evaluating the performance of such transition
operators in standard MCMC inference.

Developing performance metrics and evaluation procedures is critical to making
progress.
As observed in~\citet{grosse2015sandwiching},
\begin{quote}
  In many application areas of machine learning, especially supervised
  learning, benchmark datasets have spurred rapid progress in developing new
  algorithms and clever refinements to existing algorithms.
  [\dots]
  So far, the lack of quantitative performance evaluations in marginal
  likelihood estimation, and in sampling-based inference more generally, has
  left us fumbling around in the dark.
\end{quote}
By developing better ways to measure the performance of these Bayesian
inference algorithms, we will be much better equipped to compare, improve, and
extend them.

\subsection*{Acknowledgements}
This work was funded in part by NSF IIS-1421780 and the Alfred P.\ Sloan Foundation.
E.A. is supported by the Miller Institute for Basic Research in Science, University of California, Berkeley.
M.J. is supported by a fellowship from the Harvard/MIT Joint Grants program.

\backmatter  %

\bibliographystyle{plainnat}
\bibliography{main}

\end{document}